\newcommand\ti[1]{\textit{#1}}
\newcommand\tf[1]{\textbf{#1}}
\def\ie{\textit{i.e.}}
\def\eg{\textit{e.g.}}
\newcommand{\myparagraph}[1]{\vspace{1pt}\noindent{\bf{#1}}~~}
\renewcommand{\paragraph}{%
  \@startsection{paragraph}{4}%
  {\z@}{0em}{-1em}%
  {\normalfont\normalsize\bfseries}%
}
\definecolor{lightgray}{gray}{0.75}
\definecolor{lightergray}{gray}{0.85}
\definecolor{Blue}{RGB}{3, 31, 97}
\definecolor{Blue1}{RGB}{214, 235, 245}
\definecolor{Blue2}{RGB}{235, 245, 250}
\definecolor{Gray}{RGB}{247, 252, 255}
\definecolor{convcolor}{HTML}{412F8A}
\definecolor{resnetcolor}{HTML}{8DA0CB}
\definecolor{vitcolor}{HTML}{fc8e62}
\newcommand{\convcolor}[1]{\textcolor{convcolor}{#1}}
\newcommand{\vitcolor}[1]{\textcolor{vitcolor}{#1}}
\definecolor{aliceblue}{rgb}{0.94, 0.97, 1.0}
\newcommand{\vb}{\vitcolor{$\mathbf{\circ}$\,}}
\newcommand{\cb}{\convcolor{$\bullet$\,}}
\newcommand{\gr}{\rowcolor[gray]{.95}}
\newcommand{\cgr}{\cellcolor[gray]{0.95}}
\newcommand{\h}{{\bf h}}
\newcommand{\y}{{\bf y}}
\newcommand{\x}{{\bf x}}
\newcommand{\f}{{\bf f}}
\newcommand{\rr}{{\bf r}}
\newcommand{\uu}{{\bf u}}
\newcommand{\vv}{{\bf v}}
\newcommand{\w}{{\bf w}}
\newcommand{\mona}{\texttt{MONA}\xspace}
\newcommand{\byol}{\texttt{BYOL}\xspace}
\newcommand{\simclr}{\texttt{SimCLR}\xspace}
\newcommand{\mocoo}{\texttt{MoCov2}\xspace}
\newcommand{\mocoknn}{\texttt{$k$NN-MoCo}\xspace}
\newcommand{\vicreg}{\texttt{VICReg}\xspace}
\newcommand{\isd}{\texttt{ISD}\xspace}
\newcommand{\unet}{\texttt{UNet}\xspace}
\newcommand{\unetF}{\texttt{UNet-F}\xspace}
\newcommand{\unetL}{\texttt{UNet-L}\xspace}
\newcommand{\ours}{\texttt{ARCO}\xspace}
\newcommand{\emm}{\texttt{EM}\xspace}
\newcommand{\cct}{\texttt{CCT}\xspace}
\newcommand{\dan}{\texttt{DAN}\xspace}
\newcommand{\urpc}{\texttt{URPC}\xspace}
\newcommand{\sassnet}{\texttt{SASSNet}\xspace}
\newcommand{\dct}{\texttt{DCT}\xspace}
\newcommand{\dtc}{\texttt{DTC}\xspace}
\newcommand{\ict}{\texttt{ICT}\xspace}
\newcommand{\mt}{\texttt{MT}\xspace}
\newcommand{\uamt}{\texttt{UAMT}\xspace}
\newcommand{\mcnet}{\texttt{MC-Net}\xspace}
\newcommand{\ssnet}{\texttt{SS-Net}\xspace}
\newcommand{\cps}{\texttt{CPS}\xspace}
\newcommand{\gcl}{\texttt{GCL}\xspace}
\newcommand{\action}{\texttt{ACTION}\xspace}
\newcommand{\vnet}{\texttt{VNet}\xspace}
\newcommand{\vnetF}{\texttt{VNet-F}\xspace}
\newcommand{\vnetL}{\texttt{VNet-L}\xspace}
\newcommand{\arcosag}{\texttt{ARCO-SAG}\xspace}
\newcommand{\arcosg}{\texttt{ARCO-SG}\xspace}
\newcommand{\secref}[1]{Sec.\xspace~\ref{#1}}
\title{Rethinking Semi-Supervised Medical Image Segmentation: A Variance-Reduction Perspective}
\author{
\textbf{Chenyu You}$^{1}$ 
\And Weicheng Dai$^{1}$ 
\And Yifei Min$^{1}$ 
\And Fenglin Liu$^{2}$
\And David A. Clifton$^{2}$ 
\And S. Kevin Zhou$^{3}$ 
\And Lawrence Staib$^{1}$ 
\And James S. Duncan$^{1}$  \and \\
$^{1}$Yale University\qquad 
$^{2}$University of Oxford\qquad 
$^{3}$University of Science and Technology of China
}
\begin{document}

\maketitle

\begin{abstract}
For medical image segmentation, contrastive learning is the dominant practice to improve the quality of visual representations by contrasting semantically similar and dissimilar pairs of samples. 
This is enabled by the observation that without accessing ground truth labels, negative examples with truly dissimilar anatomical features, if sampled, can significantly improve the performance. 
In reality, however, these samples may come from similar anatomical regions and the models may struggle to distinguish the minority tail-class samples, making the tail classes more prone to misclassification, both of which typically lead to model collapse. 
In this paper, we propose \texttt{ARCO}, a semi-supervised contrastive learning (CL) framework with stratified group theory for medical image segmentation. 
In particular, we first propose building \texttt{ARCO} through the concept of variance-reduced estimation and show that certain variance-reduction techniques are particularly beneficial in pixel/voxel-level segmentation tasks with extremely limited labels. 
Furthermore, we theoretically prove these sampling techniques are universal in variance reduction.
Finally, we experimentally validate our approaches on eight benchmarks, \ie, five 2D/3D medical and three semantic segmentation datasets, with different label settings, and our methods consistently outperform state-of-the-art semi-supervised methods. 
Additionally, we augment the CL frameworks with these sampling techniques and demonstrate significant gains over previous methods. 
We believe our work is an important step towards semi-supervised medical image segmentation by quantifying the limitation of current self-supervision objectives for accomplishing such challenging safety-critical tasks. \footnote{Codes are available on {{\href{https://github.com/charlesyou999648/ARCO}{here}}}.}

\end{abstract}

\section{Introduction}
\label{section:intro}

Model robustness and label efficiency are two highly desirable perspectives when it comes to building reliable medical segmentation models. 
In the context of medical image analysis, a model is said to be robust if (1) it has a high segmentation quality with only using extremely limited labels in long-tailed medical data; (2) and fast convergence speed \cite{bastani2016measuring,carlini2017towards,singh2018fast}.
The success of traditional supervised learning depends on training deep networks on a large amount of labeled data, but this improved segmentation/model robustness often comes at the cost of annotations and clinical expertise \cite{greenspan2016guest,raghu2019transfusion,you2022class}. Therefore, it is difficult to adopt these models in real-world clinical applications.

Recently, a significant amount of research efforts \cite{bai2017semi,yu2019uncertainty,luo2020semi,wu2022mutual} have resorted to unsupervised or semi-supervised learning techniques for improving the segmentation robustness. 
One of the most effective methods is {contrastive learning} (CL) \cite{hadsell2006dimensionality,oord2018representation,he2020momentum,chen2020simple}. It aims to learn useful representations by contrasting semantically similar ({positive}) and dissimilar ({negative}) pairs of data points sampled from the massive unlabeled data. These methods fit particularly well with real-world clinical scenarios as we assume only access to a large amount of unlabelled data coupled with extremely limited labels. 
However, pixel-level contrastive learning with medical image segmentation is quite impractical since sampling all pixels can be extremely time-consuming and computationally expensive \cite{yan2022sam}. 
Fortunately, recent studies \cite{you2022bootstrapping,you2022mine} provide a remedy by leveraging the popular strategy of bootstrapping, which first actively samples a sparse set of pixel-level representation ({queries}), and then optimize the contrastive objective by pulling them to be close to the class mean averaged across all representations in this class ({positive keys}), and simultaneously pushing apart those representations from other class ({negative keys}). The demonstrated {imbalancedness} and {diversity} across various medical image datasets, as echoed in \cite{li2019overfitting}, show the positive sign of utilizing the massive unlabeled data with extremely limited annotations while maintaining the impressive segmentation performance compared to supervised counterparts. Meanwhile, it can lead to substantial memory/computation reduction when using pixel-level contrastive learning framework for medical image segmentation.

Nevertheless, in practical clinical settings, the deployed machine learning models often ask for strong robustness, which is far beyond the scope of segmentation quality for such challenging safety-critical scenarios. 
This leads to a more challenging requirement, which demands the models to be more robust to the \ti{collapse} problems whereby all representations collapse into constant features \cite{chen2020simple,he2020momentum,zbontar2021barlow} or only span a lower-dimensional subspace \cite{hua2021feature,jing2021understanding,tian2021understanding,you2021momentum}, as one {main} cause of such fragility could be attributed to the non-smooth feature space near samples \cite{jiang2021improving,lai2022smoothed} (\ie, random sampling can result in large feature variations and even annotation information alter). Thus, it is a new perspective: \ti{how to sample most informative pixels/voxels towards improving variance reduction in training semi-supervised contrastive learning models}. 
This inspires us to propose a new hypothesis of semi-supervised CL. Specifically, when directly baking in variance-reduction sampling into semi-supervised CL frameworks for medical image segmentation, the models can further push toward state-of-the-art segmentation robustness and label efficiency.

In this paper, we present \tf{\ours}, a semi-supervised str\tf{A}tifed g\tf{R}oup \tf{Co}ntrastive learning framework with two perspectives (\ie, \textbf{segmentation/model robustness} and \textbf{label efficiency}), and with the aid of variance-reduction estimation, realize two practical solutions -- Stratified Group (SG) and Stratified-Antithetic Group (SAG) -- for selecting the most semantically informative pixels. \ours is a group-based sampling method that builds a set of pixel groups and then proportionally samples from each group with respect to the class distribution. 
The \textbf{main idea} of our approach is via \textit{first partitioning the image with respect to different classes into grids with the same size, and then sampling, within the same grid, pixels semantically close to each other with high probability, with minimal additional memory footprint}.

Subsequently, we show that baking \ours into contrastive pre-training (\ie, \mona \cite{you2022mine}) provides an efficient pixel-wise contrastive learning paradigm to train deep networks that perform well in long-tailed medical data. 
\ours is easy to implement, being built on top of off-the-shelf pixel-level contrastive learning framework \cite{he2020momentum,chen2020simple,grill2020bootstrap,tejankar2021isd,bardes2021vicreg}, and consistently improve overall segmentation quality across all label ratios and datasets (\ie, five 2D/3D medical and three semantic datasets).

Our theoretical analysis shows that, \ours is more label efficient, providing practical means for computing the gradient estimator with improved variance reduction. Empirically, our approach achieves competitive results across eight 2D/3D medical and semantic segmentation benchmarks. Our proposed framework has several theoretical and practical contributions:

\vspace{-0.1in}
\begin{itemize}
    \item We propose \ours, a new CL framework based on stratified group theory to improve the label efficiency and model robustness trade-off in CL for medical image segmentation. We show that incorporating \ours coupled with two special sampling methods, Stratified Group and Stratified-Antithetic Group, into the models provides an efficient learning paradigm to train deep networks that perform well in those long-tail clinical scenarios.
    \item To our best knowledge, we are the \textbf{first work} to show the benefit of certain variance-reduction techniques in CL for medical image segmentation. We demonstrate the unexplored advantage of the refined gradient estimator in handling long-tailed medical image data.
    \item  We conduct extensive experiments to validate the effectiveness of our proposed method using a variety of datasets, network architectures, and different label ratios. For segmentation robustness/accuracy, we show that our proposed method by demonstrating superior segmentation accuracy (up to 11.08\% absolute improvements in Dice).
    For label efficiency, our method trained with different labeled ratios -- consistently achieves competitive performance improvements across all eight 2D/3D medical and semantic segmentation benchmarks.
    \item  Theoretical analysis of \ours shows improved variance reduction with optimization guarantee. We further demonstrate the intriguing property of \ours across the different pixel-level contrastive learning frameworks.
\end{itemize}
\section{Related work}
\label{section:related}

\myparagraph{Medical Image Segmentation.}
Contemporary medical image segmentation approaches typically build upon fully convolutional networks (FCN) \cite{long2015fully} or UNet \cite{ronneberger2015u}, which formulates the task as a dense classification problem. In general, current medical image segmentation methods can be cast into two sets: network design and optimization strategy. One is to optimize segmentation network design for improving feature representations through dilated/atrous/deformable convolutions \cite{chen2017deeplab,chen2018encoder,dai2017deformable}, pyramid pooling \cite{zhao2017pyramid,chen2018drinet,gao2019focusnet}, and attention mechanisms \cite{oktay2018attention,nie2018asdnet,gu2020net}. Most recent works \cite{chen2021transunet,hatamizadeh2021unetr,you2022class} reformulates the task as a sequence-to-sequence prediction task by using the vision transformer (ViT) architecture \cite{vaswani2017attention,dosovitskiy2020image}. The other is to improve optimization strategies, by designing loss function to better address class imbalance \cite{lin2017focal} or refining uncertain pixels from high-frequency regions improving the segmentation quality \cite{xue2019shape,shi2021marginal,lai2022sar,you2022simcvd,you2023implicit}. In contrast, we take a leap further to a more practical clinical scenario by leveraging the massive unlabeled data with extremely limited labels in the learning stage. Moreover, we focus on building \textit{model-agnostic}, label-efficiency framework to improve segmentation quality by providing additional supervision on the most confusing pixels for each class. In this work, we question how medical segmentation models behave under such imbalanced class distributions and whether they can perform well in those challenging scenarios through sampling methods.

\myparagraph{Semi-Supervised Learning (SSL).}
SSL aims to train models with a combination of labeled, weakly-labeled and unlabelled data. In recent years, there has been a surge of work on semi-supervised medical segmentation \cite{yu2019uncertainty,luo2020semi,chaitanya2020contrastive,you2022simcvd,you2022bootstrapping,lai2021semi,lai2021joint,you2022mine,wu2022mutual,wu2022exploring,you2023actionplus}, which makes it hard to present a complete overview here. We therefore only outline some key milestones related to this study. In general, it can be roughly categorized into two groups: (1) Consistency regularization was first proposed by \cite{bachman2014learning}, which aims to impose consistency corresponding to different perturbations into the training, such as consistency regularization \cite{bortsova2019semi,luo2021efficient}, pi-model \cite{sajjadi2016regularization}, and mean-teacher \cite{tarvainen2017mean,reiss2021every}. (2) Self-training was initially proposed in \cite{scudder1965probability}, which aims at using a model’s predictions to obtain noisy pseudo-labels for performance boosts with minimal human labor, such as pseudo-labeling \cite{bai2017semi,chen2021semi}, model uncertainty \cite{yu2019uncertainty,nair2020exploring}, confidence estimation \cite{blundell2015weight,gal2016dropout,kendall2017uncertainties}, and noisy student \cite{xie2020self}. These methods usually lead to competitive performance but fail to prevent \ti{collapse} due to class imbalanceness. In this work, we focus on semi-supervised medical segmentation with extremely limited labels since the medical image data is extremely diverse and often long-tail distributed over anatomical classes. We speculate that a good medical segmentation model is expected to distinguish the minority tail-class samples and hence achieve better performance under additional supervision on hard pixels.

\myparagraph{Contrastive Self-Supervised Learning.}
Self-supervised representation learning is a subclass of unsupervised learning, but with the critical distinction that it incorporates ``inherent'' supervision from the input data itself \cite{dosovitskiy2014discriminative}. The primary aim of self-supervised representation learning is to enable the model to learn the most useful representations from the large amount of unlabelled data for various downstream tasks. Self-supervised learning typically relies on pretext tasks, including predictive \cite{doersch2015unsupervised,doersch2017multi,noroozi2016unsupervised}, contextual \cite{zhang2016colorful,larsson2016learning}, and generative \cite{pathak2016context} or reconstructive \cite{he2022masked} tasks. 

Among them, contrastive learning is considered as a popular approach for self-supervised representation learning by pulling the representations of similar instances closer and representations of dissimilar instances further apart in the learned feature space \cite{hadsell2006dimensionality,oord2018representation,he2020momentum,chen2020simple}. The past five years have seen tremendous progress related to CL in medical image segmentation \cite{chaitanya2020contrastive,you2021momentum,chaitanya2021local,you2022simcvd,you2022bootstrapping,you2022mine,quan2022information}, and it becomes increasingly important to improve representation in label-scarcity scenarios. The key idea in CL \cite{hadsell2006dimensionality,oord2018representation,he2020momentum,chen2020simple} is to learn representations from unlabeled data that obey similarity constraints by pulling augmented views of the same samples closer in a representation space, and pushing apart augmented views of different samples. This is typically achieved by encoding a view of a data into a single global feature vector. 
However, the \ti{global representation} is sufficient for simple tasks like image classification, but does not necessarily achieve decent performance, especially for more challenging dense prediction tasks. 
On the other hand, several works on \ti{dense contrastive learning} \cite{chaitanya2020contrastive,you2021momentum}, aim at providing additional supervision to capturing intrinsic spatial structure and fine-grained anatomical correspondence, while these methods may suffer from \ti{class imbalance} issues. 
Particularly, very recent work \cite{you2022bootstrapping,you2022mine} for the first time demonstrates the imbalancedness phenomenon can be mitigated by performing contrastive learning yet lacking stability.
By contrast, a key motivation of our work is to bridge the connection between model robustness and label efficiency, which we believe is an important and under-explored area. We hence focus on variance-reduced estimation in medical image segmentation, and show that certain variance-reduction techniques can help provide more efficient approaches or alternative solutions for handling \ti{collapse} issues, and improving model robustness in terms of accuracy and stability. To the best of our knowledge, we are the first to provide a theoretical guarantee of robustness by using certain variance-reduction techniques.

\section{Methodology}
\label{section:method}
In this section we set-up our semi-supervised medical segmentation problem, introduce key definitions and notations and formulate an approach to incorporate stratified group theory. Then, we discuss how our proposed ARCO can directly bake in two perspectives into deep neural networks: (1) \textbf{model robustness}, and (2) \textbf{label efficiency}.

\begin{figure*}[t]
\centering
\includegraphics[width=0.85\linewidth]{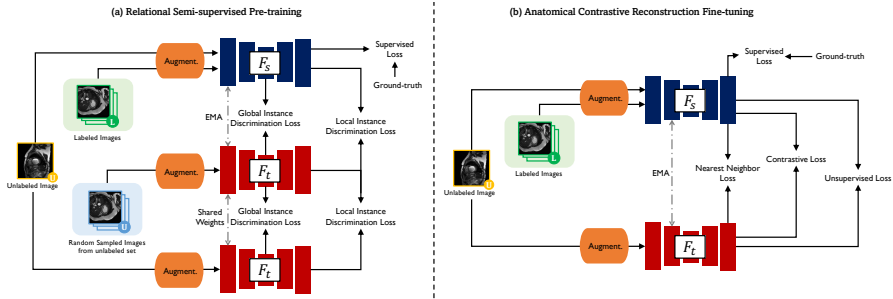}
% \vspace{-5pt}
\caption{\textbf{Pipeline overview.} Our semi-supervised segmentation model $F$ takes a 2D/3D medical image $x$ as input and outputs the segmentation map and the representation map. We leverage a simplification of MONA pipeline \cite{you2022mine} which is composed of two stages: (1) relational semi-supervised pre-training: on labeled data, the student network is trained by the ground-truth labels with the supervised loss $\mathcal{L}_{\text{sup}}$; while on unlabeled data, the student network takes the \textit{augmened} and \textit{mined} embeddings from the EMA teacher for instance discrimination $\mathcal{L}_{\text{inst}}$ in the global and local manner, (2) anatomical contrastive reconstruction fine-tuning: on labeled data, the student network is trained by the ground-truth labels with the supervised loss $\mathcal{L}_{\text{sup}}$; while on unlabeled data, the student network takes the representation maps and pseudo labels from the EMA teacher to give more importance to tail class $\mathcal{L}_{\text{contrast}}$, exploit the inter-instance relationship $\mathcal{L}_{\text{nn}}$, and compute unsupervised loss $\mathcal{L}_{\text{unsup}}$. See Appendix \ref{section:appendix-loss-landscapes} for details of the visualization loss landscapes.
}
\label{fig:model}
\vspace{-10pt}
\end{figure*}

\subsection{Preliminaries and setup}
\label{subsec:overview}
\myparagraph{Problem Definition.}
In this paper, we consider the multi-class medical image segmentation problem. Specifically, given a medical image dataset $(\mathcal{X}, \mathcal{Y})$, we wish to automatically learn a segmentator, which assigns each pixel to their corresponding $K$-class segmentation labels. Let us denote $\x$ as the input sample of the {\em{student}} and {\em{teacher}} networks $F(\cdot) $\,\footnote{The {student} and {teacher} networks both adopt the 2D \unet \cite{ronneberger2015u} or 3D \vnet \cite{milletari2016v} architectures.} , consisting of an encoder ${E}$ and a decoder ${D}$, and $F$ is parameterized by weights $\theta_{s}$ and $\theta_{t}$. 

\myparagraph{Background.}
Contrastive learning aims to learn effective representations by pulling semantically close neighbors together and pushing apart other non-neighbors \cite{hadsell2006dimensionality}. 
Among various popular contrastive learning frameworks, \mona \cite{you2022mine} is easy-to-implement while yielding the state-of-the-art performance for semi-supervised medical image segmentation so far.
The main idea of \mona is to discover diverse views (\ie, augmented/mined views) whose anatomical feature responses are {\em{homogeneous}} within the same or different occurrences of the {\em{same class type}}, while at the same time being {\em{distinctive}} for {\textit{different class types}}. 

Hereinafter, we are interested in showing that certain variance-reduction techniques coupled with CL frameworks are particularly beneficial in long-tail pixel/voxel-level segmentation tasks with extremely limited labels. 
We hence build our \ours as a simplification of the \mona pipeline \cite{you2022mine}, without additional complex augmentation strategies, for deriving the \textbf{model robustness} and \textbf{label efficiency} proprieties of our medical segmentation model.
Figure~\ref{fig:model} overviews the high-level workflow of the proposed \ours framework. Training \ours involves a two-phase training procedure: (1) relational semi-supervised pre-training, and (2) anatomical contrastive fine-tuning. To make the
discussion self-contained, we defer the full details of \ours to the appendix \ref{section:appendix-framework}.

\subsection{Motivation and Challenges}
\label{subsec:sample}
Intuitively, the contrastive loss will learn generalizable, balanced and diverse representations for downstream medical segmentation tasks if the positive and negative pairs correspond to the desired latent anatomical classes \cite{chaitanya2020contrastive,you2022bootstrapping,you2022mine}. 
Yet, one critical constraint in real-world clinical scenarios is severe \textit{memory bottlenecks} \cite{yan2022sam,you2022bootstrapping}. 
To address this issue, current pixel-level CL approaches \cite{you2022bootstrapping,you2022mine} for high-resolution medical images devise their aggregation rules by \textit{unitary simulators}, \ie, \textit{Na\"ive Sampling} (NS), that determines the empirical estimate from all available pixels.
Despite the blessing of large learning capacity, such aggregation rules are \textit{unreliable} ``black boxes''. 
It is never well understood which rule existing CL models should use for improved \textbf{model robustness} and \textbf{label efficiency}; nor is it easy to compare different models and assess the model performance. 
Moreover, unitary simulators, especially na\"ive sampling, often incur high variances and fail to identify semantically similar pixels \cite{jiang2021improving}, limiting CL stability.
As demonstrated in Figure~\ref{fig:vis_loss_landscape}, regions of similar anatomical features should be grouped together in the original medical images, resulting in corresponding plateau regions in the visualization of the loss landscape. This is consistent with the observations uncovered by the recent empirical findings \cite{zheng2019learning,chen2021intriguing}.

\begin{wrapfigure}{L}{0.57\textwidth}
% \begin{figure}[t]
\centering
\includegraphics[width=0.95\linewidth]{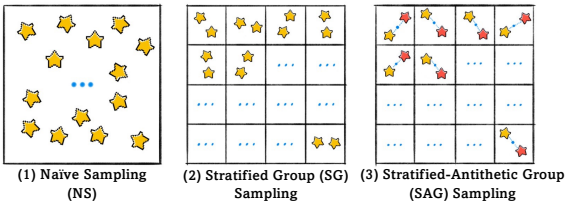}
\vspace{-5pt}
\caption{Overview of three sampling methods. (1)~Na\"ive Sampling, (2)~Stratified Group Sampling, and (3)~Stratified-Antithetic Group Sampling.}
\label{fig:vis_sampling}
\vspace{-5pt}
% \end{figure}
\end{wrapfigure}

If we take a unified mathematical perspective, the execution of simulation can be represented either through an {\em{adaptive}} rule, or by a {\em{unitary}} simulation. 
To tackle the two critical issues, we look back at adaptive rules. 
We hence propose two straightforward yet effective techniques -- Stratified Group (SG) and Stratified-Antithetic Group (SAG) -- to mitigate the undesirable high-variance limitation, and turn to the following idea of sampling the most representative pixels from groups of semantically similar pixels. 
In particular, our proposed solution is based on stratified group simulation to adaptively characterize anatomical regions found on different medical images.
This characterization is succinct, and regions with the same anatomical properties within different medical images are identifiable.
\textit{In practice, we first partition the image with respect to different classes into grids with the same size, and then sampling, within the same grid, the pixels semantically close to each other with high probability, with minimal additional memory footprint (Figure~\ref{fig:vis_sampling}).}

In what follows, we will theoretically demonstrate the important properties of such techniques (\ie, SG and SAG), especially in reduced variance and unbiasedness. Here the reduced variance implies more robust gradient estimates in the backpropagation, and leads to faster and stabler training in theory, as corroborated by our experiments (Section~\ref{section:experiments}). Empirically, we will demonstrate many practical benefits of reduced variances including improved model robustness, \ie, faster convergence and better segmentation quality, through mitigating the \textit{collapse} issue.

\subsection{Stratified Group Sampling}

To be consistent with the previous notation, we denote an arbitrary image from the given medical image dataset as $\mathbf{x} \in \mathcal{X}$, and $\mathcal{P}$ as the set of pixels. 
For arbitrary function $h: \mathcal{X}\times \mathcal{P} \to \mathbb{R}$, we define the aggregation function $H$\footnote{The pixel-level contrastive loss $\mathcal{L}_{\textnormal{contrast}}$ is an example of an aggregation function (up to normalizing constant) according to Eqn. \eqref{loss:contrastive}.} as:
\begin{equation}
    H(\mathbf{x}) = \frac{1}{|\mathcal{P}|} \sum_{p \in \mathcal{P}} h(\mathbf{x};p).
\end{equation}\label{eq: F}
% \end{align} 
As a large cardinality of $\mathcal{P}$ prevents efficient direct computation of $H$, an immediate approach is to compute $H(\mathbf{x})$ by first sampling a subset of pixels $\mathcal{D} \!\subseteq\! \mathcal{P}$ according to certain sampling strategy, and then computing $\hat{H}(\mathbf{x}; \mathcal{D} ) \!=\! \sum_{p \in \mathcal{D}} h (\mathbf{x}; p) /|\mathcal{D}| $. 
SG sampling achieves this by first decomposing the pixels into $M$ disjoint groups $\mathcal{P}_m$ satisfying $\cup_{m=1}^M \mathcal{P}_m = \mathcal{P}$, and then sampling $\mathcal{D}_m \subseteq \mathcal{P}_m$ so that $\mathcal{D} = \cup_{m=1}^M \mathcal{D}_m$.  
The SG sampling can then be written as:
\begin{equation*}
    \hat{H}_{\textnormal{SG}} (\mathbf{x}; \mathcal{D}) = \frac{1}{M} \sum_{m=1}^M \frac{1}{|\mathcal{D}_m|}\sum_{p \in \mathcal{D}_m} h(\mathbf{x}; p). 
\end{equation*} SAG, built upon SG, adopts a similar form, except for an additionally enforced symmetry on $\mathcal{D}_m$: $\forall \ m, \ \exists \ c_m \in \mathcal{P}_m,$ such that for any $p \in \mathcal{D}_m$,
\begin{equation*}
    c_m -p = p' - c_m, \ \textnormal{for some } \ p' \in \mathcal{D}_m.
\end{equation*}
Here $c_m$ denotes the center of the group $\mathcal{P}_{m}$\footnote{The choice of $c_m$ is flexible. For example, if the convex hull of the pixels in $\mathcal{P}_m$ form a circle, then $c_m$ can be taken as the geometric center.}.
The implementation of SG and SAG involves two steps: (1) to create groups $\{\mathcal{P}_m\}_{m=1}^M$, and (2) to generate each $\mathcal{D}_m \subseteq \mathcal{P}_m$.
For the latter, we consider independent sampling within and between groups, \ie, $\mathcal{D}_m \indep \mathcal{D}_{m'}$ for $m \neq m'$, and $p\indep p'$ $\forall \ p$, $p' \in \mathcal{D}_m$, where the variance of SG sampling is as follows.
\begin{lemma}\label{thm: lemma var sg}
    Suppose in SG sampling, for each $m$, $\mathcal{D}_m$ is sampled from $\mathcal{P}_m$ with sampling variance $\sigma_m^2$ and sample size $|\mathcal{D}_m| = n_m$. Then the variance satisfies $\Var[ \hat{H}_{\textnormal{SG}} ] = \sum_{m=1}^M \sigma_m^2 n_m/ n$, and SAG with the same sample size satisfies $\Var[ \hat{H}_{\textnormal{SAG}} ] \leq 2  \Var[ \hat{H}_{\textnormal{SG}} ]$.
\end{lemma}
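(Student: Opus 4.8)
The plan is to view $\hat{H}_{\textnormal{SG}}$ as a weighted average of independent per-stratum estimators, so that the total variance splits cleanly across groups, and then to treat $\hat{H}_{\textnormal{SAG}}$ as the same estimator reorganized into antithetic pairs whose only new ingredient is an intra-pair covariance. First I would set $\hat{H}_m = \frac{1}{n_m}\sum_{p\in\mathcal{D}_m} h(\mathbf{x};p)$, so that $\hat{H}_{\textnormal{SG}} = \frac{1}{M}\sum_{m=1}^M \hat{H}_m$. The hypothesis $\mathcal{D}_m \indep \mathcal{D}_{m'}$ for $m\neq m'$ forces every cross-group covariance to vanish, hence $\Var[\hat{H}_{\textnormal{SG}}] = \sum_{m=1}^M \Var[\tfrac{1}{M}\hat{H}_m]$ with no between-stratum term --- this is precisely the structural gain of stratification. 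Within a group the draws are i.i.d.\ with single-draw sampling variance $\sigma_m^2$, so $\Var[\hat{H}_m]$ is $\sigma_m^2$ deflated by the sample size $n_m$; collecting the $M$ group contributions and writing $n=\sum_m n_m$ assembles the stated form $\sum_{m=1}^M \sigma_m^2 n_m/n$.

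For SAG I would use the enforced symmetry directly: the condition $c_m - p = p' - c_m$ lets me partition each $\mathcal{D}_m$ into $n_m/2$ antithetic pairs $\{p_j,p_j'\}$ with $p_j + p_j' = 2c_m$, and rewrite the within-group estimator as an average of pair means $g_j = \tfrac{1}{2}\big(h(\mathbf{x};p_j)+h(\mathbf{x};p_j')\big)$. Because reflection about $c_m$ is distribution-preserving, both members of a pair share the per-draw variance $\sigma_m^2$, and expanding gives $\Var[g_j] = \tfrac{1}{2}\sigma_m^2 + \tfrac{1}{2}\op{Cov}\big(h(\mathbf{x};p_j),h(\mathbf{x};p_j')\big)$.

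The remaining step bounds the covariance. Cauchy--Schwarz yields $\op{Cov}\big(h(\mathbf{x};p_j),h(\mathbf{x};p_j')\big) \leq \sigma_m^2$, so $\Var[g_j] \leq \sigma_m^2 = 2\cdot\tfrac{1}{2}\sigma_m^2$; that is, an antithetic pair mean has at most twice the variance of an independent two-sample mean. Since pairs are independent within a group and groups are independent across $m$, these per-pair inequalities add, and at matched sample size $\Var[\hat{H}_{\textnormal{SAG}}] \leq 2\,\Var[\hat{H}_{\textnormal{SG}}]$ follows term by term.

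The main obstacle is the covariance control for SAG. Antithetic sampling is meant to induce a negative intra-pair correlation and thereby reduce variance, but the lemma assumes nothing about the monotonicity or geometry of $h$, so the covariance may be positive and the only safe worst-case estimate is Cauchy--Schwarz --- which is exactly what produces the constant $2$ and cannot be sharpened without extra structure on $h$. A secondary point to handle carefully is verifying that reflection about $c_m$ preserves the single-draw variance (so that both pair members genuinely contribute $\sigma_m^2$), together with the minor bookkeeping of odd $n_m$ and the exact weighting convention that fixes the constant in the $\hat{H}_{\textnormal{SG}}$ formula.
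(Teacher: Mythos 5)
Your proof follows essentially the same route as the paper's: for SG you use independence across and within groups to reduce the variance to per-group contributions $\sigma_m^2/n_m$, and for SAG you split each group into antithetic pairs, argue by the reflection symmetry that both pair members carry variance $\sigma_m^2$, and bound the intra-pair covariance by $\sigma_m^2$ via Cauchy--Schwarz, which is precisely the paper's ``correlation at most $1$'' step that yields the factor $2$. The only wrinkle is that your assembled SG formula (using the $1/M$ weighting from the definition, giving $\tfrac{1}{M^2}\sum_m \sigma_m^2/n_m$) coincides with the lemma's stated $\sum_m \sigma_m^2 n_m/n$ only up to the same normalization ambiguity that already exists between the lemma statement and the paper's own proof (which derives $\sum_m n_m\sigma_m^2/n^2$ with pooled weights $n_m/n$), so this is not a gap attributable to your argument.
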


\begin{figure*}[t]
\centering
\includegraphics[width=0.8\linewidth]{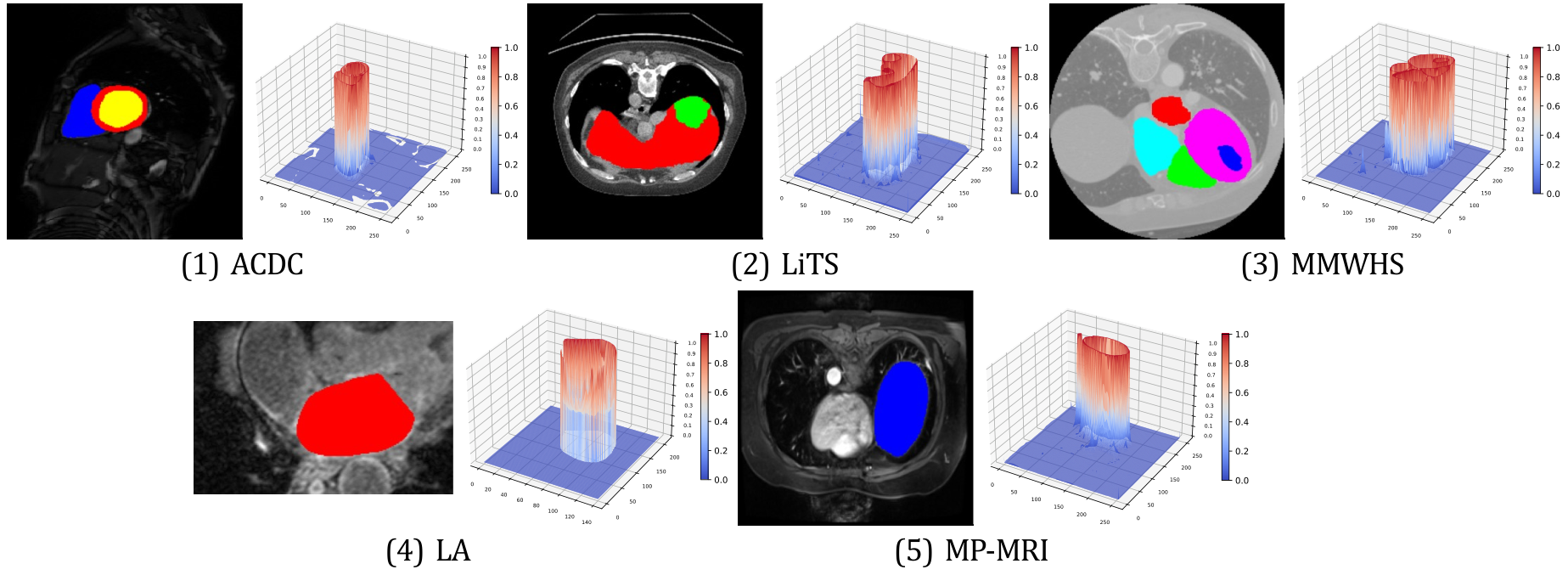}
\vspace{-5pt}
\caption{Loss landscape visualization of pixel-wise contrastive loss $\mathcal{L}_{\text{contrast}}$ with \arcosg. Loss plots are generated with same original images randomly chosen from ACDC \cite{bernard2018deep}, LiTS \cite{bilic2019liver}, MMWHS \cite{zhuang2016multi}, LA \cite{la}, and MP-MRI, respectively. $z$-axis denotes the loss value at each pixel. For each example of the five benchmarks, the left subplot indicates that similar anatomical features are grouped together in the original medical images, as shown by different anatomical regions in different colors.} 
\label{fig:vis_loss_landscape}
% \vspace{-10pt}
\end{figure*}

To ensure the unbiasedness property, we adopt the setting of proportional group sizes \cite{cochran1977sampling,asmussen2007stochastic}, \ie, $|\mathcal{D}_m| \propto |\mathcal{P}_m|$ for all $m$.  
It turns out that such setting also enjoys the variance-reduction property. 

\begin{theorem}[Unbiasedness and Variance of SG]\label{thm: sg variance}
    SG with proportional group sizes is unbiased, and has a variance no larger than that of NS. 
    That is:
    $\mathbb{E}[ \hat{H}_{\textnormal{SG}} (\mathbf{x}) ] = H(\mathbf{x})$, and 
    \begin{equation*} 
    % \begin{split}
        \Var[\hat{H}_\textnormal{SG}] = \Var[\hat{H}_{\textnormal{NS}}] - \frac{1}{n} \sum_{m=1}^{M} \left( \mathbb{E}_{p \stackrel{\textnormal{unif.}}{\sim} \mathcal{P}_m}[h(\mathbf{x};p)] - \mathbb{E}_{p\stackrel{\textnormal{unif.}}{\sim} \mathcal{P}}[h(\mathbf{x};p)] \right)^2.
    % \end{split}
    \end{equation*}
\end{theorem}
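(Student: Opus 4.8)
The plan is to recognize $\hat H_{\textnormal{SG}}$ as a stratified Monte-Carlo estimator and to reduce the claim to the classical analysis-of-variance (within/between group) decomposition of the population variance of $h(\mathbf{x};\cdot)$ over $\mathcal{P}$. I would prove the two assertions---unbiasedness and the variance identity---in turn, and then read off the inequality $\Var[\hat H_{\textnormal{SG}}]\le\Var[\hat H_{\textnormal{NS}}]$ from the nonnegativity of a sum of squares.

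For unbiasedness, I would first observe that, since $\mathcal{D}_m$ is drawn uniformly from $\mathcal{P}_m$, the inner average $\tfrac{1}{|\mathcal{D}_m|}\sum_{p\in\mathcal{D}_m}h(\mathbf{x};p)$ is an unbiased estimator of the group mean $\bar h_m:=\mathbb{E}_{p\stackrel{\textnormal{unif.}}{\sim}\mathcal{P}_m}[h(\mathbf{x};p)]$. By linearity of expectation, $\mathbb{E}[\hat H_{\textnormal{SG}}]$ is then a weighted average of the $\bar h_m$. The crucial step is to invoke the proportional-group-size assumption $|\mathcal{D}_m|\propto|\mathcal{P}_m|$, which pins the effective weights to $|\mathcal{P}_m|/|\mathcal{P}|$; substituting these and collapsing the sum recovers $\sum_m \tfrac{|\mathcal{P}_m|}{|\mathcal{P}|}\bar h_m=\tfrac{1}{|\mathcal{P}|}\sum_{p\in\mathcal{P}}h(\mathbf{x};p)=H(\mathbf{x})$.

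For the variance identity, I would begin from Lemma~\ref{thm: lemma var sg}, which already expresses $\Var[\hat H_{\textnormal{SG}}]$ through the per-group sampling variances, and write $\Var[\hat H_{\textnormal{NS}}]=S^2/n$ with $S^2$ the population variance of $h(\mathbf{x};\cdot)$ over $\mathcal{P}$ and $H:=H(\mathbf{x})$ its uniform mean. The engine of the proof is the decomposition $S^2=\sum_{m}W_m S_m^2+\sum_{m}W_m(\bar h_m-H)^2$, where $W_m=|\mathcal{P}_m|/|\mathcal{P}|$ and $S_m^2$ is the within-group variance; this follows by adding and subtracting $\bar h_m$ inside each squared deviation, the cross term vanishing because $\sum_{p\in\mathcal{P}_m}(h(\mathbf{x};p)-\bar h_m)=0$. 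Under proportional allocation the within-group sum is exactly $n\,\Var[\hat H_{\textnormal{SG}}]$, so rearranging isolates the between-group sum of squares $\tfrac1n\sum_m W_m(\bar h_m-H)^2$ as the gap between the two variances, and nonnegativity of this term yields the inequality.

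The main obstacle I anticipate is the bookkeeping of the proportional allocation: one must check that the group weights emerging in the within-group term of the ANOVA decomposition coincide exactly with the sampling weights implicit in $\hat H_{\textnormal{SG}}$ (which, for equal-size grids, reduce to a common factor that recovers the clean $\tfrac1n\sum_m(\bar h_m-H)^2$ form), and that the finite-population sampling variances $\sigma_m^2$ are treated consistently for the SG and NS estimators so that the within-group contributions cancel exactly rather than only to leading order. Establishing that the cross term vanishes---equivalently, that each group sub-average is centered at its own mean $\bar h_m$---is what makes the decomposition clean and is the real crux of the argument.
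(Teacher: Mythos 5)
Your proposal is correct and follows essentially the same route as the paper: unbiasedness via linearity of expectation under proportional allocation, and the variance identity via the within/between-group decomposition of the population variance of $h(\mathbf{x};\cdot)$ over $\mathcal{P}$ (the paper phrases this as the law of total variance conditioned on group membership, which is exactly your ANOVA decomposition, and likewise reads off the inequality from nonnegativity of the between-group term). If anything you are more careful than the paper on one point: you retain the group weights $W_m=|\mathcal{P}_m|/|\mathcal{P}|$ in the between-group sum, whereas the paper's stated identity (and final proof line) drops them, which only matches its own total-variance computation under an equal-size-group convention.
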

The last term is the intra-group variance, which captures the discrepancy between the pixel groups $\{\mathcal{P}_m\}_{m=1}^M$. 
Theorem~\ref{thm: sg variance} guarantees that the variance of SG is no larger than that of NS, and SG has strictly less variance than NS as long as all the pixel groups do not share an equal mean over $h(\mathbf{x}; p)$, which is almost-sure in medical images (See Figure~\ref{fig:vis_loss_landscape}). 
For SAG, Lemma~\ref{thm: lemma var sg} guarantees its variance is of the same magnitude as that of SG, and at worst differs by a factor of 2.   
Since the pixel/voxel-level contrastive loss $\mathcal{L}_{\textnormal{contrast}}$ is an aggregation function over pixels by definition \eqref{loss:contrastive}, it benefits from the variance-deduction property of SG/SAG. 
In Section~\ref{sec:main results}, we will see that such variance reduction allows \ours to achieve better segmentation accuracy, especially along the boundary of the anatomical regions (Figure~\ref{fig:vis_acdc}).

\myparagraph{Training Convergence.} 
We further demonstrate the benefit of variance reduction estimation in terms of training stability. 
Specifically, leveraging techniques from standard optimization theory~\cite{bertsekas2000gradient, bottou2018optimization, allen2018make}, we can show that variance-reduced gradient estimator through SG sampling leads to faster training convergence.
Suppose we have a loss function $\mathcal{L} (\theta)$ with the model parameter $\theta$, and use stochastic gradient descent (SGD) as the optimizer.
A gradient estimate $g(\theta) \approx \nabla \mathcal{L}(\theta)$ is computed at each iteration. 
It is well-known that the convergence of SGD depends on the quality of the estimate $g(\theta)$ \cite{bertsekas2000gradient}. 
Specifically, we make the common assumptions that the loss function is smooth and the gradient estimate has bounded variance (More details in Appendix~\ref{sec:convergence detail}), which can be formulated as below:
    \begin{equation*} 
    % \begin{split}
    \left\|\nabla \mathcal{L}(\theta) - \nabla \mathcal{L} (\theta') \right\|_2  \leq L (\left\| \theta - \theta'\right\|_2), \quad 
    \mathbb{E}\left[\left\| g(\theta)- \nabla \mathcal{L} (\theta) \right\|^2 \right]  \leq \sigma^2_{g}. 
    % \end{split}
    \end{equation*}
Under these two assumptions, the average expected gradient norm of the learned parameter satisfies the following:
    \begin{equation*} 
    \begin{split}
    \frac{1}{T} \sum_{t=1}^T \mathbb{E}\left[ \left\| \nabla \mathcal{L}(\theta_t) \right\|_2^2 \right] \leq C \left( \frac{1}{T} + \frac{\sigma_g}{\sqrt{T}} \right). 
    \end{split}
    \end{equation*}
For general non-convex loss function, the above implies convergence to some local minimum. 
Importantly, the slow rate ($\sigma_g/\sqrt{T}$) depends on standard deviation $\sigma_g$, indicating a faster convergence can indeed be achieved with a more accurate gradient estimate. 
This indicates our proposed sampling techniques demonstrate universality in variance reduction, as they can be applied to a wide range of scenarios that involve pixel/voxel-level sampling (See Appendix~\ref{sec:convergence detail}). 
In Figure~\ref{fig:convergence}, we observe that using SG enables faster loss decay with smaller error bar, showing that it outperforms other methods in both convergence speed and stability. See Section~\ref{subsec:ablation} and Appendix~\ref{sec:convergence detail} for more details.

\section{Experiments}
\label{section:experiments}

In this section, we present experimental results to validate our proposed methods across various datasets and different label ratios in Appendix \ref{sec:dataset}. We use 2D \unet \cite{ronneberger2015u} or 3D \vnet \cite{milletari2016v} as our backbones. Further implementation details are discussed in Appendix \ref{section:implementation}. \footnote{Codes are available on {{\href{https://github.com/charlesyou999648/ARCO}{here}}}.}

\begin{table*}[t]
	\begin{center}
	\caption{Quantitative comparisons (DSC{[}\%{]}~$\uparrow$~/~ASD{[}voxel{]}~$\downarrow$) across the three labeled ratio settings (1\%, 5\%, 10\%) on the ACDC benchmark. All experiments are conducted as \cite{ronneberger2015u,vu2019advent,ouali2020semi,zhang2017deep,qiao2018deep,yu2019uncertainty,li2020shape,luo2020semi,luo2021efficient,verma2019interpolation,chen2021semi,chaitanya2020contrastive,tarvainen2017mean,wu2021semi,wu2022exploring,you2022bootstrapping,you2022mine} in the identical setting for fair comparisons. Best and second-best results are coloured \textcolor{blue}{\textbf{blue}} and \textcolor{red}{red}, respectively. \unetF (fully-supervided) and \unetL (semi-supervided) are considered as the upper bound and the lower bound for the performance comparison. Note that, Right Ventricle $\rightarrow$ RV, Myocardium $\rightarrow$ Myo, Left Ventricle $\rightarrow$ LV. We adopt the identical data augmentation (\ie, random rotation, random cropping, and horizontal flipping) for fair comparisons.}
	\label{table:acdc_main}
    \begin{adjustbox}{width=0.95\linewidth}
	\begin{tabular}{cccccccccccccc}
		\toprule
    	& \multicolumn{12}{c}{\textbf{ACDC}} \\
	    \cmidrule(r){2-13}
		& & \multicolumn{3}{c}{1 Labeled (1\%)} & & \multicolumn{3}{c}{3 Labeled (5\%)} & & \multicolumn{3}{c}{7 Labeled (10\%)} \\
        \cmidrule(r){3-5} \cmidrule(r){7-9} \cmidrule(r){11-13}
        % \vspace{0.15in}
		{Method}
		            & {Average}
		            & {RV}  
		            & {Myo}
		            & {LV}
		            & {Average}
		            & {RV}  
		            & {Myo}
		            & {LV}
		            & {Average}
		            & {RV}  
		            & {Myo}
		            & {LV}
		            \\ \midrule
		\unetF \cite{ronneberger2015u}
		            & {91.5}/{0.996} 
		            & {90.5}/{0.606}
                    & {88.8}/{0.941}
                    & {94.4}/{1.44}
		            & {91.5}/{0.996} 
		            & {90.5}/{0.606}
                    & {88.8}/{0.941}
                    & {94.4}/{1.44}
		            & {91.5}/{0.996} 
		            & {90.5}/{0.606}
                    & {88.8}/{0.941}
                    & {94.4}/{1.44}
                    \\ 
		\unetL    
                    & {40.3}/{22.7}
                    & {29.0}/{25.4}
                    & {43.6}/{15.3}
                    & {48.2}/{27.5}
                    & {51.7}/{13.1} 
		            & {36.9}/{30.1}
                    & {54.9}/{4.27}
                    & {63.4}/{5.11}
                    & {79.5}/{2.73}
		            & {65.9}/{0.892}
                    & {82.9}/{2.70}
                    & {89.6}/{4.60}
                    \\\midrule 
        \emm \cite{vu2019advent} 
                    & {43.1}/{18.1}
                    & {38.7}/{23.1}
                    & {42.0}/{12.0}
                    & {48.7}/{19.4}
                    & {59.8}/{5.64}
                    & {44.2}/{11.1}
                    & {63.2}/{3.23}
                    & {71.9}/{2.57}
                    & {75.7}/{2.73}
                    & 68.0/0.892
                    & 76.5/2.70
                    & 82.7/4.60
                    \\ 
	    \cct \cite{ouali2020semi} 
                    & {48.6}/{19.2}
                    & {38.7}/{28.0}
                    & {49.2}/{14.8}
                    & {57.9}/{17.0}
                    & {59.1}/{10.1}
                    & {44.6}/{19.8}
                    & {63.2}/{6.04}
                    & {69.4}/{4.32}
                    & {75.9}/{3.60}
                    & {67.2}/{2.90}
                    & {77.5}/{3.32}
                    & {82.9}/{4.59}
                    \\ 
	    \dan \cite{zhang2017deep} 
                    & {48.9}/{17.5}
                    & {45.4}/{19.7}
                    & {41.0}/{8.88}
                    & {60.4}/{23.8}
                    & {56.4}/{15.1}
                    & {47.1}/{21.7}
                    & {58.1}/{11.6}
                    & {63.9}/{11.9}
                    & {76.5}/{3.01}
                    & {75.7}/{2.61}
                    & {73.3}/{3.11}
                    & {80.5}/{3.31}
                    \\
	    \urpc \cite{luo2021efficient} 
                    & {43.0}/{21.1}
                    & {38.6}/{22.8}
                    & {41.7}/{14.4}
                    & {48.6}/{26.0}
                    & {58.9}/{8.14}
                    & {50.1}/{12.6}
                    & {60.8}/{4.10}
                    & {65.8}/{7.71}
                    & {83.1}/{1.68}
                    & {77.0}/{0.742}
                    & {82.2}/\textcolor{blue}{\tf{0.505}}
                    & {90.1}/{3.79}
                    \\ 
	    \dtc \cite{luo2020semi} 
                    & {51.7}/{17.5}
                    & {39.3}/{23.3}
                    & {54.6}/{9.12}
                    & {61.3}/{20.2}
                    & {56.9}/{7.59}
                    & {35.1}/{9.17}
                    & {62.9}/{6.01}
                    & {72.7}/{7.59}
                    & {84.3}/{4.04}
                    & {83.8}/{3.72}
                    & {83.5}/{4.63}
                    & {85.6}/{3.77}
                    \\ 
	    \dct \cite{qiao2018deep} 
                    & {49.7}/{16.4}
                    & {42.4}/{20.4}
                    & {48.8}/{10.6}
                    & {57.9}/{18.2}
                    & {58.5}/{10.8}
                    & {41.2}/{21.4}
                    & {63.9}/{5.01}
                    & {70.5}/{6.05}
                    & {78.1}/{2.64}
                    & {70.7}/{1.75}
                    & {77.7}/{2.90}
                    & {85.8}/{3.26}
                    \\ 
	    \ict \cite{verma2019interpolation} 
                    & {42.1}/{21.0}
                    & {36.5}/{18.5}
                    & {43.4}/{11.1}
                    & {46.3}/{33.5}
                    & {59.0}/{6.59}
                    & {48.8}/{11.4}
                    & {61.4}/{4.59}
                    & {66.6}/{3.83}
                    & {80.6}/{1.64}
                    & {75.1}/{0.898}
                    & {80.2}/{1.53}
                    & {86.6}/{2.48}
                    \\ 
	    \mt \cite{tarvainen2017mean} 
                    & {42.9}/{15.1}
                    & {32.5}/{21.9}
                    & {46.2}/{8.99}
                    & {50.1}/{14.7}
                    & {58.3}/{11.2}
                    & {39.0}/{21.5}
                    & {58.7}/{7.47}
                    & {77.3}/{4.72}
                    & {80.1}/{2.33}
                    & {75.2}/{1.22}
                    & {79.2}/{2.32}
                    & {86.0}/{3.45}
                    \\ 
	    \uamt \cite{yu2019uncertainty} 
                    & {36.9}/{15.2}
                    & {32.5}/{21.9}
                    & {46.2}/{8.99}
                    & {50.1}/{14.7}
                    & {48.3}/{9.14}
                    & {37.6}/{18.9}
                    & {50.1}/{4.27}
                    & {57.3}/{4.17}
                    & {81.8}/{4.04}
                    & {79.9}/{2.73}
                    & {80.1}/{3.32}
                    & {85.4}/{6.07}
                    \\ 
	    \sassnet \cite{li2020shape} 
                    & {42.6}/{24.8}
                    & {29.8}/{34.7}
                    & {45.4}/{13.3}
                    & {52.5}/{26.6}
                    & {57.8}/{6.36}
                    & {47.9}/{11.7}
                    & {59.7}/{4.51}
                    & {65.8}/{2.87}
                    & {84.7}/{1.83}
                    & {81.8}/{0.769}
                    & {82.9}/{1.73}
                    & {89.4}/{2.99}
                    \\
	    \cps \cite{chen2021semi} 
                    & {51.5}/{15.3}
                    & {41.1}/{17.7}
                    & {52.0}/{7.27}
                    & {61.4}/{21.0}
                    & {61.0}/{2.92}
                    & {43.8}/{2.95}
                    & {64.5}/{2.84}
                    & {74.8}/{2.95}
                    & {78.8}/{3.41}
                    & {74.0}/{1.95}
                    & {78.1}/{3.11}
                    & {84.5}/{5.18}
                    \\ 
	    \gcl \cite{chaitanya2020contrastive} 
                    & {59.7}/{14.3}
                    & {49.5}/{25.3}
                    & {60.9}/{6.28}
                    & {68.8}/{11.5}
                    & {70.6}/{2.24}
                    & {56.5}/{1.99}
                    & {70.7}/{1.67}
                    & {84.8}/{3.05}
                    & {87.0}/\textcolor{blue}{\tf{0.751}}
                    & {86.9}/\textcolor{blue}{\tf{0.584}}
                    & {81.8}/{0.820}
                    & {92.5}/\textcolor{red}{0.849}
                    \\ 
	    \mcnet \cite{wu2021semi} 
                    & {53.4}/{17.17}
                    & {43.0}/{25.3}
                    & {51.2}/{7.41}
                    & {60.8}/{15.11}
                    & {62.8}/{2.59}
                    & {52.7}/{5.14}
                    & {62.6}/{0.807}
                    & {73.1}/{1.81}
                    & {86.5}/{1.89}
                    & {85.1}/{0.745}
                    & {84.0}/{2.12}
                    & {90.3}/{2.81}
                    \\ 
	    \ssnet \cite{wu2022exploring} 
                    & {63.4}/{2.94}
                    & {64.7}/{3.32}
                    & {57.0}/{1.81}
                    & {68.4}/{3.70}
                    & {65.8}/{2.28}
                    & {57.5}/{3.91}
                    & {65.7}/{2.02}
                    & {74.2}/{0.896}
                    & {86.8}/{1.40}
                    & {85.4}/{1.19}
                    & {84.3}/{1.44}
                    & {90.6}/{1.57}
                    \\
        \action \cite{you2022bootstrapping} 
                    & {81.0}/{3.45}
                    & {76.9}/{3.09}
                    & {78.4}/{2.07}
                    & {87.5}/{5.17}
                    & {86.6}/{1.20}
                    & {85.2}/{0.734}
                    & {84.7}/{0.909}
                    & {89.8}/{1.97}
                    & {87.2}/{1.47}
                    & {86.1}/{0.976}
                    & {85.7}/{1.11}
                    & {89.7}/{2.33}
                    \\ 
        \mona \cite{you2022mine} 
                    & {82.6}/{1.43}
                    & {80.2}/{1.57}
                    & {79.9}/{1.10}
                    & \textcolor{red}{87.8}/\textcolor{red}{1.43}
                    & {86.9}/{1.07}
                    & {84.7}/{1.01}
                    & \textcolor{red}{85.4}/\textcolor{red}{0.731}
                    & \textcolor{red}{90.6}/{1.48}
                    & {87.7}/{1.33}
                    & {86.9}/{0.687}
                    & {85.7}/{1.70}
                    & {90.5}/{2.14}
                    \\ 
        \gr \cb \arcosag (ours)
                    & \textcolor{red}{84.9}/\textcolor{red}{{1.47}}
                    & \textcolor{red}{81.7}/\textcolor{red}{{1.98}}
                    & \textcolor{red}{81.9}/\textcolor{red}{{0.903}}
                    & \textcolor{blue}{\tf{90.9}}/{{1.52}}
                    & \textcolor{red}{87.1}/\textcolor{red}{{0.848}}
                    & \textcolor{red}{85.6}/\textcolor{blue}{\tf{0.414}}
                    & {85.1}/{{0.930}}
                    & \textcolor{red}{90.6}/\textcolor{blue}{\tf{1.20}}
                    & \textcolor{red}{88.5}/{{1.40}}
                    & \textcolor{red}{87.1}/\textcolor{red}{{0.635}}
                    & \textcolor{red}{86.2}/{{1.04}}
                    & \textcolor{red}{92.2}/{{2.53}}
                    \\ 
        \gr \vb \arcosg (ours)
                    & \textcolor{blue}{\tf{85.5}}/\textcolor{blue}{\tf{0.947}}
                    & \textcolor{blue}{\tf{81.8}}/\textcolor{blue}{\tf{1.19}}
                    & \textcolor{blue}{\tf{83.8}}/\textcolor{blue}{\tf{0.801}}
                    & \textcolor{blue}{\tf{90.9}}/\textcolor{blue}{\tf{0.853}}
                    & \textcolor{blue}{\tf{88.7}}/\textcolor{blue}{\tf{0.841}}
                    & \textcolor{blue}{\tf{88.2}}/\textcolor{red}{{0.618}}
                    & \textcolor{blue}{\tf{85.9}}/\textcolor{blue}{\tf{0.673}}
                    & \textcolor{blue}{\tf{91.9}}/\textcolor{red}{\tf{1.23}}
                    & \textcolor{blue}{\tf{89.4}}/\textcolor{red}{0.776}
                    & \textcolor{blue}{\tf{90.2}}/{{0.701}}
                    & \textcolor{blue}{\tf{86.5}}/\textcolor{red}{{0.787}}
                    & \textcolor{blue}{\tf{91.6}}/\textcolor{blue}{\tf{0.839}}
                    \\ 
		   \bottomrule
	\end{tabular}
    \end{adjustbox}
    \end{center}
    \vspace{-10pt}
\end{table*}

\subsection{Main Results}\label{sec:main results}
In this subsection, we first examine whether our proposed \ours can generalize well across various datasets and label ratios. Then, we investigate to what extent \ours coupled with two samplers can realize two essential properties: (1) \textbf{model robustness}; and (2) \textbf{label efficiency}. The quantitative results for all the compared methods on eight popular datasets: (1) Medical image segmentation tasks: three 2D benchmarks (\ie, ACDC \cite{bernard2018deep}, LiTS \cite{bilic2019liver}, MMWHS \cite{zhuang2016multi}), two 3D benchmarks (\ie, LA \cite{la}, in-house MP-MRI) under various label ratios (\ie, 1\%, 5\%, 10\%) are collected in Table \ref{table:acdc_main}, Table \ref{table:lits_main} (Appendix \ref{section:appendix-results-lits}), Table \ref{table:mmwhs_main} (Appendix~\ref{section:appendix-results-mmwhs}), and Table \ref{table:3d_main} (Appendix~\ref{section:appendix-la-results} and \ref{section:appendix-jhu-results}), respectively; (2) General computer vision tasks: To further validate the effectiveness, we experiment on three popular segmentation benchmarks (\ie, Cityscapes \cite{cordts2016cityscapes}, Pascal VOC 2012 \cite{everingham2015pascal}, indoor scene segmentation dataset -- SUN RGB-D \cite{song2015sun}) in the semi-supervised full-label settings. We follow the identical setting \cite{liu2021bootstrapping} to sample labelled images to ensure that every class appears sufficiently in our three datasets, (\ie, CityScapes, Pascal VOC, and SUN RGB-D). The results are collected in Appendix Section \ref{section:appendix-semantic-results}. Several consistent observations can be drawn from these extensive evaluations with eighteen segmentation networks.

\begin{figure*}[t]
\centering
\includegraphics[width=0.85\linewidth]{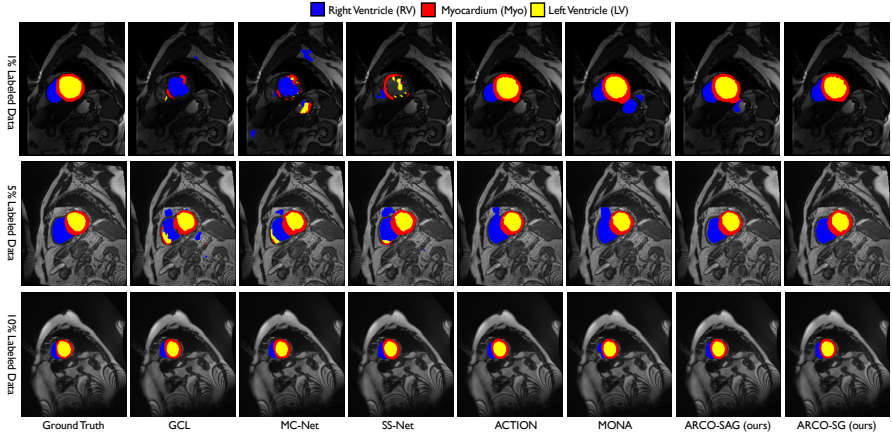}
\vspace{-5pt}
\caption{Visual results on ACDC with 1\%, 5\%, 10\% label ratios. \ours consistently produce more accurate predictions on anatomical regions and boundaries compared to all other SSL methods.} 
\label{fig:vis_acdc}
% \vspace{-10pt}
\end{figure*}

\ding{182} \tf{Superior Performance Across Datasets.} We demonstrate that \ours achieves superior performance across all datasets and label ratios. In specific, our experiments consider three 2D benchmarks (\ie, ACDC \cite{bernard2018deep}, LiTS \cite{bilic2019liver}, MMWHS \cite{zhuang2016multi}), two 3D benchmarks (\ie, LA \cite{la}, in-house MP-MRI), and different label ratios (\ie, 1\%, 5\%, 10\%). 
As shown in Table \ref{table:acdc_main}, Table \ref{table:lits_main} (Appendix \ref{section:appendix-results-lits}), Table \ref{table:mmwhs_main} (Appendix~\ref{section:appendix-results-mmwhs}), and Table \ref{table:3d_main} (Appendix~\ref{section:appendix-la-results} and \ref{section:appendix-jhu-results}), we observe that our methods consistently outperform all the compared SSL-based methods by a considerable margin across all datasets and label ratios, which validates the superior performance of our proposed methods in both segmentation accuracy and label efficiency.
For example, compared to the second-best \mona, our \arcosg under \{1\%, 5\%, 10\%\} label ratios achieves \{2.9\%$\uparrow$, 1.8\%$\uparrow$, 1.7\%$\uparrow$\}, \{3.3\%$\uparrow$, 1.8\%$\uparrow$, 1.8\%$\uparrow$\}, \{4.1\%$\uparrow$, 2.0\%$\uparrow$, 1.8\%$\uparrow$\}, \{0.3\%$\uparrow$, 0.3\%$\uparrow$, 0.5\%$\uparrow$\}, \{2.2\%$\uparrow$, 0.8\%$\uparrow$, 0.4\%$\uparrow$\} in average Dice across ACDC, LiTS, and MMWHS, MP-MRI, and LA, respectively. 
Our \arcosag achieves \{84.9\%, 87.1\%, 88.5\%\}, \{64.1\%, 67.3\%, 69.4\%\}, \{86.1\%, 88.6\%, 89.3\%\}, \{91.5\%, 92.5\%, 92.6\%\}, \{73.2\%, 86.9\%, 89.1\%\} in averaged Dice across ACDC, LiTS, MMWHS, MP-MRI, and LA.
These results indicate that our methods can generalize to different clinical scenarios and label ratios.

\ding{183} \tf{Across Label Ratios and Robustified Methods.}
To further validate the label efficiency property of our \ours, we evaluate our \arcosg and \arcosag with limited labeled training data available (\eg, 1\% and 5\%). As demonstrated in Table \ref{table:acdc_main}, Table \ref{table:lits_main} (Appendix \ref{section:appendix-results-lits}), Table \ref{table:mmwhs_main} (Appendix~\ref{section:appendix-results-mmwhs}), and Table \ref{table:3d_main} (Appendix~\ref{section:appendix-la-results} and \ref{section:appendix-jhu-results}), our models under 5\% label ratios surpass all the compared SSL methods by a significant performance margin. For example, compared to \mona, we observe our methods to push the best segmentation accuracy higher by 0.3\%\! $\sim$\! 2.0\% in Dice on ACDC, LiTS, and MMWHS, MP-MRI, and LA, respectively. For example, the best segmentation accuracy on MMWHS rises from 87.3\% to 89.3\%. This suggests that our SSL-based approaches -- without compromising the best achievable segmentation results -- robustly improve performance using very limited labels, and further lead to a much-improved trade-off between SSL schemes and supervised learning schemes by avoiding a large amount of labeled data.

Similar to our results under 5\% label ratio, our \arcosg and \arcosag trained with 1\% label ratio demonstrate sufficient performance boost compared to \mona by an especially significant margin, with up to 0.3\%\! $\sim$\! 4.1\% relative improvement in Dice. Taking the extremely limited label ratio (\ie,  1\%) as an indicator: (1) on 3D LA, \arcosg achieves 2.2$\%$ higher average Dice, and 6.64 lower average ASD than the second best \mona; (2) on LiTS, \arcosg achieves 3.3$\%$ higher average Dice, and 4.7 lower average ASD than the second best \mona; and (3) considering the more challenging clinical scenarios (\ie, 7 anatomical classes), \arcosg achieves 5.1\% higher average Dice, and 2.26 lower average ASD than the second-best \mona on MMWHS. It highlights the superior performance of \ours is not only from improved label efficiency but also credits to the superior model robustness.

\ding{184} \tf{Qualitative Results.} We provide qualitative illustrations of ACDC, LiTS, MMWHS, LA, MP-MRI in Figure \ref{fig:vis_acdc}, Figure \ref{fig:vis_lits} (Appendix \ref{section:appendix-results-lits}), Figure \ref{fig:vis_mm} (Appendix \ref{section:appendix-results-mmwhs}), Figure \ref{fig:vis_la} (Appendix \ref{section:appendix-la-results}), and Figure \ref{fig:vis_jhu} (Appendix \ref{section:appendix-jhu-results}), respectively. As shown in Figure \ref{fig:vis_acdc}, we observe that \ours appears a significant advantage, where the edges and the boundaries of different anatomical regions are clearly more pronounced, such as RV and Myo regions. More interestingly, we found that in Figure \ref{fig:vis_lits}, though all methods may confuse ambiguous tail-class samples such as small lesions, \arcosg and \arcosag still produces consistently sharp and accurate object boundaries compared to the current approaches. We also observe similar results for \ours on MMWHS in Figure \ref{fig:vis_mm} (Appendix \ref{section:appendix-results-mmwhs}), where our approaches can regularize the segmentation results to be smooth and shape-consistent. Our findings suggest that \ours improves model robustness mainly through distinguishing the minority tail-class samples.

\begin{table}
\vspace{-5mm}
\parbox{.48\linewidth}{
\caption{Ablation on component aspect: (1) {{tailness}}/$\mathcal{L}_{\text{contrast}}$; (2) {{diversity}}/$\mathcal{L}_{\text{nn}}$.} 
\vspace{-5pt}
\label{table:component_ablation}
\resizebox{\linewidth}{!}
{
\begin{tabular}{@{\hskip 1mm}lcccccc@{\hskip 1mm}}
\toprule
Method  && {DSC{[}\%{]}$\uparrow$} && ASD{[}voxel{]$\downarrow$}\\ 
\midrule
Vanilla  && 49.3 && 7.11 \\
\midrule
\gr \cb \arcosag (ours)  &&  {84.9}  &&  {1.47} \\
\quad w/o tailness-SAG  &&  {60.9} &&  4.11 \\
\quad w/o diversity  &&  78.6 &&  1.68 \\
\midrule
\gr \vb \arcosg (ours)  &&  {85.5}  &&  {0.947} \\
\quad w/o tailness-SG  && {60.9} &&  4.11 \\
\quad w/o diversity  && 79.3 &&  1.26 \\
\midrule
\gr \texttt{ARCO-NS} &&  {82.6}  &&  {1.43} \\
\quad w/o tailness-NS  && {60.9} &&  4.11 \\
\quad w/o diversity  &&  75.2 &&  2.07 \\
\bottomrule
\end{tabular}
}
}
\hfill
\parbox{.45\linewidth}{
\vspace{-0.01mm}
\caption{Ablation on loss function: (1) {unsupervised loss}/$\mathcal{L}_{\text{unsup}}$; (2) {{global instance discrimination loss}}/$\mathcal{L^{\text{global}}_\text{inst}}$; and (3) {{local instance discrimination loss}}/$\mathcal{L^{\text{local}}_\text{inst}}$.} 
\vspace{-5pt}
\label{table:ablation_loss}
\resizebox{\linewidth}{!}
{
\begin{tabular}{@{\hskip 1mm}l|ccccc@{\hskip 1mm}}
\toprule
Method  && {DSC{[}\%{]}$\uparrow$} && ASD{[}voxel{]$\downarrow$}\\ 
\midrule
\gr \cb \arcosag (ours)  &&  {84.9}  &&  {1.47} \\
\quad w/o $\mathcal{L}_{\text{unsup}}$  &&  81.2 &&  1.87 \\
\quad w/o $\mathcal{L^{\text{global}}_\text{inst}}$  &&  84.0 &&   2.64 \\
\quad w/o $\mathcal{L^{\text{local}}_\text{inst}}$  && 83.3 &&  2.63 \\
\midrule
\gr \vb \arcosg (ours)  &&  {85.5}  &&  {0.947} \\
\quad w/o $\mathcal{L}_{\text{unsup}}$  &&  81.9 &&  1.04 \\
\quad w/o $\mathcal{L^{\text{global}}_\text{inst}}$  &&  84.1 &&  2.10 \\
\quad w/o $\mathcal{L^{\text{local}}_\text{inst}}$  &&  83.8 &&  2.11 \\
\bottomrule
\end{tabular}
}
}
% \vspace{-3mm}
\end{table}

\subsection{Ablation Studies}
\label{subsec:ablation}
In this subsection, we conduct various ablations to better understand our design choices. For all the ablation experiments the models are trained on ACDC with 1\% labeled ratio. 

\begin{figure}[t]
    \centering
   \includegraphics[width=0.5\linewidth]{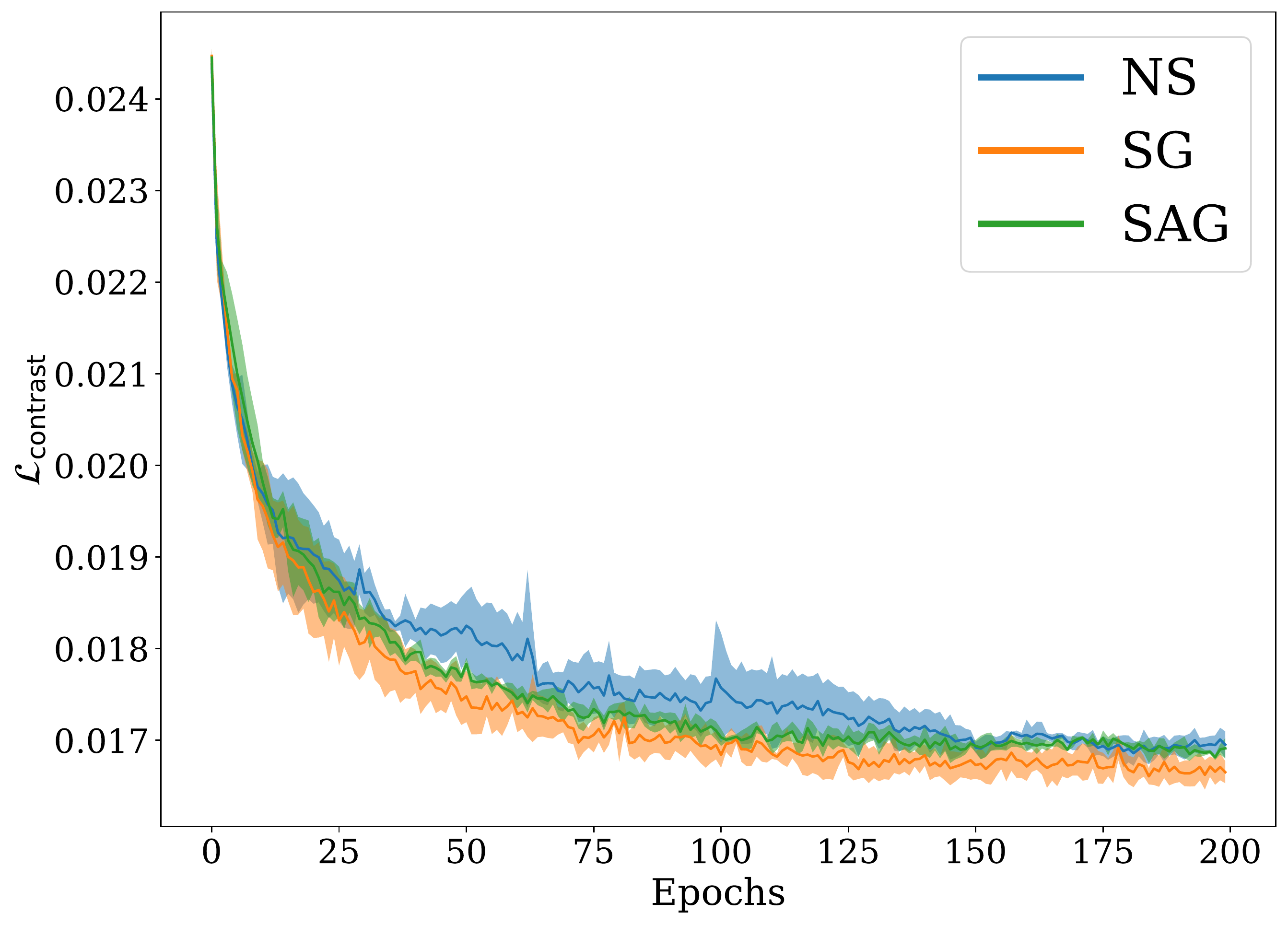}
   \vspace{-10pt}
   \caption{Visualization of training trajectories given by $\mathcal{L_\text{contrast}}$ vs. epochs on ACDC under 10\% label ratio. The proposed \ours is compared in terms of different sampling methods: Na\"ive Sampling (NS), Stratified Group (SG) Sampling, and Stratified-Antithetic Group (SAG) Sampling. The solid line and shaded area of each sampling method denote the mean and variance of test accuracies over 3 independent trials. Clearly, we observe SG sampling consistently outperforms the other sampling methods in convergence speed and training stability. SAG slightly outperforms NS.}
   \label{fig:convergence}
   % \vspace{-6pt}
\end{figure}

\myparagraph{Importance of Loss Components.}
We analyse several critical components of our method in the final performance and conduct comprehensive ablation studies on the ACDC dataset with a 1\% label ratio to validate their necessity. \underline{\textbf{First}}, at the heart of our method is the combination of three losses: $\mathcal{L_\text{contrast}}$ for \emph{tailness}, and $\mathcal{L_\text{nn}}$ for \emph{diversity} (See Section \ref{section:method} for more details). We deactivate each component and then evaluate the resulting models, as shown in Table \ref{table:component_ablation}. As is shown, global contrastive loss $\mathcal{L}_{\text{contrast}}$ and nearest neighbor loss $\mathcal{L}_{\text{nn}}$ can boost performance by a large margin. Moreover, incorporating our methods (\ie, SG and SAG) consistently achieve superior model robustness gains compared to na\"ive sampling (\ie, NS), both of which suggests the importance of these components.
\underline{\textbf{Second}}, we compare the impact of different loss function (\eg, {unsupervised loss}/$\mathcal{L}_{\text{unsup}}$, {{global instance discrimination loss}}/$\mathcal{L^{\text{global}}_\text{inst}}$, and {{local instance discrimination loss}}/$\mathcal{L^{\text{local}}_\text{inst}}$). As shown in Table \ref{table:ablation_loss}, using each loss function consistently achieves significant performance gains, which demonstrates positive contribution of each component to performance gains.

\myparagraph{Importance of Augmentation Components.}
We further investigate the impact of data augmentation on the ACDC dataset with a 1\% label ratio.
As is shown in Table \ref{table:data_augmentation_ablation} (in Appendix), employing each data augmentation strategy consistently results in notable performance improvements, underscoring the efficacy of these data augmentations. 
Of note, \arcosag/\arcosg using all three augmentations and \arcosag/\arcosg using no augmentation are considered as the upper bound and the lower bound for the performance comparison.
These results show that each augmentation strategy systematically boosts performance by a large margin, which suggests improved robustness. 

\myparagraph{Stability Analyses.}
In Figure \ref{fig:convergence}, we show the stability analysis results on \ours over different sampling methods. As we can see, our SG and SAG sampling facilitates convergence during the training. More importantly, SG sampling has stable performance with small standard derivations, which aligns with our hypothesis that our proposed sampling method can be viewed as the form of variance regularization. Moreover, loss landscape visualization of different loss functions (Figure \ref{fig:vis_loss_landscape}) reveals similar conclusions.

\myparagraph{Extra Study.}
More investigations about (1) generalization across label ratios and frameworks in Appendix \ref{section:appendix-discussion-arch}; (2) final checkpoint loss landscapes in Appendix \ref{section:appendix-loss-landscapes}; (3) ablation on different training settings are in Appendix \ref{section:appendix-discussion-hyper}.

\section{Conclusion and Discussion of Broader Impact}

In this paper, we propose \textbf{\ours}, a new semi-supervised contrastive learning framework for improved model robustness and label efficiency in medical image segmentation. Specifically, we propose two practical solutions via stratified group theory that correct for the variance introduced by the common sampling practice, and achieve significant performance benefits. Our theoretical findings indicate Stratified Group and Stratified-Antithetic Group Sampling provide practical means for improving variance reduction. It presents a curated and easily adaptable training toolkit for training deep networks that generalize well beyond training data in those long-tail clinical scenarios. Moreover, our sampling techniques can provide pragmatic solutions for enhancing variance reduction, thereby fostering their application in a wide array of real-world applications and sectors. These include but are not limited to 3D rendering, augmented reality, virtual reality, trajectory prediction, and autonomous driving.
We hope this study could be a stepping stone towards by quantifying the limitation of current self-supervision objectives for accomplishing such challenging safety-critical tasks.

{\bf{Broader Impact.}}\label{s:impact}
Defending machine learning models against inevitable variance will have the great potential to build more reliable and trustworthy clinical AI.
Our findings show that the stratified group theory can provide practical means for improving variance reduction, leading to realistic deployments in a large variety of real-world clinical applications. Besides, we should address the challenges of fairness or privacy in the medical image analysis domain as our future research direction.

{
\small
\bibliographystyle{unsrt}
\bibliography{bibtex}
}

%%%%%%%%%%%%%%%%%%%%%%%%%%%%%%%%%%%%%%%%%%%%%%%%%%%%%%%%%%%%
\clearpage

\appendix
\clearpage
\newpage
\clearpage
\begin{center}
{\bf {\Large Appendix to\\
Rethinking Semi-Supervised Medical Image Segmentation: \\ A Variance-Reduction Perspective \\} }
\end{center}
\appendix

\begin{description}
    \item[\secref{sec:theory}] provides additional theoretical analysis.
    \item[\secref{sec:dataset}] provides additional details on the training datasets.
    \item[\secref{section:implementation}] provides additional details on the implementation.
    \item[\secref{section:comparison}] provides details on 2D/3D methods in comparision.
    \item[\secref{section:appendix-framework}] provides additional framework details.
    \item[\secref{section:appendix-model}] provides model architecture details.
    \item[\secref{section:appendix-results-lits}] provides more experimental results on the 2D LiTS \cite{bilic2019liver} dataset.
    \item[\secref{section:appendix-results-mmwhs}] provides more experimental results on the 2D MMWHS \cite{zhuang2016multi} dataset.
    \item[\secref{section:appendix-la-results}] provides more experimental results on the 3D LA \cite{la} dataset.
    \item[\secref{section:appendix-jhu-results}] provides more experimental results on the 3D MP-MRI dataset.
    \item[\secref{section:appendix-semantic-results}] provides more experimental results on three semantic segmentation benchmarks (Cityscapes \cite{cordts2016cityscapes}, Pascal VOC 2012 \cite{everingham2015pascal}, SUN RGB-D \cite{song2015sun}).
    \item[\secref{section:appendix-discussion-arch}] forms a better understanding of generalization across label ratios and frameworks. 
    \item[\secref{section:appendix-loss-landscapes}] provides final checkpoint loss landscapes. 
    \item[\secref{section:appendix-discussion-hyper}] provides additional details on the hyperparameter selection.
\end{description}

%%%%%%%%% BODY TEXT
\section{Theoretical Results Details}
\label{sec:theory}

\subsection{Proof of Lemma~\ref{thm: lemma var sg}}
\begin{proof}
    By definition of the estimate $\hat{H}$, since the sampling from each group is independent, the variance can be written as 
    \begin{align*}
        \Var[\hat{H}_\textnormal{SG}] & = \frac{1}{n^2} \sum_{m=1}^M \sum_{p \in \mathcal{D}_m} \sigma_m^2. 
    \end{align*}
    Since $|\mathcal{D}_m| = n_m$, it follows that 
    \begin{align*}
        \Var[\hat{H}_\textnormal{SG}] & = \sum_{m=1}^M \frac{n_m^2}{n^2} \frac{\sigma_m^2}{n_m} .
    \end{align*}Note that  $\sigma_m^2$ is the variance of the $h (\mathbf{x}; p)$ where $p \sim \mathcal{P}_m$ uniformly. 

    \textbf{Analysis of SAG.}
    We now consider SAG. Recall that, compared to SG, SAG first samples $n_m/2$ pixels from $\mathcal{P}_m$\footnote{For simplicity, we ignore the rounding issue.} from each $m=1,\cdots, M$ in  the same way as SG. SAG then deterministically picks the rest $n_m/2$ pixels to be the reflection pixels of the first half (see Figure~\ref{fig:vis_sampling}\textcolor{red}{(3)}). 
    Let's choose arbitrary group $m$ and denote by $p$ and $p'$ the sampled pixel and its reflection from that group. Note that the variance of $p$ satisfies $\Var_{\textnormal{SAG}}[h(\mathbf{x}; p)] = \sigma_m^2$ since $p$ is sampled in a same way as in the case of SG. Observe that, by symmetry, the variance of $p$ and $p'$ should be the same $\Var_{\textnormal{SAG}}[h(\mathbf{x}; p)] = \Var_{\textnormal{SAG}}[h(\mathbf{x}; p)] = \sigma_m^2$. It follows that:
    \begin{align*}
        & \Var_{\textnormal{SAG}}[h(\mathbf{x}; p)+h(\mathbf{x}; p')] 
        \\ & = \Var_{\textnormal{SAG}}[h(\mathbf{x}; p)] + \Var_{\textnormal{SAG}}[h(\mathbf{x}; p')] + 2 \textnormal{Cov}_{\textnormal{SAG}} [h(\mathbf{x}; p),h(\mathbf{x}; p')]
        \\ & \leq 4 \sigma_m^2, 
    \end{align*} where the second step holds because the correlation between $h(\mathbf{x};p)$ and $h(\mathbf{x};p')$ is at most 1. 
    It follows that:
    \begin{align*}
        \Var[\hat{H}_{\textnormal{SAG}}] & = \sum_{m=1}^M \frac{|\mathcal{P}_m|^2}{|\mathcal{P}|^2} \frac{1}{|\mathcal{D}_m|^2} \Var\left[\sum_{p \in \mathcal{D}_m} h(\mathbf{x}; p)\right]
        \\ & = \sum_{m=1}^M \frac{n_m^2}{n^2} \frac{1}{n_m^2} \sum_{p, p'} \Var[h(\mathbf{x}; p)+h(\mathbf{x}; p')] \\ & \leq \sum_{m=1}^M \frac{n_m^2}{n^2} \frac{1}{n_m^2} \frac{n_m}{2} 4 \sigma_m^2
        \\ & = \sum_{m=1}^M \frac{2n_m}{n^2} \sigma_m^2 
        \\ & = 2 \Var[\hat{H}_{\textnormal{SG}}]. 
    \end{align*}
\end{proof}

\subsection{Proof of Theorem~\ref{thm: sg variance}}
\begin{proof}[Proof of Theorem~\ref{thm: sg variance}]
    Denote $w_m = n_m / n$, where $n_m = |\mathcal{D}_m|$. The unbiasedness is straightforward: by proportional sampling, the probability that an arbitrary pixel in group $\mathcal{P}_m$ being chosen is equal to $|\mathcal{D}_m|/|\mathcal{P}_m|$. 
    Since $|\mathcal{D}_m| = |\mathcal{P}_m|$, the probability of being chosen is equal across all pixels, and hence an arbitrary pixel $p\in \mathcal{D} = \cup_m \mathcal{D}_m$ is equally likely to be any pixel in the population $\mathcal{P}$.
    As a result, we have
    \begin{align*}
        \mathbb{E}[\hat{H}_{\textnormal{SG}}(\mathbf{x}; \mathcal{D})] & = \mathbb{E}\left[ \sum_{p\in\cD} h(\mathbf{x}; p) / |\mathcal{\cD}| \right] = \mathbb{E}_p[h(\mathbf{x}; p)] 
        \\ & = H(\mathbf{x}),
    \end{align*}where the first equality is by the construction of $\hat{H}_{\text{SG}}(\mathbf{x}; \mathcal{D})$, the second equality is by symmetry of $p \in \mathcal{D}$, and the last inequality is by the above conclusion that $p$ is equally possible to be any pixel in $\mathcal{P}$. This finishes the proof that $\hat{H}_{\textnormal{SG}}$ is unbiased. 

    The variance of $\hat{H}_{\textnormal{SG}}$ is equal to 
    \begin{align*}
        \Var[\hat{H}_\textnormal{SG}] & = \sum_{m=1}^M \frac{n_m^2}{n^2} \frac{\sigma_m^2}{n_m} ,
    \end{align*}where $\sigma_m^2$ is the variance of the $f (\mathbf{x}; p)$ when $p \sim \mathcal{D}_m$ uniformly. 
    Under the case of proportional sampling, i.e., $n_m \propto |\mathcal{D}_m|$ for all $m = 1, \cdots\!, M$ (for simplicity we assume such $n_m$'s are all integers), the variance becomes 
    \begin{align*}
        \Var[\hat{H}_{\textnormal{SG}}] & = \sum_{m=1}^M \frac{(w_m \cdot n)^2 }{n^2} \frac{\sigma_m^2}{w_m\cdot n}  = \sum_{m=1}^M w_m \frac{\sigma_m^2}{ n} .
    \end{align*}

    We now consider NS (i.e. na\"ive sampling). By definition, the random sampling samples pixels uniformly from the set $\mathcal{P}$ of all pixels. 
    Since $n$ is the total number of pixels sampled, 
    the variance of the $\hat{H}_{\textnormal{NS}}$ is given as $\Var[\hat{H}_{NS}] = \sigma^2 / n$ where $\sigma^2$ is the sampling variance of $h(\mathbf{x}; p)$ given $p\sim \mathcal{P}$ uniformly. 
    To determine $\sigma^2$, we apply a variance decomposition trick via conditioning. 
    Specifically, the sampling from NS is a two-step process: (1) sample the group index $m$ from $[M]$, (2) sample the pixel uniformly from $\mathcal{P}_m$. 
    Applying the law of total variance, we have 
    \begin{align*}
        & \Var_{p \stackrel{\textnormal{unif.}}{\sim} \mathcal{P}}[h(\mathbf{x}; p)] 
        \\ & = \mathbb{E}[\Var[h(\mathbf{x}; p)]| p \in \mathcal{P}_m] + \Var[\mathbb{E}[h(\mathbf{x};p)| p\in \mathcal{P}_m]] 
        \\ & = \sum_{m=1}^{M} w_m \sigma_m^2 + \sum_{m=1}^{M} \left( \mathbb{E}_{p \stackrel{\textnormal{unif.}}{\sim} \mathcal{P}_m}[h(\mathbf{x};p)] - \mathbb{E}_{p\stackrel{\textnormal{unif.}}{\sim} \mathcal{P}}[h(\mathbf{x};p)] \right)^2.
    \end{align*}
    As a result, we conclude that, for any image $\mathbf{x}$, the sampling variance of SG estimate and NS estimate satisfies
    \begin{align*}
        \Var[\hat{H}_{\textnormal{NS}}] & = \Var[\hat{H}_{\textnormal{SG}}] 
        \\ &  + \frac{1}{n} \sum_{m=1}^{M} \left( \mathbb{E}_{p \stackrel{\textnormal{unif.}}{\sim} \mathcal{P}_m}[h(\mathbf{x};p)] - \mathbb{E}_{p\stackrel{\textnormal{unif.}}{\sim} \mathcal{P}}[h(\mathbf{x};p)] \right)^2, 
    \end{align*}which finishes the proof. 
\end{proof}

\subsection{Further Details on Convergence}\label{sec:convergence detail}

We first make the following assumptions about the loss function and the gradient estimate in our learning problem. 
\begin{assumption}[Smoothness]\label{assump: smoothness}
    The objective function $\mathcal{L}(\cdot)$ is $L$-smooth, i.e., $\mathcal{L}$ is differentiable and $\left\|\nabla \mathcal{L}(\theta) - \nabla \mathcal{L} (\theta') \right\|_2 \leq L (\left\| \theta - \theta'\right\|_2)$ for all $\theta$, $\theta'$ in the domain of $\nabla\mathcal{L}$.
\end{assumption}

\begin{assumption}[Bounded variance]\label{assump: bounded variance}
    There exists some $\sigma_g^2 >0$ such that for all $\theta$ in the domain,  
    \begin{align*}
        \mathbb{E}\left[\left\| g(\theta)- \nabla \mathcal{L} (\theta) \right\|^2 \right] \leq \sigma^2_{g} . 
    \end{align*}
\end{assumption}

These assumptions are standard in optimization theory and in various settings~\cite{bertsekas2000gradient, bottou2018optimization, allen2018make}. 
The first assumption allows us to characterize the loss landscape of our learning problem, and the second assumption ensures the gradient estimate does not deviate largely from the truth, which is essential for the training to converge.  
With these two assumptions, it is guaranteed that 
\begin{align*}
    \frac{1}{T} \sum_{t=1}^T \mathbb{E}\left[ \left\| \nabla \mathcal{L}(\theta) \right\|_2^2 \right] \leq C \left( \frac{1}{T} + \frac{\sigma_g}{\sqrt{T}} \right). 
\end{align*}

\begin{proposition}\label{thm: loss convergence}
Suppose the step sizes in SGD are taken to be $\min\left\{ 1/L, {\alpha}/({\sigma_g \sqrt{T}}) \right\}$ where $T$ is the number of steps.
Then under Assumption~\ref{assump: smoothness} and \ref{assump: bounded variance}, there exists some constant $C$ depending on $\mathcal{L}$, $\alpha$, and $L$, such that
\begin{align*}
    \frac{1}{T} \sum_{t=1}^T \mathbb{E}\left[ \left\| \nabla \mathcal{L}(\theta_t) \right\|_2^2 \right] \leq C \left( \frac{1}{T} + \frac{\sigma_g}{\sqrt{T}} \right). 
\end{align*}
\end{proposition}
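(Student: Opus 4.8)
The plan is to establish the standard SGD convergence guarantee for smooth non-convex objectives under the bounded-variance assumption. First I would fix the step size $\eta = \min\{1/L, \alpha/(\sigma_g\sqrt{T})\}$ as prescribed and invoke the $L$-smoothness (Assumption~\ref{assump: smoothness}) to write the descent lemma: for any consecutive iterates $\theta_{t+1} = \theta_t - \eta\, g(\theta_t)$, smoothness gives
\begin{equation*}
\mathcal{L}(\theta_{t+1}) \leq \mathcal{L}(\theta_t) + \langle \nabla\mathcal{L}(\theta_t), \theta_{t+1}-\theta_t\rangle + \frac{L}{2}\|\theta_{t+1}-\theta_t\|_2^2.
\end{equation*}
Substituting the update rule and taking conditional expectation over the stochastic gradient $g(\theta_t)$ (using that it is an unbiased estimate, so $\mathbb{E}[g(\theta_t)] = \nabla\mathcal{L}(\theta_t)$, together with the variance bound $\mathbb{E}\|g(\theta_t)-\nabla\mathcal{L}(\theta_t)\|^2\leq\sigma_g^2$ from Assumption~\ref{assump: bounded variance}) yields a one-step bound of the form
\begin{equation*}
\mathbb{E}[\mathcal{L}(\theta_{t+1})] \leq \mathbb{E}[\mathcal{L}(\theta_t)] - \eta\Bigl(1 - \tfrac{L\eta}{2}\Bigr)\mathbb{E}\|\nabla\mathcal{L}(\theta_t)\|_2^2 + \frac{L\eta^2}{2}\sigma_g^2.
\end{equation*}

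Next I would use the choice $\eta\leq 1/L$ to ensure $1 - L\eta/2 \geq 1/2$, which lets me lower-bound the negative term and rearrange into $\frac{\eta}{2}\mathbb{E}\|\nabla\mathcal{L}(\theta_t)\|_2^2 \leq \mathbb{E}[\mathcal{L}(\theta_t)] - \mathbb{E}[\mathcal{L}(\theta_{t+1})] + \frac{L\eta^2}{2}\sigma_g^2$. Then I would sum this telescoping inequality over $t = 1,\dots,T$, so that the successive loss differences collapse to $\mathcal{L}(\theta_1) - \mathbb{E}[\mathcal{L}(\theta_{T+1})] \leq \mathcal{L}(\theta_1) - \mathcal{L}^{\star}$ where $\mathcal{L}^{\star}$ is a lower bound on the objective. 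Dividing by $T$ and by $\eta/2$ produces
\begin{equation*}
\frac{1}{T}\sum_{t=1}^T \mathbb{E}\|\nabla\mathcal{L}(\theta_t)\|_2^2 \leq \frac{2(\mathcal{L}(\theta_1)-\mathcal{L}^\star)}{\eta T} + L\eta\sigma_g^2.
\end{equation*}

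Finally I would substitute the two regimes of the step size. When $\sigma_g\sqrt{T} \geq \alpha L$, the minimum is $\eta = \alpha/(\sigma_g\sqrt{T})$, making the first term scale as $\sigma_g/\sqrt{T}$ and the second term as $L\alpha\sigma_g/\sqrt{T}$, both of order $\sigma_g/\sqrt{T}$; otherwise $\eta = 1/L$ and the first term scales as $L/T$ while the second is a lower-order contribution. Collecting the worst-case constants into a single $C$ depending on $\mathcal{L}(\theta_1)-\mathcal{L}^\star$, $\alpha$, and $L$ gives exactly the claimed bound $C(1/T + \sigma_g/\sqrt{T})$. I expect the main obstacle to be bookkeeping rather than conceptual: carefully handling the two step-size regimes so that a single clean constant $C$ emerges, and being explicit that the argument only requires unbiasedness plus bounded variance of the estimator (so that it applies uniformly to NS, SG, and SAG), which is the point the paper wants to make about universality of variance reduction.
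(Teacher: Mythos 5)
Your proposal is correct, and it is essentially the argument the paper relies on: the paper never writes out a proof of Proposition~\ref{thm: loss convergence}, deferring instead to the cited optimization references \cite{bertsekas2000gradient, bottou2018optimization, allen2018make}, for which the descent-lemma-plus-telescoping analysis you give (including the two step-size regimes, where in the $\eta = 1/L$ regime the term $L\eta\sigma_g^2 = \sigma_g^2$ is absorbed using $\sigma_g \leq \alpha L/\sqrt{T}$) is the canonical proof. One point worth making explicit: you correctly flag that the argument needs $\mathbb{E}[g(\theta_t)\,|\,\theta_t] = \nabla\mathcal{L}(\theta_t)$, which Assumption~\ref{assump: bounded variance} does not state but which is implicit in calling $\sigma_g^2$ a variance, and which is exactly what Theorem~\ref{thm: sg variance} establishes for the SG estimator with proportional group sizes.
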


Proposition~\ref{thm: loss convergence} shows that, in expectation, the convergence rate of the average gradient norm consists of a fast rate $1/T$  and a slow rate $\sigma_g/\sqrt{T}$, with the slow rate being the dominant term. 
Importantly, the slow rate depends on the variance of the gradient estimate. 
This suggests that a variance reduced gradient estimate allows SGD to reach an approximate local minimum with less iterations, leading to faster convergence of the training loss. 
This intuition is corroborated by Fig.~\ref{fig:convergence}, where we directly visualize the training trajectory of contrastive loss versus epoch for three sampling methods. 
We observe that SG features a faster loss decay and a narrower error bar, showing that it outperforms other methods in both the convergence speed and the stability. 
Furthermore, Proposition~\ref{thm: loss convergence} indicates that our proposed sampling techniques are universal, since it applies to all scenarios as long as the mild assumptions \ref{assump: smoothness} and \ref{assump: bounded variance} are satisfied. 
In other words, when it comes to scenarios that involve pixel/voxel-level sampling on 2D/3D images, utilizing SG/SAG instead of naive sampling can lead to enhanced stability and decreased variance.

\section{Datasets}
\label{sec:dataset}
Our experiments are conducted on five 2D/3D representative datasets in semi-supervised medical image segmentation literature, including 2D benchmarks (\ie, ACDC \cite{bernard2018deep}, LiTS \cite{bilic2019liver}, and MMWHS \cite{zhuang2016multi}) and 3D benchmarks (\ie, LA \cite{la} and in-house MP-MRI).

\begin{itemize}
    \item {\bf The ACDC dataset} was collected from MICCAI 2017 ACDC challenge \cite{bernard2018deep}, consisting of 200 3D cardiac cine MRI scans with 3 classes -- left ventricle (LV), myocardium (Myo), and right ventricle (RV). Following \cite{wu2022exploring,luo2020semi,wu2022mutual}, we utilize 120, 40, and 40 scans for training, validation, and testing \footnote{\url{https://github.com/HiLab-git/SSL4MIS/tree/master/data/ACDC}}. Note that 1\%, 5\%, and 10\% label ratios are the ratio of patients. The splitting details are in the supplementary material. For pre-processing, we normalize the intensity of each 3D scan into $[0,1]$ by using min-max normalization, and re-sample images and segmentation maps to $256\times 256$ pixels.
    \item {\bf The LiTS dataset} was collected from MICCAI 2017 Liver Tumor Segmentation Challenge \cite{bilic2019liver}, consisting of 131 contrast-enhanced 3D abdominal CT volumes with 2 classes -- liver and tumor. Note that 1\%, 5\%, and 10\% label ratios are the ratio of patients. We utilize 100 and 31 scans for training and testing, with random order. The splitting details are in the supplementary material. For pre-processing, we follow the setting in \cite{you2022class} by truncating the intensity of each 3D scan into $[-200,250]$ HU for removing irrelevant and redundant features, normalizing each 3D scan into $[0,1]$, and re-sample images and segmentation maps to $256\times 256$ pixels.
    \item {\bf The MMWHS dataset} was collected from MICCAI 2017 challenge \cite{zhuang2016multi}, consisting of 20 3D cardiac MRI scans with 7 classes -- left ventricle (LV), left atrium (LA), right ventricle (RV), right atrium (RA), myocardium (Myo), ascending aorta (AAo), and pulmonary artery (PA). Note that 1\%, 5\%, and 10\% label ratios are based on the ratio of patients. Following \cite{you2022mine}, we utilize 15 and 5 scans for training and testing. The splitting details are in the supplementary material. For pre-processing, we normalize the intensity of each 3D scan into $[0,1]$ by using min-max normalization, and re-sample images and segmentation maps to $256\times 256$ pixels.
    \item {\bf The LA dataset} \cite{la} was a representative 3D benchmark, consisting of 100 gadolinium-enhanced MRI scans with one class -- left atrium (LA), with an isotropic resolution of $0.625\times 0.625\times 0.625$mm$^3$. Note that 1\%, 5\%, and 10\% label ratios are the ratio of patients. The fixed split (\ie, 5\%, and 10\% ) \cite{yu2019uncertainty,wu2022exploring} uses 80 and 20 scans for training and testing \footnote{\url{https://github.com/ycwu1997/SS-Net/tree/main/data/LA}}, and 1\% label ratio is randomly split. The splitting details are in the supplementary material. For pre-processing, we crop all the scans at the heart region, normalize the intensities of each 3D scan into $[0,1]$, and randomly crop all the training sub-volumes into $112\times 112\times 80$mm$^3$.
    \item {\bf The Multi-phasic MRI (MP-MRI) dataset} was an in-house 3D dataset, consisting of 160 multi-phasic MRI scans with one class -- liver, each of which includes T1 weighted DCE-MRI images at three-time points (\ie, pre-contrast, arterial phase, and venous phases). Three images are mutually registered to the arterial phase images, with an isotropic voxel size of $1.00\times  1.00\times 1.00$mm$^3$. The dataset is randomly divided into 100 scans for training, 40 for validation, and 20 for testing. Note that 1\%, 5\%, and 10\% label ratios are the ratio of patients. The splitting details are in the supplementary material. For pre-processing, we normalize the intensity of each 3D scan into $[0,1]$ by using min-max normalization, and re-sample images and segmentation maps to $256\times 256$ pixels.    
\end{itemize}

\section{Implementation Details}
\label{section:implementation}

In our experiments, all of our evaluated methods have been trained using similar settings for simplicity in reproducing our results. All experiments are conducted with PyTorch \cite{paszke2019pytorch} on an NVIDIA RTX 3090 Ti. We adopt an SGD optimizer with momentum 0.9 and weight decay $10^{-4}$. The initial learning rate is set to 0.01. We use the 2D \unet \cite{ronneberger2015u} or 3D \vnet \cite{milletari2016v} backbones as the segmentation network under different labeled ratio settings (\ie, 1\%, 5\%, 10\% labeled ratios). Following \cite{you2022mine}, we adopt the feature pyramid network (\texttt{FPN}) \cite{lin2017feature} architecture as the representation head $\psi_r$ with a separate 512-dimension output layer. The momentum hyperparameter is set to 0.99.

For pre-training, the networks are trained for 100 epochs with a batch size of 6. We apply the {\em{mined}} views with $d\!=\!5$, following \cite{you2022mine}. As for fine-tuning, the networks are trained for 200 epochs with a batch size of 8. The learning rate decays by a factor of 10 every 2500 iterations during the training. We apply the temperature with $\tau_{t}\!=\!0.01$, $\tau_{s}\!=\!0.1$, and $\tau\!=\!0.5$, respectively. The size of the memory bank is set to 36. For the CL training, we use the implementation from \cite{you2022mine} and leave all parameters on their default settings, \eg, we apply the hyperparameters with $\lambda_{1}\!=\!0.01$, $\lambda_{2}\!=\!1.0$, and $\lambda_{3}\!=\!1.0$. For other hyper-parameters in all the evaluated methods, we adopt the suggested settings in the original papers because they are not of direct interest to us.

For 2D medical segmentation, we follow the same data augmentations \cite{luo2020semi,wu2022exploring} to the {\em{teacher}}'s input and the {\em{student}}'s input, respectively. The augmentations include random rotation, random cropping, and horizontal flipping. 
For 3D medical segmentation, we follow the same data augmentations \cite{yu2019uncertainty,luo2020semi,wu2021semi,wu2022exploring} to the {\em{teacher}}'s input and the {\em{student}}'s input, respectively. The augmentations include random rotation and random flipping.
We evaluate our methods on 3D segmentation results with two classical metrics: (1) Dice coefficient (DSC) and (2) Average Symmetric Surface Distance (ASD). Note that, for all the evaluated methods, we make no additional modifications during the training process for fair evaluations. We run all our experiments in the same environments with fixed random seeds (Hardware: Single NVIDIA GeForce RTX 3090 GPU; Software: PyTorch 1.10.2+cu113, and Python 3.8.11). 

\begin{table}[t]
\caption{Ablation on data augmentation strategies: (1) random rotation; (2) random cropping; and (3)  horizontal flipping, compared to our methods with two settings (\ie, w/ no augmentation and w/ all three augmentation).
Note that we use the identical data augmentation strategies (\ie, random rotation, random cropping, and horizontal flipping), as \cite{ronneberger2015u,vu2019advent,ouali2020semi,zhang2017deep,qiao2018deep,yu2019uncertainty,li2020shape,luo2020semi,luo2021efficient,verma2019interpolation,chen2021semi,chaitanya2020contrastive,tarvainen2017mean,wu2021semi,wu2022exploring,you2022bootstrapping,you2022mine} for fair comparisons.} 
\label{table:data_augmentation_ablation}
\vspace{-5pt}
\centering
\resizebox{0.5\textwidth}{!}{
\begin{tabular}{@{\hskip 1mm}l|ccccc@{\hskip 1mm}}
\toprule
Method  && {DSC{[}\%{]}$\uparrow$} && ASD{[}voxel{]$\downarrow$} \\ 
\midrule
\gr \cb \arcosag (ours)  &&  {84.9}  &&  {1.47} \\
\quad w/o random rotation  &&  84.1 &&  2.87 \\
\quad w/o random cropping  &&  84.4 &&   3.47 \\
\quad w/o horizontal flipping  && 73.7 &&  6.70 \\
\quad w/ no augmentation  && 70.8 &&  9.83 \\
\midrule
\gr \vb \arcosg (ours)  &&  {85.5}  &&  {0.947} \\
\quad w/o random rotation  &&  83.8 &&  3.27 \\
\quad w/o random cropping  &&  84.6 &&   1.62\\
\quad w/o horizontal flipping  &&  78.8 &&  4.15 \\
\quad w/ no augmentation  && 76.2 &&  7.74 \\
\bottomrule
\end{tabular}
}
\end{table}

\section{2D/3D Methods in Comparison}
\label{section:comparison}

\textbf{2D Medical Segmentation}: For experiments, we use 2D \unet \cite{ronneberger2015u} as backbone, and compare \ours with previous state-of-the-art medical segmentation methods: (1) \textit{Baseline} (\ie, \unetF/\unetL \cite{ronneberger2015u}) using both fully-supervised and limited-supervised settings; and (2) \textit{SSL-based}: \emm \cite{vu2019advent}, \cct \cite{ouali2020semi}, \dan \cite{zhang2017deep}, \urpc \cite{luo2021efficient}, \dtc \cite{luo2020semi}, \dct \cite{qiao2018deep}, \ict \cite{verma2019interpolation}, \mt \cite{tarvainen2017mean}, \uamt \cite{yu2019uncertainty}, \sassnet \cite{li2020shape}, \cps \cite{chen2021semi}, \gcl \cite{chaitanya2020contrastive}, \mcnet \cite{wu2021semi}, \ssnet \cite{wu2022exploring}, \action \cite{you2022bootstrapping}, and \mona \cite{you2022mine}. Note that among all the above evaluated methods, several methods use a contrastive objective, including \gcl \cite{chaitanya2020contrastive}, \ssnet \cite{wu2022exploring}, \action \cite{you2022bootstrapping}, and \mona \cite{you2022mine}.

\textbf{3D Medical Segmentation}: For experiments, we use 3D \vnet \cite{milletari2016v} as backbone, and compare \ours with previous state-of-the-art medical segmentation methods: (1) \textit{Baseline} (\ie, \vnetF/\vnetL \cite{milletari2016v}) using both fully-supervised and limited-supervised settings; and (2) \textit{SSL-based}: \emm \cite{vu2019advent}, \cct \cite{ouali2020semi}, \dan \cite{zhang2017deep}, \urpc \cite{luo2021efficient}, \dtc \cite{luo2020semi}, \dct \cite{qiao2018deep}, \ict \cite{verma2019interpolation}, \mt \cite{tarvainen2017mean}, \uamt \cite{yu2019uncertainty}, \sassnet \cite{li2020shape}, \cps \cite{chen2021semi}, \gcl \cite{chaitanya2020contrastive}, \mcnet \cite{wu2021semi}, \ssnet \cite{wu2022exploring}, \action \cite{you2022bootstrapping}, and \mona \cite{you2022mine}. Note that among all the above evaluated methods, several methods use a contrastive objective, including \gcl \cite{chaitanya2020contrastive}, \ssnet \cite{wu2022exploring}, \action \cite{you2022bootstrapping}, and \mona \cite{you2022mine}.

\begin{table*}[t]
\begin{center}
    \caption{Quantitative comparisons (DSC{[}\%{]}~$\uparrow$~/~ASD{[}voxel{]}~$\downarrow$) across the three labeled ratio settings (1\%, 5\%, 10\%) on the LiTS benchmark. All experiments are conducted as \cite{ronneberger2015u,vu2019advent,ouali2020semi,zhang2017deep,qiao2018deep,yu2019uncertainty,li2020shape,luo2020semi,luo2021efficient,verma2019interpolation,chen2021semi,chaitanya2020contrastive,tarvainen2017mean,wu2021semi,wu2022exploring,you2022bootstrapping,you2022mine} in the identical setting for fair comparisons. Best and second-best results are coloured \textcolor{blue}{\textbf{blue}} and \textcolor{red}{red}, respectively. \unetF (fully-supervided) and \unetL (semi-supervided) are considered as the upper bound and the lower bound for the performance comparison. Please refer to the text for discussion. We adopt the identical data augmentation (\ie, random rotation, random cropping, and horizontal flipping) for fair comparisons.}
\label{table:lits_main}
\begin{adjustbox}{width=.9\linewidth}
\begin{tabular}{cccccccccc}
	\toprule
	& \multicolumn{9}{c}{\textbf{LiTS}} \\
	\cmidrule(r){2-10}
    & & \multicolumn{2}{c}{1\% Labeled} & & \multicolumn{2}{c}{5\% Labeled} & & \multicolumn{2}{c}{10\% Labeled} \\
        \cmidrule(r){3-4} \cmidrule(r){6-7} \cmidrule(r){9-10} 
        % \vspace{0.15in}
		{Method}
		            & {Average}
		            & {Liver}  
		            & {Lesion}
		            & {Average}
		            & {Liver}  
		            & {Lesion}
		            & {Average}
		            & {Liver}  
		            & {Lesion}
		            \\ \midrule
	    \unetF \cite{ronneberger2015u}
		   & {68.2}/{ 16.9}
                    & {90.6}/{8.14}
                    & {45.8}/{25.6}
		            & {68.2}/{16.9}
                    & {90.6}/{8.14}
                    & {45.8}/{25.6}
		            & {68.2}/{16.9}
                    & {90.6}/{8.14}
                    & {45.8}/{25.6}
                    \\
		\unetL        
                    & {50.8}/{30.4}
                    & {76.5}/{15.2}
                    & {25.0}/{45.6}
                    & {63.4}/{30.4}
                    & {90.5}/{9.84}
                    & {36.4}/{50.9}
                    & {64.6}/{28.3}
                    & {88.4}/{18.6}
                    & {40.9}/{37.9}
                    \\\midrule 
        \emm \cite{vu2019advent} 
                    & {56.6}/{38.4}
                    & {86.4}/{26.3}
                    & {26.9}/{50.5}
                    & {61.2}/{33.3}
                    & {87.6}/{9.47}
                    & {34.7}/{57.1}
                    & {62.9}/{38.5}
                    & {87.4}/{21.3}
                    & {38.3}/{55.7}
                    \\ 
	    \cct \cite{ouali2020semi} 
                    & {52.4}/{52.3}
                    & {79.1}/{42.0}
                    & {25.7}/{62.6}
                    & {61.4}/{26.1}
                    & {84.6}/{12.3}
                    & {38.2}/{39.9}
                    & {63.4}/{23.0}
                    & {88.8}/{7.64}
                    & {38.0}/{38.5}
                    \\ 
	    \dan \cite{zhang2017deep} 
                    & {57.1}/{28.3}
                    & {84.5}/{19.2}
                    & {29.6}/{37.4}
                    & {62.3}/{25.8}
                    & {88.6}/{9.64}
                    & {36.1}/{42.1}
                    & {63.2}/{30.7}
                    & {87.3}/{15.4}
                    & {39.1}/{46.1}
                    \\
	    \urpc \cite{luo2021efficient} 
                    & {55.5}/{34.6}
                    & {83.0}/{27.3}
                    & {28.0}/{42.0}
                    & {60.9}/{21.4}
                    & {85.5}/{8.11}
                    & {36.3}/{34.6}
                    & {62.5}/{24.8}
                    & {84.9}/{12.9}
                    & {40.1}/{36.7}
                    \\ 
	    \dtc \cite{luo2020semi} 
                    & {39.3}/{37.5}
                    & {68.1}/{10.7}
                    & {10.5}/{64.3}
                    & {59.2}/{18.6}
                    & {85.0}/{7.54}
                    & {33.3}/{29.7}
                    & {62.5}/{24.8}
                    & {84.9}/{12.9}
                    & {40.1}/{36.7}
                    \\ 
	    \dct \cite{qiao2018deep} 
                    & {57.7}/{38.5}
                    & {87.0}/{22.4}
                    & {28.3}/{54.6}
                    & {60.8}/{34.4}
                    & {89.2}/{12.6}
                    & {32.5}/{56.2}
                    & {61.9}/{31.7}
                    & {86.2}/{19.3}
                    & {37.5}/{44.1}
                    \\ 
	    \ict \cite{verma2019interpolation} 
                    & {58.3}/{32.2}
                    & {86.5}/{22.8}
                    & {30.1}/{41.5}
                    & {60.1}/{39.1}
                    & {86.8}/{12.6}
                    & {33.3}/{65.6}
                    & {62.5}/{32.4}
                    & {88.1}/{16.7}
                    & {36.9}/{48.2}
                    \\ 
	    \mt \cite{tarvainen2017mean} 
                    & {54.7}/{24.7}
                    & {83.1}/{10.2}
                    & {26.3}/{39.1}
                    & {60.9}/{23.7}
                    & {87.5}/\textcolor{blue}{\tf{6.34}}
                    & {34.4}/{41.1}
                    & {62.3}/{23.7}
                    & {88.5}/{9.32}
                    & {36.1}/{38.1}
                    \\ 
	    \uamt \cite{yu2019uncertainty} 
                    & {55.5}/{34.6}
                    & {83.0}/{27.3}
                    & {28.0}/{42.0}
                    & {61.5}/{24.7}
                    & {84.5}/{10.6}
                    & {38.6}/{38.8}
                    & {62.9}/{23.6}
                    & {87.4}/{7.78}
                    & {38.4}/{39.6}
                    \\ 
	    \sassnet \cite{li2020shape} 
                    & {39.6}/{42.7}
                    & {69.0}/{14.7}
                    & {10.3}/{7.06}
                    & {60.4}/{25.3}
                    & {86.1}/{11.6}
                    & {34.7}/{39.0}
                    & {62.4}/{21.1}
                    & {86.4}/{8.31}
                    & {38.3}/{33.9}
                    \\
	    \cps \cite{chen2021semi} 
                    & {57.7}/{39.6}
                    & {87.0}/{22.4}
                    & {28.3}/{54.6}
                    & {59.5}/{26.3}
                    & {84.0}/{9.01}
                    & {34.9}/{43.5}
                    & {62.1}/{30.8}
                    & {88.6}/{18.3}
                    & {35.6}/{43.3}
                    \\ 
	    \gcl \cite{chaitanya2020contrastive} 
                    & {59.3}/{29.5}
                    & {88.6}/{14.2}
                    & {30.0}/{44.9}
                    & {63.3}/{20.1}
                    & {90.7}/{9.46}
                    & {35.9}/{30.8}
                    & {65.0}/{37.2}
                    & {91.3}/{10.0}
                    & {38.7}/{64.3}
                    \\ 
	    \mcnet \cite{wu2021semi} 
                    & {60.9}/{32.1}
                    & {87.1}/{17.8}
                    & {34.8}/{46.5}
                    & {61.6}/{19.8}
                    & {86.3}/{8.21}
                    & {36.9}/{31.4}
                    & {63.4}/{29.9}
                    & {89.0}/{13.1}
                    & {38.0}/{46.8}
                    \\ 
	    \ssnet \cite{wu2022exploring} 
                    & {55.0}/{35.9}
                    & {89.6}/{19.8}
                    & {20.5}/{51.9}
                    & {59.1}/{24.6}
                    & {87.5}/{11.2}
                    & {30.6}/{38.1}
                    & {63.4}/{19.8}
                    & {91.1}/{7.33}
                    & {35.8}/{32.2}
                    \\
        \action \cite{you2022bootstrapping} 
                    & {61.0}/{24.5}
                    & {89.8}/{16.9}
                    & {32.2}/{32.3}
                    & {66.3}/{23.6}
                    & {93.0}/\textcolor{red}{6.41}
                    & {39.5}/{40.8}
                    & {67.2}/{20.4}
                    & {92.8}/\textcolor{red}{5.08}
                    & {41.6}/{35.8}
                    \\ 
        \mona \cite{you2022mine} 
                    & {62.2}/{24.9}
                    & {91.3}/{13.9}
                    & {34.0}/{35.9}
                    & {66.6}/{16.6}
                    & \textcolor{red}{93.1}/{7.74}
                    & {40.1}/{25.4}
                    & {68.3}/{18.0}
                    & {93.4}/{8.88}
                    & {43.3}/{27.0}
                    \\ 
        \gr \cb \arcosag (ours)
                    & \textcolor{red}{64.1}/\textcolor{red}{20.6}
                    & \textcolor{red}{91.5}/\textcolor{red}{7.63}
                    & \textcolor{red}{36.8}/\textcolor{blue}{\tf{33.5}}
                    & \textcolor{red}{67.3}/\textcolor{red}{13.2}
                    & {93.0}/{7.10}
                    &\textcolor{red}{41.7}/\textcolor{red}{19.3}
                    & \textcolor{red}{69.4}/\textcolor{red}{14.9}
                    & \textcolor{red}{93.3}/{7.60}
                    & \textcolor{red}{45.5}/\textcolor{red}{22.2}
                    \\ 
        \gr \vb \arcosg (ours)
                    & \textcolor{blue}{\tf{65.5}}/\textcolor{blue}{\tf{20.2}}
                    & \textcolor{blue}{\tf{92.5}}/\textcolor{blue}{\tf{5.59}}
                    & \textcolor{blue}{\tf{38.4}}/\textcolor{red}{{34.7}}
                    & \textcolor{blue}{\tf{68.4}}/\textcolor{blue}{\tf{11.3}}
                    & \textcolor{blue}{\tf{93.7}}/{{6.63}}
                    & \textcolor{blue}{\tf{43.0}}/\textcolor{blue}{\tf{16.0}}
                    & \textcolor{blue}{\tf{70.1}}/\textcolor{blue}{\tf{13.5}}
                    & \textcolor{blue}{\tf{94.1}}/\textcolor{blue}{\tf{5.01}}
                    & \textcolor{blue}{\tf{46.1}}/\textcolor{blue}{\tf{22.1}}
                    \\ 
		              \bottomrule
	\end{tabular}
    \end{adjustbox}
    \end{center}
    \vspace{-10pt}
\end{table*}

\begin{figure*}[ht]
\centering
\includegraphics[width=\linewidth]{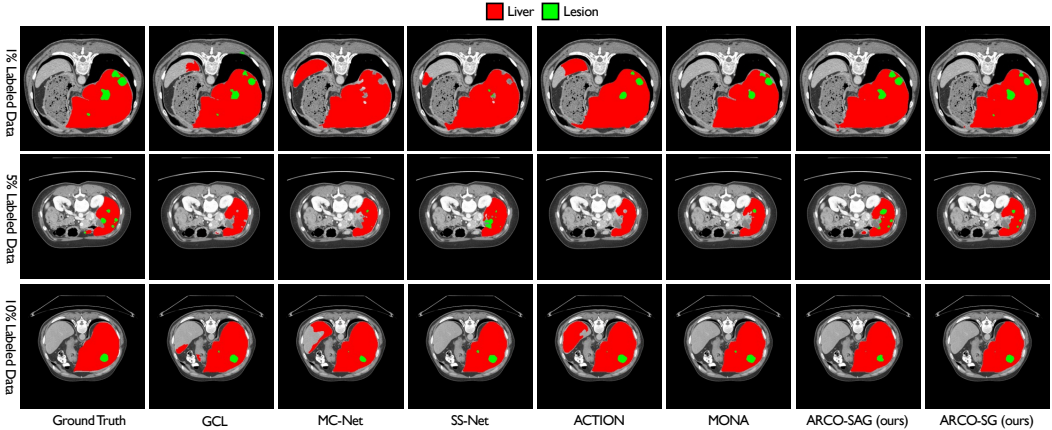}
\vspace{-20pt}
\caption{Visual results on LiTS with 1\%, 5\%, 10\% label ratios. As is shown, \arcosg and \arcosag consistently produce more accurate predictions on anatomical regions and boundaries compared to all other SSL methods.} 
\label{fig:vis_lits}
\vspace{-10pt}
\end{figure*}

\section{Framework Overview}
\label{section:appendix-framework}
In the following, we provide a concise overview of \ours, consisting of two training phases: (1) relational semi-supervised pre-training, and (2) anatomical contrastive fine-tuning. Note that, in this paper, sampling strategies in pixel-level CL framework are of direct interest, so we use a simplification of \mona \cite{you2022mine} without using any additional complicated augmentation strategies.

\myparagraph{(1) Relational Semi-Supervised Pretraining.\!} 
Given an unlabeled sample, \ours has two ways to define {\em{augmented}} and {\em{mined}} views: (1) \ours augments the samples to be $\x^1$ and $\x^2$ as {\em{augmented}} views, with two separate data augmentation operators; and (2) \ours randomly samples $d$ {\em{mined}} views (\ie, $\x^3$) from the unlabeled dataset with additional augmentation. 
The pairs $\left[\x^1,\x^2\right]$ are then processed by $\left[F_{s},F_{t}\right]$, and in a similar way $\x^3$ is processed by $F_{t}$ (See Figure~\ref{fig:model}(a) in main paper), outputting three global features $\left[\h^1,\h^2,\h^3\right]$ after ${E}$ and their local features $\left[\f^1,\f^2,\f^3\right]$ after ${D}$. These features are fed to the two-layer non-linear projectors for outputting global and local embeddings $\vv_{g}$ and $\vv_{l}$.

To alleviate the \textit{collapse} issues \cite{grill2020bootstrap,tejankar2021isd,you2022bootstrapping}, we make the architecture asymmetric between the {\em{student}} and {\em{teacher}} pipeline by further feeding both the global and local embeddings $\vv$ with respect to the {\em{student}} branch into the non-linear predictor, producing $\w$ in both global and local manners\,\footnote{We omit details of local instance discrimination setting for simplicity in following contexts.}. 
After passing through the nonlinear projectors and predictor, the relational similarities between {\em{augmented}} and {\em{mined}} embeddings are computed using the softmax transform, which can be formulated as:
$\uu_s = \text{log} \frac{\text{exp}\big(\text{sim}\big({\w^1}, {\vv^{3}}\big)/\tau_s\big)}{\sum_{n=1}^{N} \text{exp}\big(\text{sim}\big({\w^1}, {\vv^{3}_{n}}\big)/\tau_s\big)},\,
\uu_t = \text{log} \frac{\text{exp}\big(\text{sim}\big({\w^2}, {\vv^{3}}\big)/\tau_t\big)}{\sum_{n=1}^{N} \text{exp}\big(\text{sim}\big({\w^2}, {\vv^{3}_{n}}\big)/\tau_t\big)},$
where $\tau_s$ and $\tau_t$ are different temperature parameters. The unsupervised instance discrimination loss (\ie, Kullback-Leibler divergence $\mathcal{KL}$) can be defined as: 
\begin{equation}
\mathcal{L}_{\text{inst}} = \mathcal{KL}(\uu_s || \uu_t).
\label{equation:pcl}
\end{equation}
The parameters of the {\em{teacher}} model are the exponential moving average (EMA) of the {\em{student}} model parameters that are updated by the stochastic gradient descent.
For pretraining, the entire loss consists of the global and local instance discrimination loss, and supervised segmentation loss $\mathcal{L}_\text{sup}$ (\ie, equal combination of Dice loss and cross-entropy loss), \ie, $\mathcal{L} = \mathcal{L^{\text{global}}_\text{inst}} + \mathcal{L^{\text{local}}_\text{inst}} + \mathcal{L}_\text{sup}$.

\myparagraph{(2) Anatomical Contrastive Finetuning (ACF).} 
We use the pre-trained network weights as the initialization for subsequent fine-tuning (See Figure~\ref{fig:model}(b) in Main Paper). For semi-supervised training, we follow two principles described in \cite{you2022mine} \footnote{In this paper, sampling strategies in pixel-level contrastive learning frameworks are of direct interest, so we use a simplification of \mona \cite{you2022mine} -- without using any additional complicated augmentation strategies.}: (1) \emph{tailness}: giving more importance to tail class hard pixels; and (2) \emph{diversity}: ensuring anatomical diversity in the set of different sampled images. 

Following the abovementioned principles, we employ a two-step routine: (1) \emph{Tailness}: we first perform {{anatomical contrastive formulation}}. Specifically, we additionally attach the representation head $\psi_r$\footnote{The representation head is only applied during training, and is removed during inference}, and generate a higher $n$-dimensional dense representation with the same spatial resolution as the input image. A pixel-level contrastive loss is designed to pull queries $\rr_q\!\in\! \mathcal{R}$ to be similar to the positive keys $\rr_k^{+}\!\in\! \mathcal{R}$, and push apart the negative keys $\rr_k^{-}\!\in\! \mathcal{R}$. The semi-supervised contrastive loss $\mathcal{L}_\text{contrast}$ is defined as:
\begin{equation}
  \label{loss:contrastive}
   \mathcal{L}_\text{contrast} = \sum_{c\in \mathcal{C}} \sum_{\rr_q \sim \mathcal{R}^c_q} -\log \frac{\exp(\rr_q \cdot \rr_k^{c, +} / \tau)}{\exp(\rr_q \cdot \rr_k^{c, +}/ \tau) + \sum_{\rr_k^{-}\sim \mathcal{R}^c_k} \exp(\rr_q \cdot \rr_k^{-}/ \tau)},   
\end{equation}
where $\mathcal{C}$ is a set including all available classes in the current mini-batch, and $\tau$ is a temperature hyperparameter. We refer to $\mathcal{R}_q^c, \mathcal{R}_k^c, \rr_k^{c, +}$ as a query set including all representations within this class $c$, a negative key set including all representations whose labels is not in class $c$, and the positive key which is the $c$-class mean representation, respectively. Consider $\mathcal{P}$ is a set including all pixel coordinates with the same resolution as $R$, these queries and keys are then defined as:
\begin{equation}
  \mathcal{R}_q^c\!=\!\!\bigcup_{[i, j]\in \mathcal{A}}\!\!\mathbbm{1}(\y_{[i,j]}\!=\!c)\, \rr_{[i,j]},\,  \mathcal{R}_k^c\! =\!\!\bigcup_{[i, j]\in \mathcal{A}}\!\!\mathbbm{1}(\y_{[i,j]}\!\neq\!c)\, \rr_{[i,j]},\,  \rr_k^{c, +}\!\! =\!\frac{1}{| \mathcal{R}_q^c |}\sum_{\rr_q \in \mathcal{R}_q^c} \rr_q\,.
\end{equation}
(2) \emph{Diversity}: we leverage the first-in-first-out (FIFO) memory bank \cite{he2020momentum} to search for $K$-nearest neighbors, and use the semi-supervised nearest neighbor loss $\mathcal{L}_\text{nn}$ in a way that maximizing cosine similarity, to exploit the inter-instance relationship. 

For fine-tuning, the total loss includes contrastive loss $\mathcal{L}_\text{contrast}$, nearest neighbors loss $\mathcal{L}_\text{nn}$, unsupervised cross-entropy loss $\mathcal{L}_\text{unsup}$ and supervised segmentation loss $\mathcal{L}_\text{sup}$: $\mathcal{L}_\text{sup} + \lambda_{1}\mathcal{L_\text{contrast}} + \lambda_2\mathcal{L}_\text{unsup} + \lambda_3 \mathcal{L}_\text{nn}$. See Appendix \ref{section:appendix-discussion-hyper} for the ablation of hyperparameters.

\section{Model Architecture}
\label{section:appendix-model}
Figure \ref{fig:model} provides an overview over our approach. Our semi-supervised segmentation model $F$ takes an 2D/3D medical image $x$ as input and outputs the segmentation map and the representation map. We leverage MONA pipeline \cite{you2022mine} including two stages: (1) relational semi-supervised pre-training: on labeled data, the {\em{student}} network is trained by the ground-truth labels with the supervised loss $\mathcal{L}_{\text{sup}}$; while on unlabeled data, the {\em{student}} network takes the \textit{augmened} and \textit{mined} embeddings from the EMA teacher for instance discrimination $\mathcal{L}_{\text{inst}}$ in the global and local manner, (2) our proposed anatomical contrastive reconstruction fine-tuning: on labeled data, the student network is trained by the ground-truth labels with the supervised loss $\mathcal{L}_{\text{sup}}$; while on unlabeled data, the student model takes the representation maps and pseudo labels from the EMA teacher to give more importance to tail class $\mathcal{L}_{\text{contrast}}$, exploit the inter-instance relationship $\mathcal{L}_{\text{nn}}$, and compute unsupervised loss $\mathcal{L}_{\text{unsup}}$. See Appendix \ref{section:appendix-loss-landscapes} for details of the visualization loss landscapes.

\begin{table*}[t]
	\begin{center}
    \caption{Quantitative comparisons (DSC{[}\%{]}~$\uparrow$~/~ASD{[}voxel{]}~$\downarrow$) across the three labeled ratio settings (1\%, 5\%, 10\%) on the MMWHS benchmark. All experiments are conducted as \cite{ronneberger2015u,vu2019advent,ouali2020semi,zhang2017deep,qiao2018deep,yu2019uncertainty,li2020shape,luo2020semi,luo2021efficient,verma2019interpolation,chen2021semi,chaitanya2020contrastive,tarvainen2017mean,wu2021semi,wu2022exploring,you2022bootstrapping,you2022mine} in the identical setting for fair comparisons. Best and second-best results are coloured \textcolor{blue}{\textbf{blue}} and \textcolor{red}{red}, respectively. \unetF (fully-supervided) and \unetL (semi-supervided) are considered as the upper bound and the lower bound for the performance comparison. Please refer to the text for discussion. Note that, Left Ventricle $\rightarrow$ LV, Right Ventricle $\rightarrow$ RV, Left Atrium $\rightarrow$ LA, Right Atrium $\rightarrow$ RA, Myocardium $\rightarrow$ Myo, Ascending Aorta $\rightarrow$ AA, Pulmonary Artery $\rightarrow$ PA. We adopt the identical data augmentation (\ie, random rotation, random cropping, and horizontal flipping) for fair comparisons.}
	\label{table:mmwhs_main}
    \begin{adjustbox}{width=0.9\linewidth}
	\begin{tabular}{ccccccccc}
		\toprule
		{Method} (1\% Labeled)
		            & {Average}
		            & {LV}
		            & {RV}
		            & {LA}  
		            & {RA}  
		            & {Myo} 
		            & {AA}
		            & {PA} 
		            \\ \midrule
		\unetF \cite{ronneberger2015u}
		            & 85.8/8.01
		            & 87.0/4.11
		            & 79.5/14.7
                    & 92.7/4.96
		            & 81.6/13.1
                    & 83.9/9.32
                    & 95.0/3.33
                    & 81.1/6.46
                    \\
		\unetL        
                    & {58.3}/{33.9}
		            & {73.5}/{28.2}
                    & {57.9}/{37.6}
                    & {74.9}/{30.2}
		            & {47.2}/{65.4}
                    & {61.9}/{27.8}
                    & {74.0}/{18.7}
                    & {18.6}/{29.4}
                    \\\midrule 
	    \emm \cite{vu2019advent} 
		            & {59.5}/{63.2}
		            & {71.7}/{53.0}
		            & {64.7}/{26.3}
		            & {66.6}/{51.5}
		            & {61.7}/{39.4}
		            & {56.8}/{66.0}
		            & {81.9}/{48.8}
		            & {12.9}/{157.2}
                    \\ 
                    
	    \cct \cite{ouali2020semi} 
                    & {62.8}/{27.5}
                    & {78.1}/{18.3}
                    & {62.8}/{45.1}
                    & {83.0}/{18.4}
                    & {45.3}/{69.9}
                    & {67.0}/{18.9}
                    & {76.3}/{10.6}
                    & {26.7}/{11.2}
                    \\ 
	    \dan \cite{zhang2017deep} 
                    & {63.8}/{39.0}
                    & {76.3}/{15.4}
                    & {62.2}/{58.3}
                    & {68.0}/{25.3}
                    & {52.3}/{48.2}
                    & {57.0}/{54.3}
                    & {89.0}/{28.0}
                    & {42.0}/{43.4}
                    \\
	    \urpc \cite{luo2021efficient} 
                    & {65.7}/{29.7}
                    & {77.0}/{29.5}
                    & {65.1}/{33.4}
                    & {87.6}/{18.5}
                    & {52.1}/{56.3}
                    & {65.8}/{28.5}
                    & {85.5}/{22.7}
                    & {27.3}/{29.7}
                    \\ 
	    \dtc \cite{luo2020semi}
                    & {62.9}/{32.3}
                    & {78.0}/{29.6}
                    & {63.6}/{29.4}
                    & {74.6}/{30.8}
                    & {49.4}/{53.5}
                    & {67.3}/{28.1}
                    & {89.0}/{10.0}
                    & {18.2}/{44.4}
                    \\ 
	    \dct \cite{qiao2018deep} 
                    & {60.0}/{35.3}
                    & {72.1}/{31.6}
                    & {55.6}/{42.9}
                    & {79.0}/{24.0}
                    & {54.6}/{65.7}
                    & {62.7}/{31.4}
                    & {67.3}/{26.7}
                    & {29.2}/{35.3}
                    \\ 
	    \ict \cite{verma2019interpolation} 
                    & {59.9}/{32.8}
                    & {77.1}/{15.5}
                    & {53.4}/{41.8}
                    & {79.7}/{25.4}
                    & {44.1}/{69.0}
                    & {62.9}/{28.5}
                    & {74.1}/{20.8}
                    & {28.4}/{28.8}
                    \\ 
	    \mt \cite{tarvainen2017mean} 
                    & {61.3}/{36.0}
                    & {70.7}/{37.5}
                    & {62.8}/{29.0}
                    & {74.0}/{49.6}
                    & {52.8}/{58.5}
                    & {58.7}/{27.0}
                    & {85.3}/{13.0}
                    & {24.9}/{37.5}
                    \\ 
	    \uamt \cite{yu2019uncertainty} 
                    & {61.1}/{37.6}
                    & {75.1}/{31.9}
                    & {60.3}/{49.2}
                    & {79.5}/{30.7}
                    & {50.8}/{81.8}
                    & {62.5}/{33.1}
                    & {76.6}/{11.7}
                    & {23.0}/{25.0}
                    \\ 
	    \sassnet \cite{li2020shape}
                    & {62.5}/{33.9}
                    & {76.6}/{27.6}
                    & {62.8}/{29.0}
                    & {74.0}/{49.6}
                    & {52.8}/{58.5}
                    & {58.7}/{27.0}
                    & {85.3}/{13.0}
                    & {24.9}/{37.5}
                    \\ 
	    \cps \cite{chen2021semi} 
                    & {69.2}/{22.6}
                    & {77.9}/{20.1}
                    & {67.9}/{25.3}
                    & {87.8}/{18.0}
                    & {67.6}/{27.7}
                    & {66.7}/{14.2}
                    & {91.5}/{20.3}
                    & {24.8}/{32.5}
                    \\ 
	    \gcl \cite{chaitanya2020contrastive} 
                    & {71.6}/{20.3}
                    & {83.5}/{10.9}
                    & {66.0}/{31.7}
                    & {91.2}/{5.29}
                    & {58.9}/{48.0}
                    & {73.2}/{16.9}
                    & {89.8}/{4.59}
                    & {38.6}/{24.8}
                    \\ 
	    \mcnet \cite{wu2021semi} 
                    & {65.4}/{56.9}
                    & {67.6}/{59.0}
                    & {59.6}/{45.8}
                    & {68.8}/{29.6}
                    & {61.7}/{62.8}
                    & {53.8}/{56.8}
                    & {84.4}/{80.5}
                    & {61.9}/{63.6}
                    \\ 
	    \ssnet \cite{wu2022exploring} 
                    & {60.6}/{30.7}
                    & {67.5}/{15.5}
                    & {66.0}/{42.5}
                    & {73.2}/{25.1}
                    & {53.7}/{52.9}
                    & {59.5}/{15.0}
                    & {81.0}/{29.5}
                    & {23.2}/{34.5}
                    \\
        \action \cite{you2022bootstrapping} 
                    & {78.2}/{10.5}
                    & {88.2}/{3.95}
                    & {71.0}/{11.3}
                    & {89.4}/\textcolor{blue}{\tf{3.00}}
                    & {74.1}/{32.5}
                    & {79.0}/{8.52}
                    & {83.2}/{3.82}
                    & {62.3}/{10.2}
                    \\
        \mona \cite{you2022mine} 
                    & {82.2}/\textcolor{red}{8.05}
                    & {89.2}/{3.64}
                    & {75.8}/\textcolor{red}{6.98}
                    & {89.9}/{3.31}
                    & {77.6}/{27.4}
                    & {79.5}/\textcolor{red}{3.70}
                    & {94.4}/\textcolor{blue}{\tf{2.72}}
                    & {69.4}/\textcolor{red}{8.61}
                    \\ 
        \gr \cb \arcosag (ours)
                    & \textcolor{red}{{86.1}}/{{8.78}}
                    & \textcolor{red}{{89.9}}/\textcolor{red}{{3.54}}
                    & \textcolor{red}{{80.9}}/{8.79}
                    & \textcolor{red}{{92.1}}/{{3.22}}
                    & \textcolor{red}{{83.1}}/\textcolor{red}{{24.1}}
                    & \textcolor{red}{{84.0}}/{{7.36}}
                    & \textcolor{red}{{94.8}}/{5.41}
                    & \textcolor{red}{{77.5}}/{{9.02}}
                    \\ 
        \gr \vb \arcosg (ours)
                    & \textcolor{blue}{\tf{87.3}}/\textcolor{blue}{\tf{5.79}}
                    & \textcolor{blue}{{\tf{90.7}}}/\textcolor{blue}{\tf{3.43}}
                    & \textcolor{blue}{\tf{82.5}}/\textcolor{blue}{\tf{5.58}}
                    & \textcolor{blue}{\tf{92.9}}/\textcolor{red}{{3.13}}
                    & \textcolor{blue}{\tf{86.2}}/\textcolor{blue}{\tf{15.0}}
                    & \textcolor{blue}{\tf{85.6}}/\textcolor{blue}{\tf{2.89}}
                    & \textcolor{blue}{\tf{95.2}}/\textcolor{red}{2.85}
                    & \textcolor{blue}{\tf{77.9}}/\textcolor{blue}{\tf{7.62}}
                    \\ 
        \midrule\midrule
		{Method} (5\% Labeled)
		            & {Average}
		            & {LV}
		            & {RV}
		            & {LA}  
		            & {RA}  
		            & {Myo} 
		            & {AA}
		            & {PA} 
		            \\ \midrule
		\unetF \cite{ronneberger2015u}
		            & 85.8/8.01
		            & 87.0/4.11
		            & 79.5/14.7
                    & 92.7/4.96
		            & 81.6/13.1
                    & 83.9/9.32
                    & 95.0/3.33
                    & 81.1/6.46
                    \\
		\unetL        
                    & {72.3}/{31.1} 
		            & {78.5}/{31.2}
                    & {69.0}/{25.6}
                    & {78.1}/{23.8} 
		            & {57.0}/{45.3}
                    & {69.4}/{34.5}
                    & {90.2}/{14.5}
                    & {63.9}/{42.5}
                    \\\midrule 
	    \emm \cite{vu2019advent} 
                    & {80.6}/{17.3} 
		            & {82.0}/{22.7}
                    & {75.3}/{26.5}
                    & {87.9}/{19.2} 
		            & {72.8}/{19.4}
                    & {74.8}/{19.1}
                    & {94.1}/{6.52}
                    & {77.6}/{7.68}
                    \\                     
	    \cct \cite{ouali2020semi} 
                    & {79.0}/{21.9} 
		            & {82.9}/{15.3}
                    & {73.5}/{31.6}
                    & {83.3}/{14.4} 
		            & {75.5}/{34.0}
                    & {77.9}/{24.3}
                    & {92.7}/{17.6}
                    & {67.0}/{16.1}
                    \\ 
	    \dan \cite{zhang2017deep} 
                    & {79.4}/{22.7} 
		            & {80.1}/{37.0}
                    & {77.2}/{30.0}
                    & {83.1}/{12.3} 
		            & {74.4}/{13.9}
                    & {78.9}/{27.4}
                    & {92.2}/{28.9}
                    & {69.5}/{22.7}
                    \\
	    \urpc \cite{luo2021efficient} 
                    & {76.3}/{25.5} 
		            & {84.2}/{18.8}
                    & {71.3}/{25.2}
                    & {78.5}/{18.2} 
		            & {63.4}/{39.4}
                    & {72.0}/{24.9}
                    & {93.5}/{17.3}
                    & {71.0}/{34.9}
                    \\ 
	    \dtc \cite{luo2020semi}
                    & {76.4}/{21.3} 
		            & {82.3}/{17.2}
                    & {72.4}/{24.4}
                    & {76.1}/{15.2} 
		            & {65.0}/{31.8}
                    & {75.2}/{20.9}
                    & {92.8}/{13.1}
                    & {70.8}/{26.7}
                    \\ 
	    \dct \cite{qiao2018deep} 
                    & {80.8}/{23.0} 
		            & {84.0}/{45.4}
                    & {75.7}/{26.0}
                    & {87.9}/{12.1} 
		            & {73.9}/{31.7}
                    & {77.2}/{34.1}
                    & {94.6}/{5.26}
                    & {72.5}/{6.45}
                    \\ 
	    \ict \cite{verma2019interpolation} 
                    & {77.9}/{18.8} 
		            & {84.1}/{13.2}
                    & {76.7}/{26.3}
                    & {79.2}/{14.0} 
		            & {66.5}/{24.9}
                    & {74.1}/{18.8}
                    & {94.2}/{6.21}
                    & {70.3}/{28.3}
                    \\ 
	    \mt \cite{tarvainen2017mean} 
                    & {77.5}/{24.2} 
		            & {83.5}/{15.4}
                    & {72.8}/{29.0}
                    & {78.0}/{16.2} 
		            & {68.9}/{39.2}
                    & {74.7}/{24.4}
                    & {93.3}/{11.8}
                    & {71.1}/{33.4}
                    \\ 
	    \uamt \cite{yu2019uncertainty} 
                    & {76.2}/{21.1} 
		            & {83.4}/{20.7}
                    & {71.5}/{24.4}
                    & {77.0}/{14.6} 
		            & {62.8}/{30.6}
                    & {75.8}/{22.1}
                    & {93.0}/{8.91}
                    & {69.7}/{26.0}
                    \\ 
	    \sassnet \cite{li2020shape}
                    & {75.2}/{25.0} 
		            & {80.9}/{25.0}
                    & {70.8}/{31.2}
                    & {80.0}/{17.0} 
		            & {61.4}/{40.0}
                    & {70.0}/{31.3}
                    & {92.4}/{18.9}
                    & {71.0}/{21.9}
                    \\ 
	    \cps \cite{chen2021semi} 
                    & {78.3}/{22.5} 
		            & {83.0}/{29.6}
                    & {68.8}/{27.7}
                    & {85.0}/{20.4} 
		            & {73.1}/{15.5}
                    & {71.9}/{35.2}
                    & {94.7}/{9.05}
                    & {71.9}/{20.2}
                    \\ 
	    \gcl \cite{chaitanya2020contrastive} 
                    & {83.5}/{7.41} 
		            & {86.4}/{4.72}
                    & {78.5}/{9.79}
                    & {88.6}/{4.34} 
		            & {79.8}/\textcolor{red}{12.1}
                    & {81.4}/{8.07}
                    & {93.5}/{3.91}
                    & {76.4}/{8.95}
                    \\ 
	    \mcnet \cite{wu2021semi} 
                    & {78.5}/{23.9} 
		            & {83.7}/{23.8}
                    & {74.4}/{25.7}
                    & {81.9}/{16.8} 
		            & {70.8}/{38.4}
                    & {74.4}/{23.9}
                    & {93.3}/{14.2}
                    & {71.1}/{25.0}
                    \\ 
	    \ssnet \cite{wu2022exploring} 
                    & {78.0}/{25.2} 
		            & {83.0}/{7.76}
                    & {74.8}/{32.0}
                    & {82.3}/{20.3} 
		            & {69.6}/{42.6}
                    & {71.1}/{15.8}
                    & {92.4}/{25.5}
                    & {73.2}/{32.1}
                    \\
        \action \cite{you2022bootstrapping} 
                    & {85.4}/{6.71} 
		            & {88.2}/{3.09}
                    & {78.8}/{9.66}
                    & {90.5}/{2.84} 
		            & {80.6}/{15.6}
                    & {84.4}/{7.37}
                    & {94.0}/{2.56}
                    & {81.3}/{5.86}
                    \\
        \mona \cite{you2022mine} 
                    & {87.3}/\textcolor{red}{6.62} 
		            & {90.2}/\textcolor{red}{{2.92}}
                    & {80.9}/\textcolor{red}{9.62}
                    & {92.8}/\textcolor{red}{2.65} 
		            & {82.5}/{15.2}
                    & {87.0}/{8.53}
                    & {95.3}/{1.86}
                    & {82.7}/\textcolor{red}{5.60}
                    \\ 
        \gr \cb \arcosag (ours)
                    & \textcolor{red}{{88.6}}/{{6.73}}
                    & \textcolor{red}{{91.1}}/\textcolor{blue}{\tf{2.70}}
                    & \textcolor{red}{{83.4}}/{{13.0}}
                    & \textcolor{red}{{92.9}}/{{2.84}}
                    & \textcolor{red}{{84.3}}/{{16.1}}
                    & \textcolor{red}{{89.0}}/\textcolor{red}{{4.68}}
                    & \textcolor{red}{{95.9}}/\textcolor{red}{{1.57}}
                    & \textcolor{red}{{83.3}}/{{6.19}}
                    \\ 
        \gr \vb \arcosg (ours)
                    & \textcolor{blue}{\tf{89.3}}/\textcolor{blue}{\tf{4.80}}
                    & \textcolor{blue}{\tf{91.2}}/\textcolor{blue}{\tf{2.70}}
                    & \textcolor{blue}{\tf{84.6}}/\textcolor{blue}{\tf{8.30}}
                    & \textcolor{blue}{\tf{93.7}}/\textcolor{blue}{\tf{2.49}}
                    & \textcolor{blue}{\tf{85.6}}/\textcolor{blue}{\tf{10.4}}
                    & \textcolor{blue}{\tf{89.2}}/\textcolor{blue}{\tf{3.41}}
                    & \textcolor{blue}{\tf{96.0}}/\textcolor{blue}{\tf{1.42}}
                    & \textcolor{blue}{\tf{84.7}}/\textcolor{blue}{\tf{4.95}}
                    \\ 
        \midrule\midrule
		{Method} (10\% Labeled)
		            & {Average}
		            & {LV}
		            & {RV}
		            & {LA}  
		            & {RA}  
		            & {Myo} 
		            & {AA}
		            & {PA} 
		            \\ \midrule
		\unetF \cite{ronneberger2015u}
		            & 85.8/8.01
		            & 87.0/4.11
		            & 79.5/14.7
                    & 92.7/4.96
		            & 81.6/13.1
                    & 83.9/9.32
                    & 95.0/3.33
                    & 81.1/6.46
                    \\
		\unetL        
		            & 77.8/19.7
		            & 82.8/8.92
		            & 77.3/16.1
                    & 75.9/22.8
		            & 74.6/24.7
                    & 75.3/11.6
                    & 90.8/21.6
                    & 67.8/32.3
                    \\\midrule 
	    \emm \cite{vu2019advent} 
                    & {82.1}/{15.1} 
		            & {86.7}/{19.8}
                    & {78.4}/{24.5}
                    & {88.1}/{7.46} 
		            & {77.6}/{15.9}
                    & {75.8}/{25.1}
                    & {95.0}/{4.13}
                    & {73.2}/{8.86}
                    \\ 
	    \cct \cite{ouali2020semi} 
                    & {79.4}/{16.3} 
		            & {85.4}/{5.65}
                    & {73.5}/{30.0}
                    & {89.1}/{7.10} 
		            & {68.2}/{31.2}
                    & {70.2}/{24.3}
                    & {92.4}/{5.90}
                    & {77.1}/{9.77}
                    \\ 
	    \dan \cite{zhang2017deep} 
                    & {80.2}/{15.0} 
		            & {81.6}/{22.6}
                    & {74.2}/{21.2}
                    & {85.0}/{10.1} 
		            & {75.5}/{17.1}
                    & {76.9}/{20.2}
                    & {94.0}/{4.40}
                    & {74.4}/{9.15}
                    \\
	    \urpc \cite{luo2021efficient} 
                    & {81.9}/{12.3} 
		            & {88.1}/{9.41}
                    & {68.3}/{20.7}
                    & {88.1}/{6.73} 
		            & {76.6}/{14.3}
                    & {80.4}/{19.6}
                    & {94.5}/{4.26}
                    & {77.2}/{11.0}
                    \\ 
	    \dtc \cite{luo2020semi}
                    & {79.5}/{20.6} 
		            & {82.8}/{10.8}
                    & {75.8}/{18.7}
                    & {85.9}/{13.8} 
		            & {75.2}/{41.5}
                    & {74.4}/{13.9}
                    & {90.7}/{25.1}
                    & {71.7}/{20.1}
                    \\ 
	    \dct \cite{qiao2018deep} 
                    & {82.8}/{12.5} 
		            & {85.4}/{11.6}
                    & {78.0}/{23.3}
                    & {89.0}/{4.30} 
		            & {79.0}/{16.5}
                    & {75.5}/{19.1}
                    & {94.3}/{4.42}
                    & {78.4}/{8.08}
                    \\ 
	    \ict \cite{verma2019interpolation} 
                    & {82.2}/{12.0} 
		            & {88.4}/{5.11}
                    & {75.0}/{13.5}
                    & {89.0}/{6.98} 
		            & {75.2}/{26.4}
                    & {79.6}/{20.4}
                    & {94.9}/{4.29}
                    & {73.3}/{7.30}
                    \\ 
	    \mt \cite{tarvainen2017mean} 
                    & {79.4}/{19.8} 
		            & {80.4}/{24.1}
                    & {70.3}/{21.3}
                    & {86.0}/{18.0} 
                    & {80.0}/{17.0} 
		            & {73.3}/{28.7}
                    & {92.3}/{20.9}
                    & {73.8}/{8.92}
                    \\ 
	    \uamt \cite{yu2019uncertainty} 
                    & {83.7}/{14.2} 
		            & {86.7}/{12.3}
                    & {80.3}/{20.6}
                    & {89.6}/{8.10} 
                    & {79.5}/{19.2} 
		            & {79.2}/{19.6}
                    & {93.9}/{10.3}
                    & {73.8}/{8.92}
                    \\ 
	    \sassnet \cite{li2020shape}
                    & {81.8}/{15.5} 
		            & {84.9}/{8.01}
                    & {78.3}/{15.9}
                    & {84.4}/{12.5} 
                    & {79.3}/{27.3} 
		            & {79.0}/{14.6}
                    & {93.4}/{8.30}
                    & {73.3}/{22.3}
                    \\ 
	    \cps \cite{chen2021semi} 
                    & {82.0}/{13.1} 
		            & {84.4}/{9.85}
                    & {78.5}/{21.1}
                    & {85.9}/{6.61} 
                    & {81.0}/{18.7} 
                    & {76.4}/{18.3} 
                    & {93.2}/{7.04}
                    & {74.9}/{10.3}
                    \\ 
	    \gcl \cite{chaitanya2020contrastive} 
                    & {86.7}/{8.76} 
		            & {90.5}/{2.95}
                    & {81.3}/{19.6}
                    & {90.4}/{4.31} 
                    & {83.1}/{18.1} 
                    & {86.7}/{5.84} 
                    & {94.8}/{2.00}
                    & {80.3}/{8.53}
                    \\ 
	    \mcnet \cite{wu2021semi} 
                    & {81.9}/{15.4} 
		            & {85.4}/{5.78}
                    & {80.1}/{17.2}
                    & {81.5}/{11.1} 
                    & {79.7}/{34.1} 
                    & {79.8}/{10.9} 
                    & {93.1}/{6.28}
                    & {73.7}/{22.4}
                    \\ 
	    \ssnet \cite{wu2022exploring} 
                    & {82.3}/{13.9} 
                    & {85.7}/{8.80} 
		            & {79.5}/{17.6}
                    & {84.1}/{12.1}
                    & {80.2}/{20.0} 
                    & {81.0}/{14.0} 
                    & {93.6}/{8.60} 
                    & {72.0}/{16.1}
                    \\
        \action \cite{you2022bootstrapping} 
                    & {86.1}/{5.93} 
		            & {88.9}/{3.25}
                    & {81.3}/{6.99}
                    & {89.4}/{3.13} 
                    & {81.6}/{14.1} 
                    & {87.8}/{3.76} 
                    & {94.4}/{2.53}
                    & {79.4}/{7.78}
                    \\
        \mona \cite{you2022mine} 
                    & {87.6}/{6.83} 
		            & {90.7}/{2.89}
                    & {82.8}/{8.99}
                    & {91.8}/{3.48} 
                    & {85.2}/{15.7} 
                    & {87.2}/{5.32} 
                    & {94.9}/{4.32}
                    & \textcolor{red}{80.4}/{7.13}
                    \\ 
        \gr \cb \arcosag (ours)
                    & \textcolor{red}{{89.3}}/\textcolor{blue}{\tf{4.42}}
                    & \textcolor{blue}{\tf{91.1}}/\textcolor{red}{{2.97}}
                    & \textcolor{red}{{84.8}}/\textcolor{blue}{\tf{6.30}}
                    & \textcolor{blue}{\tf{94.1}}/\textcolor{blue}{\tf{2.38}}
                    & \textcolor{red}{{86.1}}/\textcolor{blue}{\tf{9.73}}
                    & \textcolor{red}{{89.2}}/\textcolor{red}{{2.89}}
                    & \textcolor{blue}{\tf{96.0}}/\textcolor{red}{{1.68}}
                    & \textcolor{blue}{\tf{83.6}}/\textcolor{blue}{\tf{5.02}}
                    \\ 
        \gr \vb \arcosg (ours)
                    & \textcolor{blue}{\tf{89.4}}/\textcolor{red}{{4.80}}
                    & \textcolor{blue}{\tf{91.6}}/\textcolor{blue}{\tf{2.56}}
                    & \textcolor{blue}{\tf{85.0}}/\textcolor{red}{{6.79}}
                    & \textcolor{red}{{93.9}}/\textcolor{red}{{2.53}}
                    & \textcolor{blue}{\tf{86.3}}/\textcolor{red}{{11.5}}
                    & \textcolor{blue}{\tf{89.6}}/\textcolor{blue}{\tf{2.71}}
                    & \textcolor{red}{{95.9}}/\textcolor{blue}{\tf{1.66}}
                    & \textcolor{blue}{\tf{83.6}}/\textcolor{red}{{5.88}}
                    \\ 
		   \bottomrule
	\end{tabular}
    \end{adjustbox}
    \end{center}
    \vspace{-10pt}
\end{table*}

\begin{figure*}[ht]
\centering
\includegraphics[width=\linewidth]{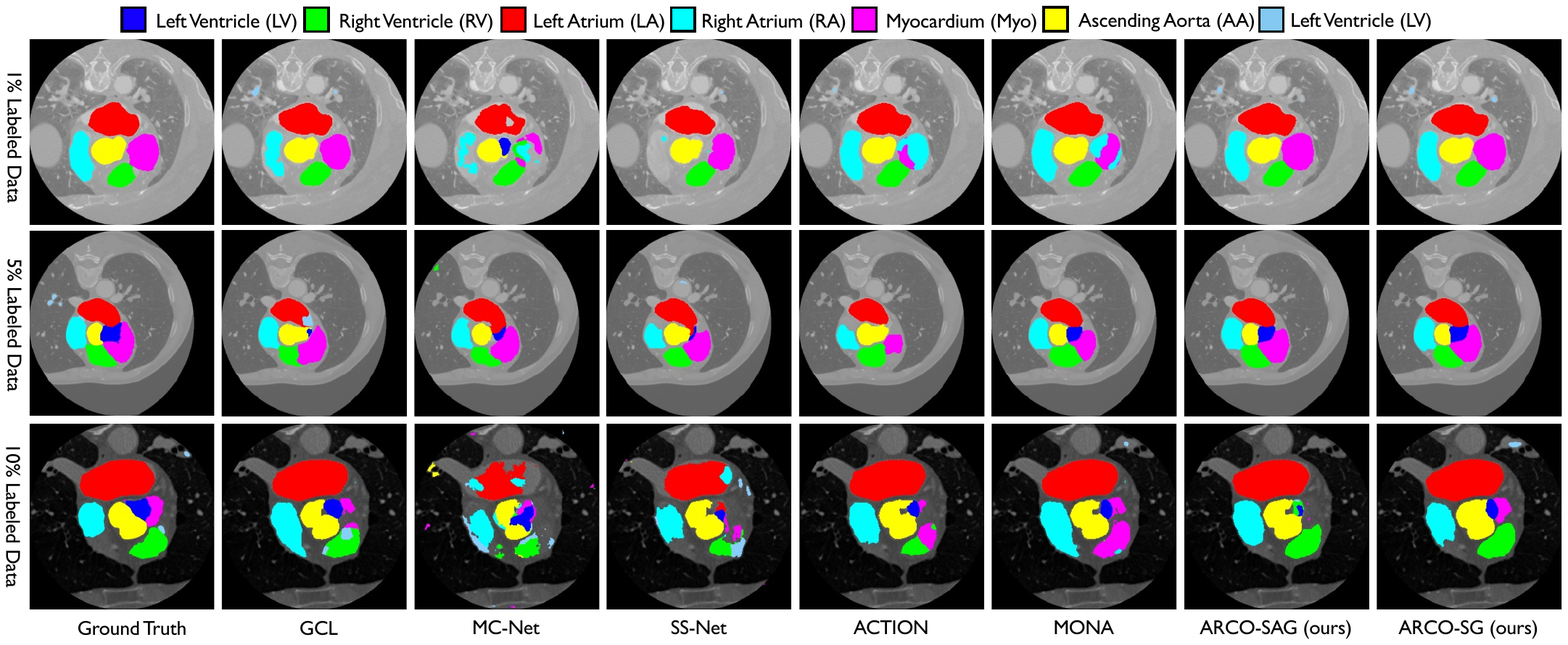}
\vspace{-15pt}
\caption{Visual results on MMWHS with 1\%, 5\%, 10\% label ratios. As is shown, \arcosg and \arcosag consistently produce more accurate predictions on anatomical regions and boundaries compared to all other SSL methods.} 
\label{fig:vis_mm}
\vspace{-15pt}
\end{figure*}

\begin{table}[t]
   \centering
   \caption{Quantitative comparisons (Intersection Over Union (IoU)~$\uparrow$) for Pascal VOC, CityScapes and SUN RGB-D datasets. All experiments are conducted as \cite{liu2021bootstrapping} in the identical setting for fair comparisons. Best and second-best results are colored \textcolor{blue}{\textbf{blue}} and \textcolor{red}{red}, respectively.}
   	\vspace{-5pt}
    \label{table:semantic_main}\begin{adjustbox}{width=\linewidth}
    \begin{tabular}{lcccccccccccc}
   \toprule
     & \multicolumn{4}{c}{Pascal VOC}  & \multicolumn{4}{c}{CityScapes} & \multicolumn{4}{c}{SUN RGB-D} \\
     \cmidrule(l{3pt}r{3pt}){1-1} \cmidrule(l{3pt}r{3pt}){2-5}  \cmidrule(l{3pt}r{3pt}){6-9} \cmidrule(l{3pt}r{3pt}){10-13}
     Method & 60 labels & 200 labels & 600 labels & all labels & 20 labels & 50 labels & 150 labels & all labels & 50 labels & 150 labels & 500 labels & all labels \\
     \cmidrule(l{3pt}r{3pt}){1-1} \cmidrule(l{3pt}r{3pt}){2-5}  \cmidrule(l{3pt}r{3pt}){6-9} \cmidrule(l{3pt}r{3pt}){10-13}
Supervised 
&  $39.4$     
&  $55.5$    
&  $64.6$   
&  $77.8$   
&  $38.2$  
&  $45.9$ 
&  $55.4$ 
&  $70.9$ 
&  $20.0$  
&  $29.2$ 
&  $38.9$  
&  $51.8$ 
\\
\texttt{ReCo} \cite{liu2021bootstrapping} + ClassMix 
& $57.1$  
& $69.4$  
&  $73.2$   
& - 
& $49.9$  
& $57.9$   
& $65.0$  
& - 
& $30.5$  
& $40.4$ 
& $44.6$  
& -
\\
\midrule
\cb \arcosag (9 Grid) + ClassMix 
& $58.3$  
& $70.5$  
& \textcolor{red}{75.4}   
& - 
& $50.2$  
& $60.2$   
& $66.5$
& - 
& $31.5$  
& $40.9$  
& $45.7$  
& -
\\
\cb \arcosag (16 Grid) + ClassMix 
& $58.7$  
& $70.9$  
& $75.1$   
& - 
& $50.1$   
& $60.6$   
& $66.3$
& - 
& $37.8$  
& $40.2$  
& $45.7$  
& -
\\
\cb \arcosag (25 Grid) + ClassMix 
& $59.1$  
& $70.9$  
& $74.9$   
& - 
& $49.8$   
& $60.6$   
& \textcolor{red}{66.7}
& - 
& \textcolor{blue}{\tf{38.5}}  
& $40.5$  
& $45.5$  
& -\\
\midrule
\vb \arcosg (9 Grid) + ClassMix 
& $59.2$  
& \textcolor{blue}{\tf{71.8}}
& $75.3$   
& - 
& \textcolor{red}{52.5} 
& $60.9$   
& \textcolor{blue}{\tf{66.8}} 
& - 
& $32.4$  
& \textcolor{red}{41.4}  
& \textcolor{red}{46.6}  
& -
\\
\vb \arcosg (16 Grid) + ClassMix 
& \textcolor{blue}{\tf{59.6}}  
& \textcolor{red}{71.7}
& \textcolor{blue}{\tf{75.5}} 
& - 
& \textcolor{blue}{\tf{53.7}}
& \textcolor{red}{61.2}  
& $66.2$  
& - 
& $37.7$  
& $41.0$  
& $46.4$  
& -\\
\vb \arcosg (25 Grid) + ClassMix 
& \textcolor{red}{59.5} 
& \textcolor{red}{71.7}
& $75.2$   
& - 
& $51.5$  
& \textcolor{blue}{\tf{61.8}}  
& $66.4$  
& - 
& \textcolor{red}{38.3}
& \textcolor{blue}{\tf{41.5}}
& \textcolor{blue}{\tf{47.3}}
& -
\\
\bottomrule
\end{tabular}
\end{adjustbox}
\label{tab:result_pdfl}
\end{table}

\begin{figure*}[h]
\centering
\includegraphics[width=\linewidth]{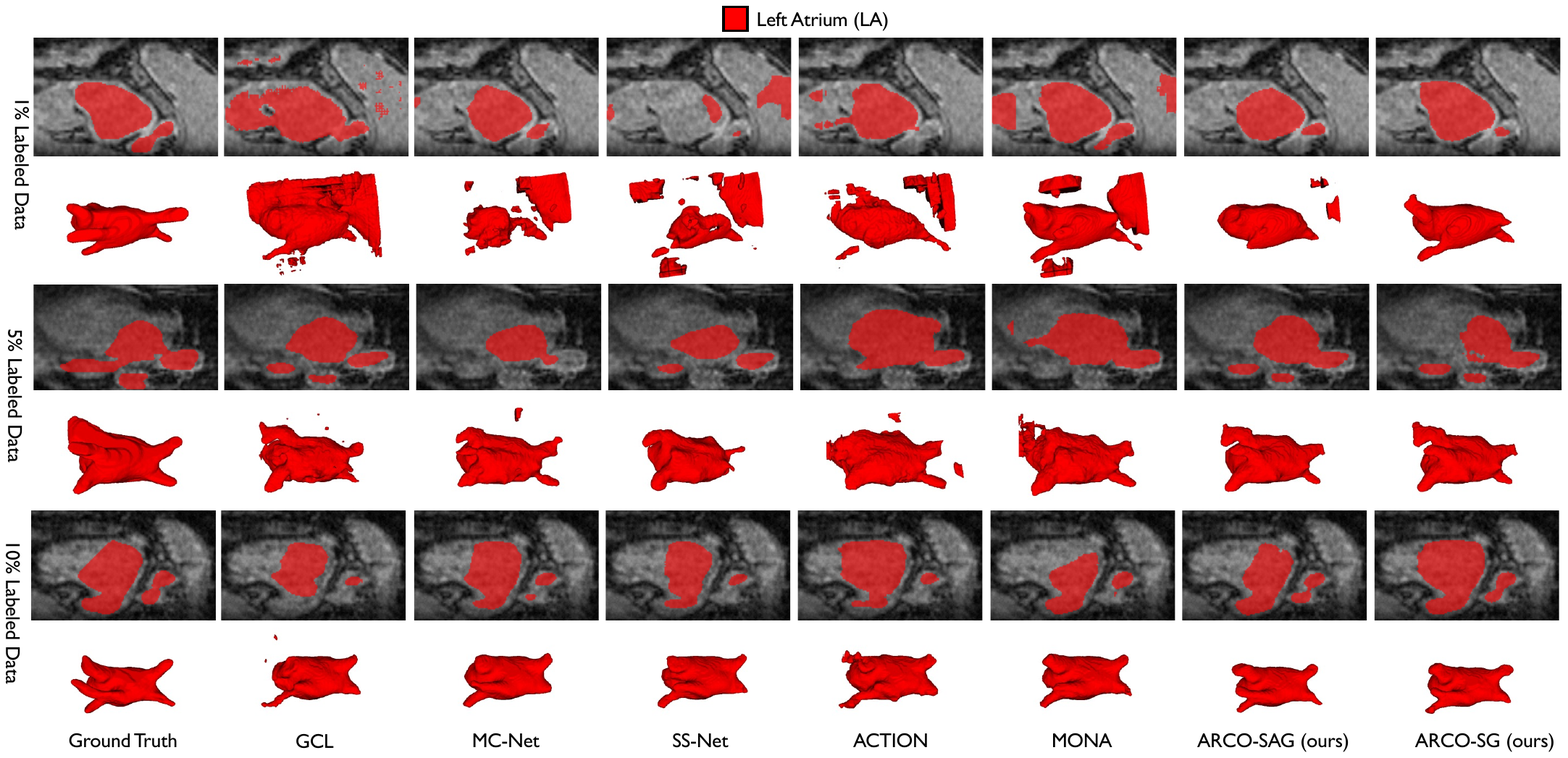}
\vspace{-20pt}
\caption{Visual results on LA with 1\%, 5\%, 10\% label ratios. As is shown, \arcosg and \arcosag consistently yield more accurate and sharper boundaries compared to all other SSL methods.} 
\label{fig:vis_la}
\end{figure*}

\begin{table*}[t]
	\begin{center}
	\caption{Ablation on different contrastive learning frameworks with respect to various labeled ratio settings (1\%, 5\%, 10\%). Experiments are conducted on ACDC using \unet \cite{ronneberger2015u} as the backbone. Here we report the segmentation performance in terms of DSC{[}\%{]} and ASD{[}mm{]}. On all three labeled settings, incorporating our methods (\ie, SG and SAG) consistently achieve superior model robustness gains compared to na\"ive sampling across different state-of-the-art CL frameworks.} %The best results are highlighted in \tf{bold}.}
	% \vspace{-5pt}
	\label{table:cl_comparsion}
    \begin{adjustbox}{width=0.8\linewidth}
	\begin{tabular}{cccccccc}
		\toprule
		& & \multicolumn{2}{c}{1\% Labeled} & \multicolumn{2}{c}{5\% Labeled} & \multicolumn{2}{c}{10\% Labeled} \\
        \cmidrule(r){3-4} \cmidrule(r){5-6} \cmidrule(r){7-8}
        % \vspace{0.15in}
		Category & {Method}
		            & {DSC~$\uparrow$}
		            & {ASD~$\downarrow$}
		            & {DSC~$\uparrow$}
		            & {ASD~$\downarrow$}  
		            & {DSC~$\uparrow$}
		            & {ASD~$\downarrow$} 
		            \\ \midrule
		\multirow{3}{*}{\mocoo\cite{chen2020improved}}
		&  NS
                    & {76,4}
                    & {5.64}
                    & {83.3}
                    & {3.78}
                    & {83.8}
                    & {3.17}
                    \\
		& \cb SAG
                    & {80.9}
                    & {4.16}
                    & {84.7}
                    & {3.53}
                    & {84.6}
                    & {3.84}
                    \\
		& \vb SG
                    & {81.4}
                    & {4.01}
                    & {85.1}
                    & {1.48}
                    & {84.9}
                    & {3.08}
                    \\ 
                    \midrule\midrule
		\multirow{3}{*}{\mocoknn \cite{van2021revisiting}}
		& NS
                    & {78.3}
                    & {4.54}
                    & {83.8}
                    & {3.74}
                    & 84.1
                    & 2.97
                    \\ 
		& \cb SAG
                    & {81.8}
                    & {4.08}
                    & {84.6}
                    & {3.41}
                    & 84.7
                    & 2.58
                    \\ 
		& \vb SG
                    & {83.9}
                    & {3.17}
                    & {85.4}
                    & {3.17}
                    & 85.0
                    & 2.47
                    \\ 
                    \midrule\midrule
		\multirow{3}{*}{\simclr\cite{chen2020simple}}
		& NS
                    & {74.9}
                    & {4.89}
                    & {80.9}
                    & {3.19}
                    & 84.1
                    & 2.78
                    \\
		& \cb SAG
                    & {78.5}
                    & {4.01}
                    & {83.8}
                    & {2.68}
                    & 85.8
                    & 2.01
                    \\
		& \vb SG
                    & {79.1}
                    & {3.49}
                    & {84.3}
                    & {2.31}
                    & 86.0
                    & 1.76
                    \\
                    \midrule\midrule
		\multirow{3}{*}{\byol\cite{grill2020bootstrap}}
		& NS
                    & {77.3}
                    & {5.01}
                    & {82.9}
                    & {3.14}
                    & 84.8
                    & 1.67
                    \\ 
		& \cb SAG
                    & {79.8}
                    & {4.12}
                    & {85.0}
                    & {2.81}
                    & 85.7
                    & 1.36
                    \\
		& \vb SG
                    & {80.2}
                    & {3.79}
                    & {85.8}
                    & {1.67}
                    & 85.9
                    & 1.13
                    \\ 
                    \midrule\midrule
		\multirow{3}{*}{\isd\cite{tejankar2021isd}}
		& NS
                    & {79.3}
                    & {3.54}
                    & {81.2}
                    & {2.86}
                    & {85.6}
                    & {1.90}
                    \\ 
		& \cb SAG
                    & {81.4}
                    & {3.50}
                    & {83.1}
                    & {2.53}
                    & {86.3}
                    & {1.61}
                    \\ 
		& \vb SG
                    & {82.2}
                    & {2.04}
                    & {83.7}
                    & {1.97}
                    & {86.7}
                    & {1.34}
                    \\ 
                    \midrule\midrule
		\multirow{3}{*}{\vicreg\cite{bardes2021vicreg}}
		& NS
                    & {64.0}
                    & {10.6}
                    & {79.1}
                    & {4.18}
                    & {82.9}
                    & {3.89}
                    \\ 
		& \cb SAG
                    & {81.1}
                    & {3.49}
                    & {83.8}
                    & {3.02}
                    & {86.3}
                    & {2.14}
                    \\ 
		& \vb SG
                    & {81.6}
                    & {3.12}
                    & {84.1}
                    & {2.78}
                    & {86.8}
                    & {2.01}
                    \\ 
                    \midrule\midrule                   
		\multirow{3}{*}{\ours (ours)}
		& NS
                    & {82.6}
                    & {1.43}
                    & {86.9}
                    & {1.07}
                    & {87.7}
                    & {1.33}
                    \\ 
		& \cb SAG
                    & {84.9}
                    & {1.47}
                    & {87.1}
                    & {0.848}
                    & {88.5}
                    & {1.40}
                    \\ 
		& \vb SG
                    & {85.5}
                    & {0.947}
                    & {88.7}
                    & {0.841}
                    & {89.4}
                    & {0.776}
                    \\ 
		              \bottomrule
	\end{tabular}
    \end{adjustbox}
    \end{center}
    \vspace{-10pt}
\end{table*}

\begin{figure*}[h]
\centering
\includegraphics[width=\linewidth]{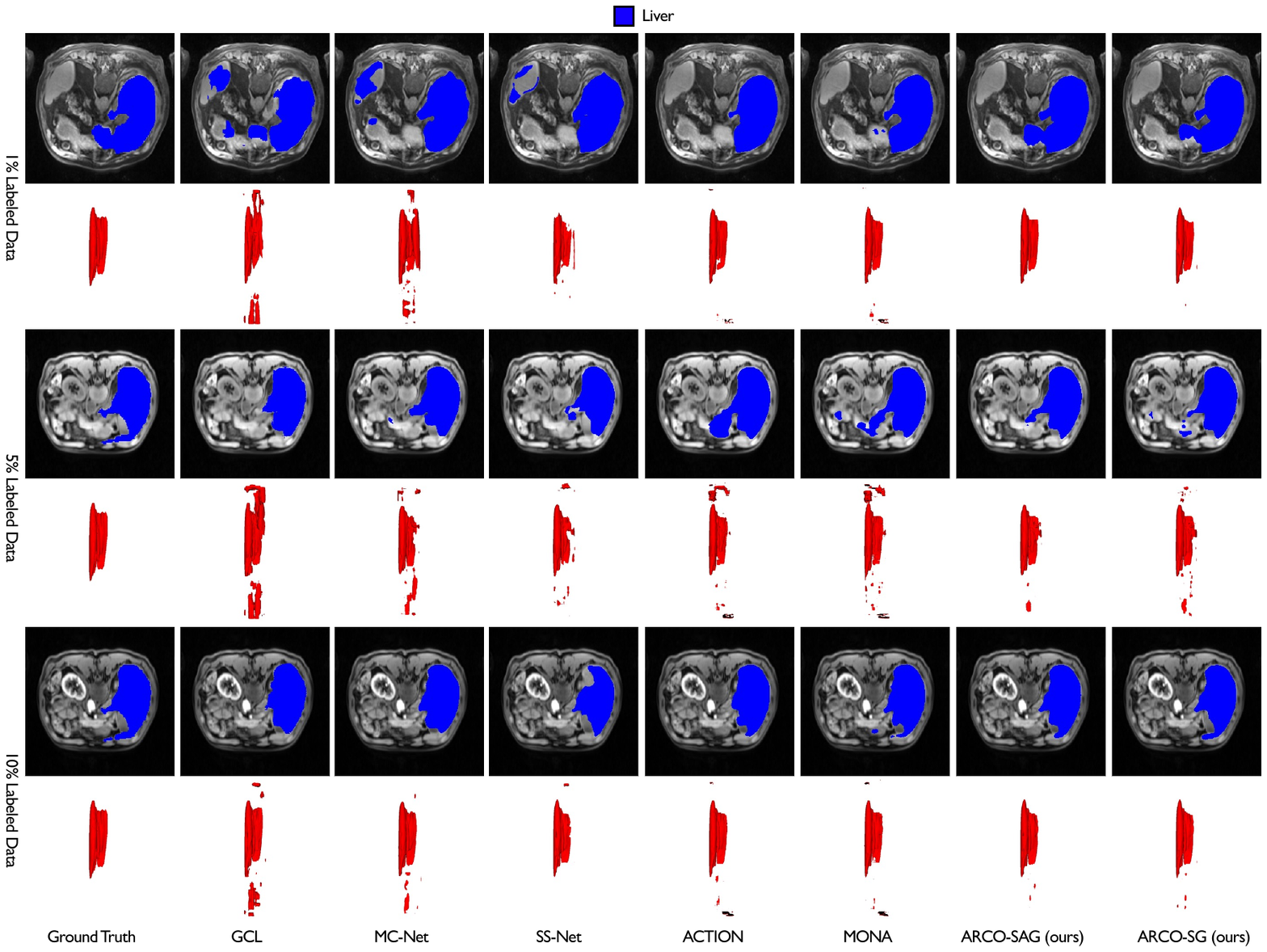}
\vspace{-15pt}
\caption{Visual results on MP-MRI with 1\%, 5\%, 10\% label ratios. As is shown, \arcosg and \arcosag consistently yield more accurate and sharper boundaries compared to all other SSL methods.} 
\label{fig:vis_jhu}
\end{figure*}

\begin{figure*}[h]
\centering
\includegraphics[width=0.9\linewidth]{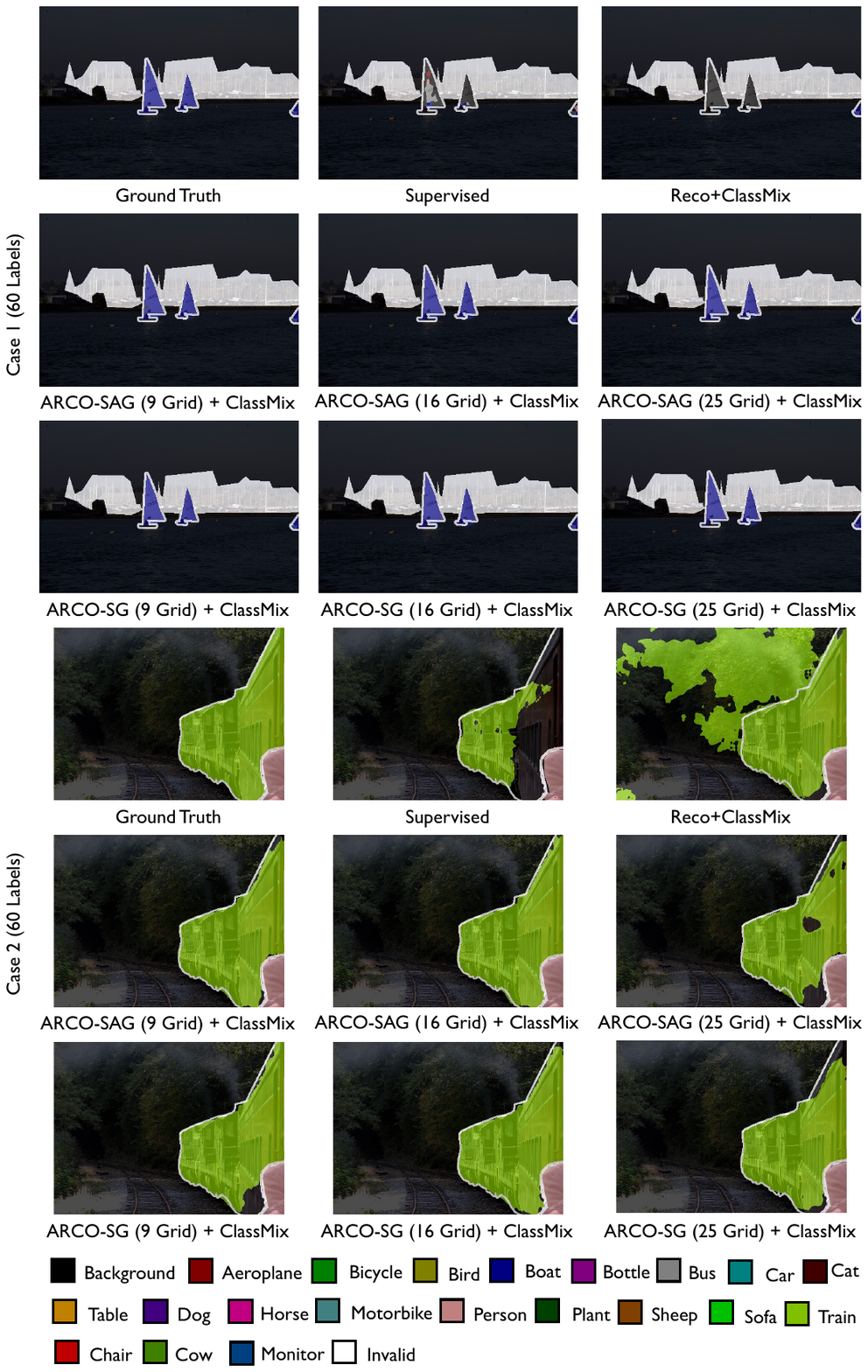}
\vspace{-5pt}
\caption{Visual results on Pascal validation set with 60 labels. As is shown, \arcosg and \arcosag consistently yield more accurate and sharper boundaries compared to all other SSL methods.} 
\label{fig:vis_pascal_1}
\end{figure*}

\begin{figure*}[h]
\centering
\includegraphics[width=0.9\linewidth]{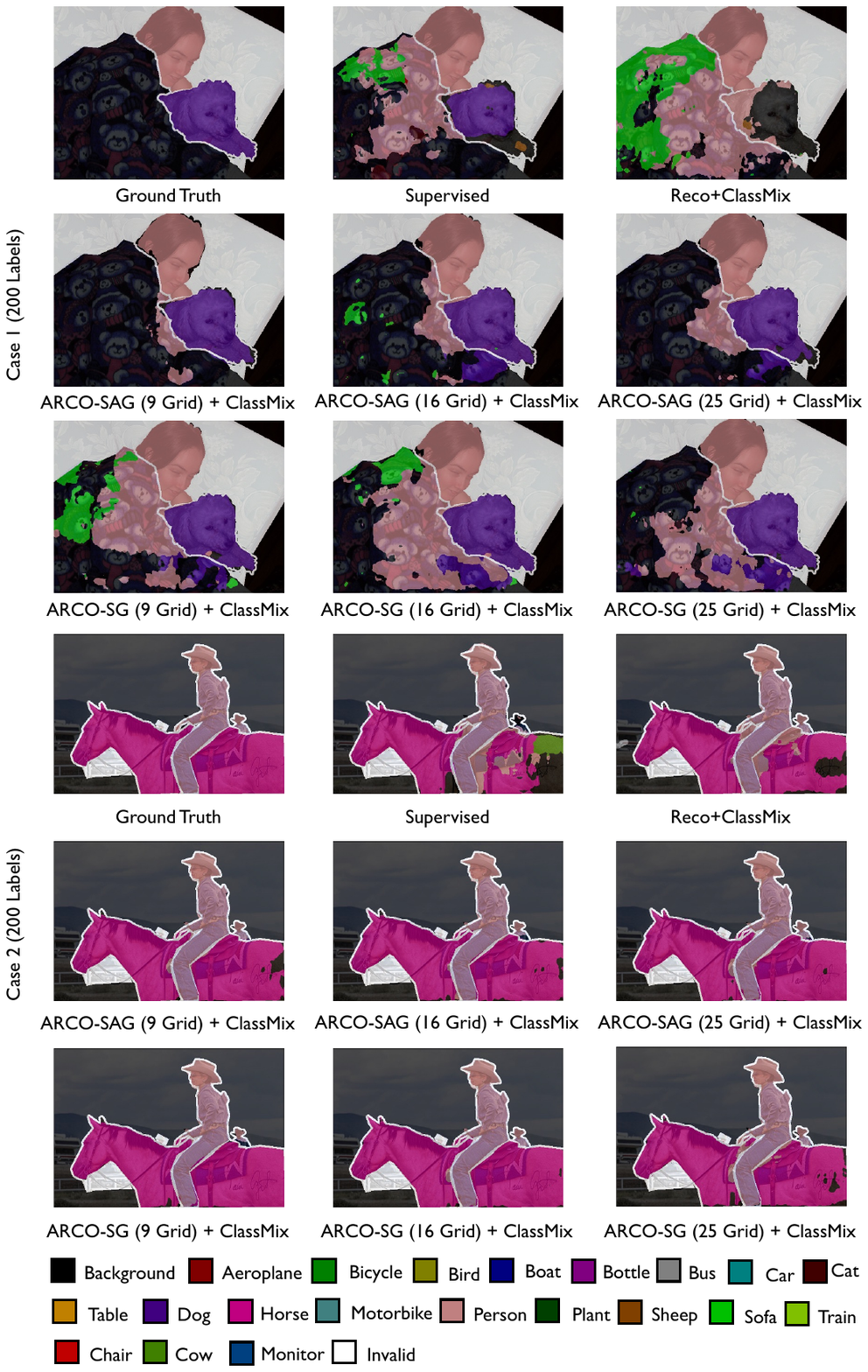}
\vspace{-5pt}
\caption{Visual results on Pascal validation set with 200 labels. As is shown, \arcosg and \arcosag consistently yield more accurate and sharper boundaries compared to all other SSL methods.} 
\label{fig:vis_pascal_2}
\end{figure*}

\begin{figure*}[h]
\centering
\includegraphics[width=0.9\linewidth]{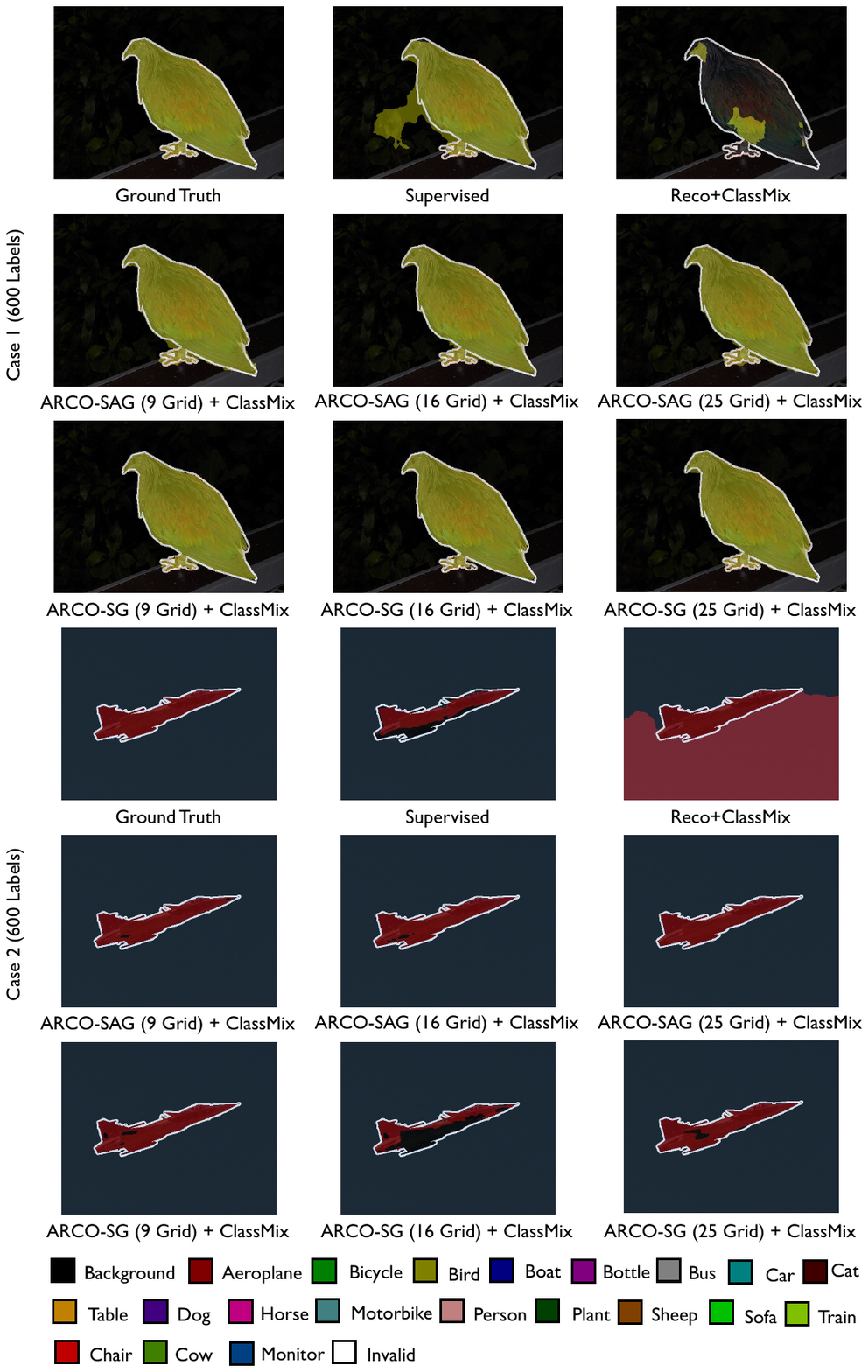}
\vspace{-5pt}
\caption{Visual results on Pascal validation set with 600 labels. As is shown, \arcosg and \arcosag consistently yield more accurate and sharper boundaries compared to all other SSL methods.} 
\label{fig:vis_pascal_3}
\end{figure*}

\begin{figure*}[h]
\centering
\includegraphics[width=0.9\linewidth]{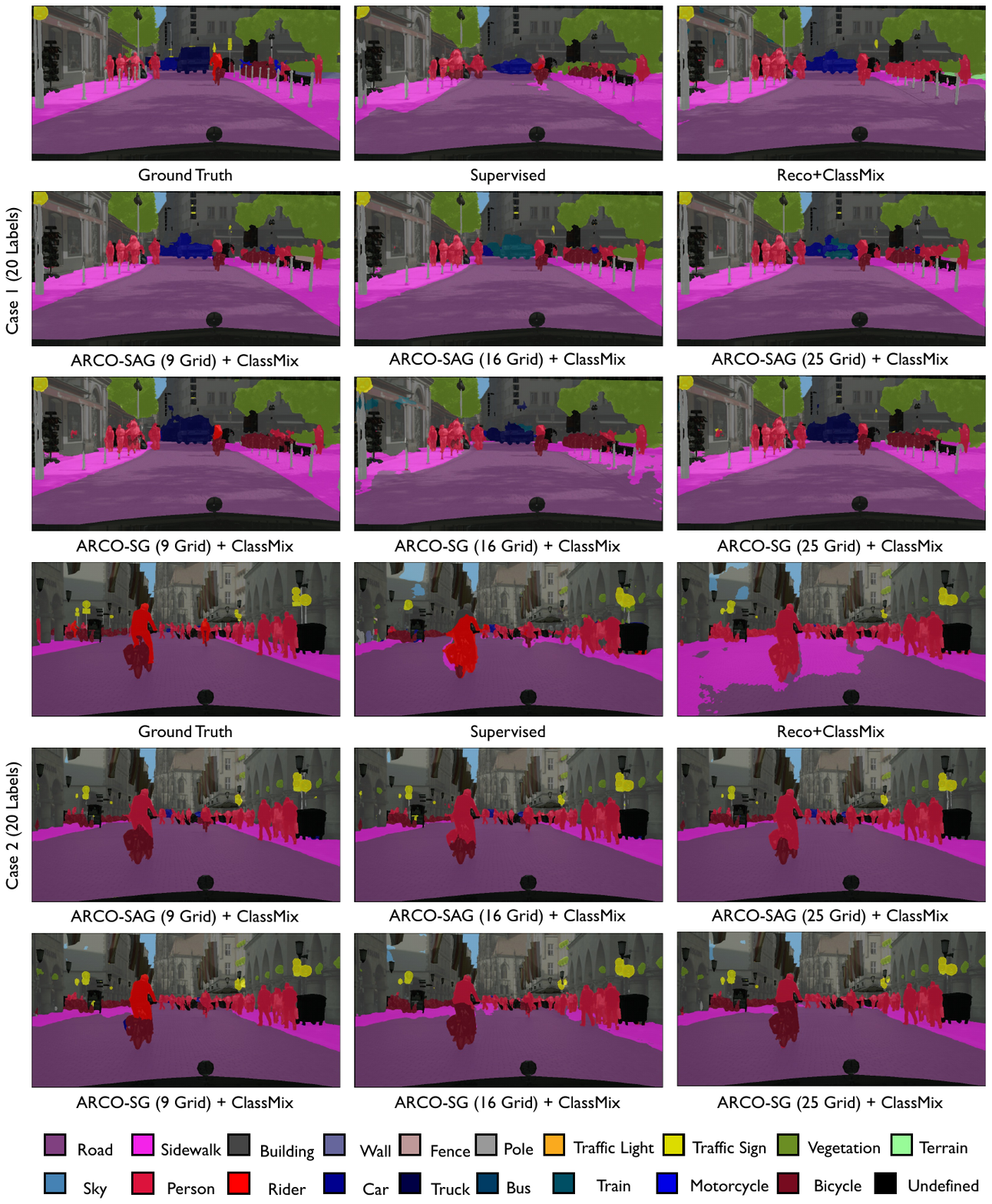}
\vspace{-5pt}
\caption{Visual results on Cityscapes validation set with 20 labels. As is shown, \arcosg and \arcosag consistently yield more accurate and sharper boundaries compared to all other SSL methods.} 
\label{fig:vis_cityscape_1}
\end{figure*}

\begin{figure*}[h]
\centering
\includegraphics[width=0.9\linewidth]{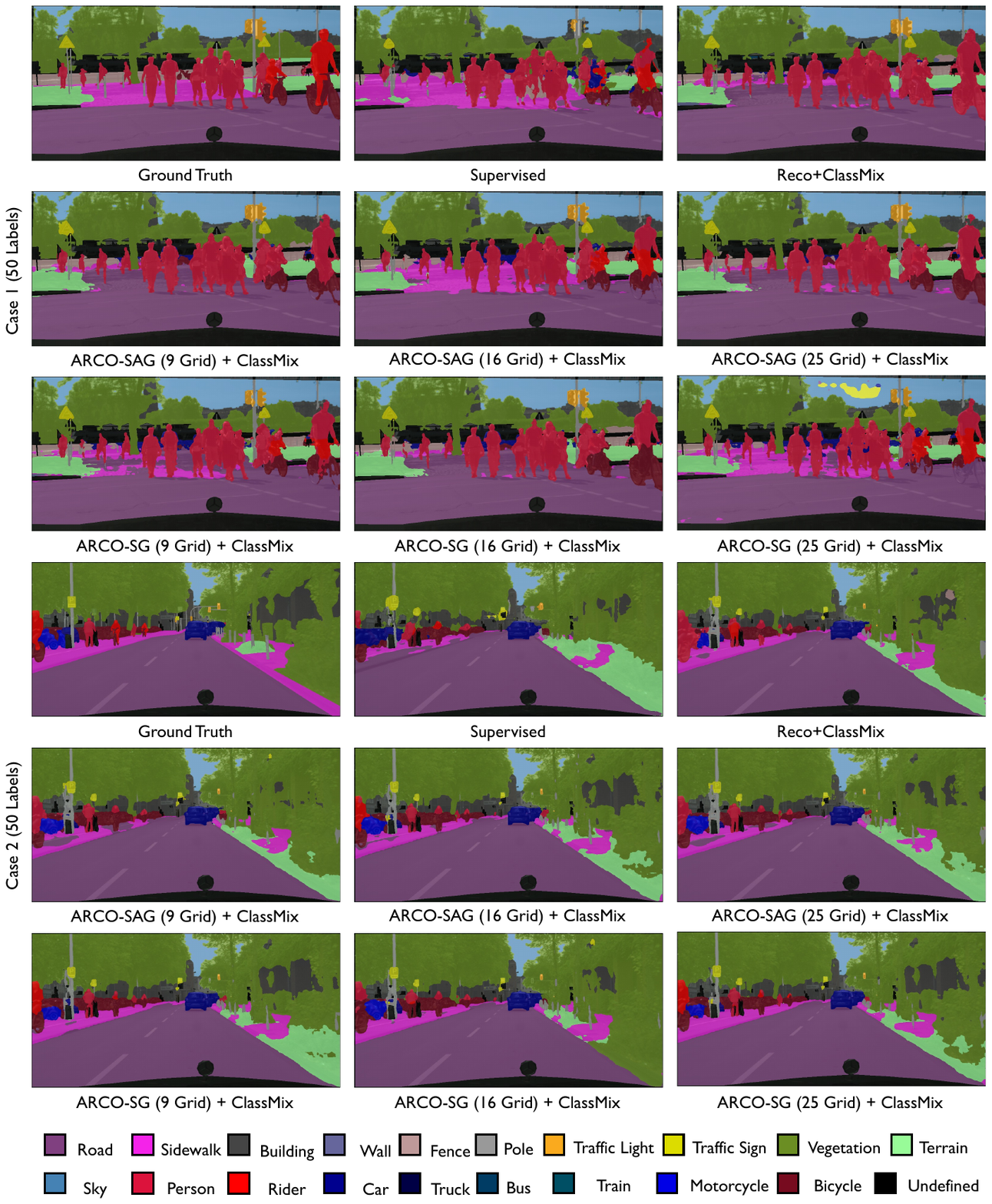}
\vspace{-5pt}
\caption{Visual results on Cityscapes validation set with 50 labels. As is shown, \arcosg and \arcosag consistently yield more accurate and sharper boundaries compared to all other SSL methods.} 
\label{fig:vis_cityscape_2}
\end{figure*}

\begin{figure*}[h]
\centering
\includegraphics[width=0.9\linewidth]{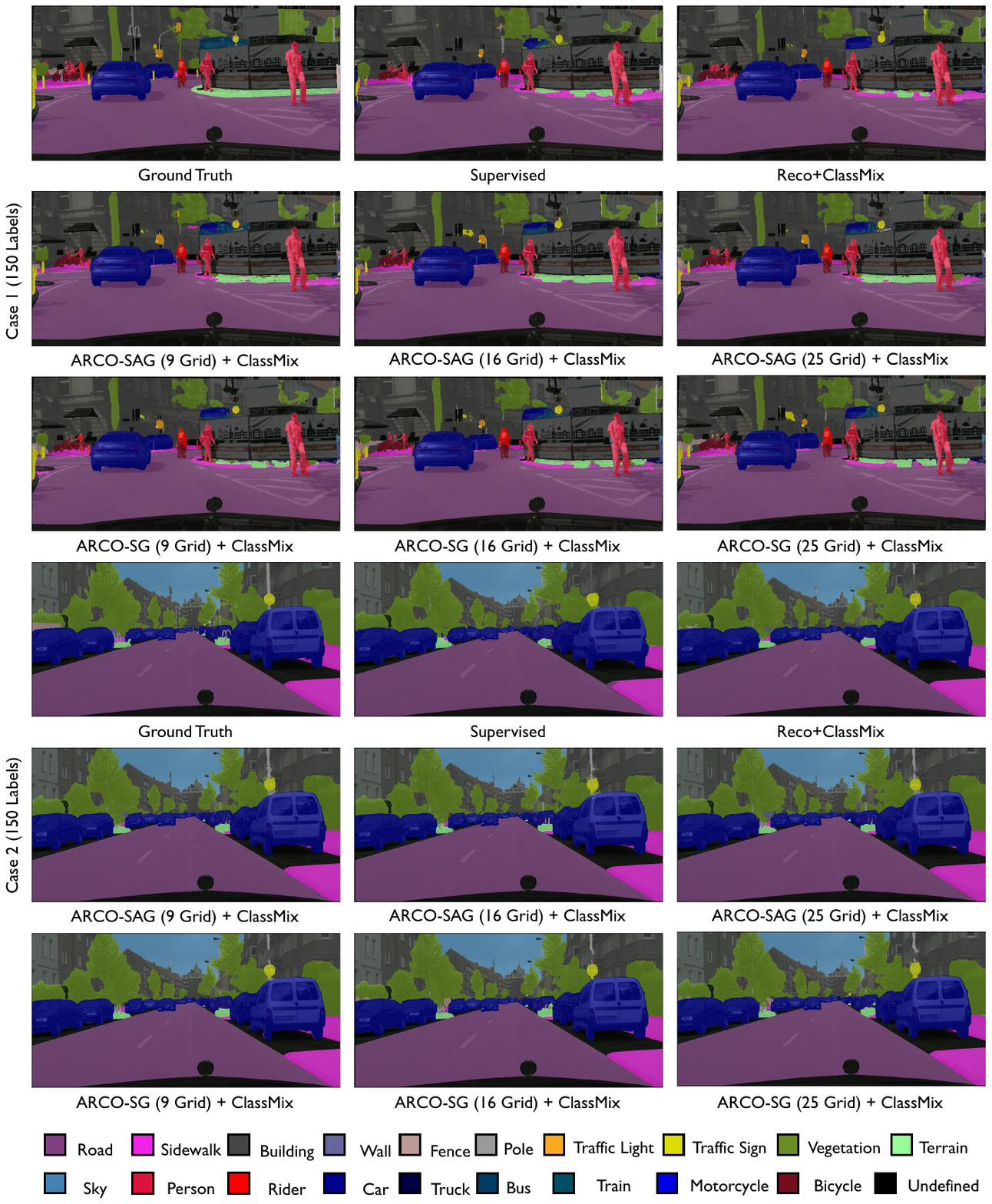}
\vspace{-5pt}
\caption{Visual results on Cityscapes validation set with 150 labels. As is shown, \arcosg and \arcosag consistently yield more accurate and sharper boundaries compared to all other SSL methods.} 
\label{fig:vis_cityscape_3}
\end{figure*}

\begin{figure*}[h]
\centering
\includegraphics[width=0.8\linewidth]{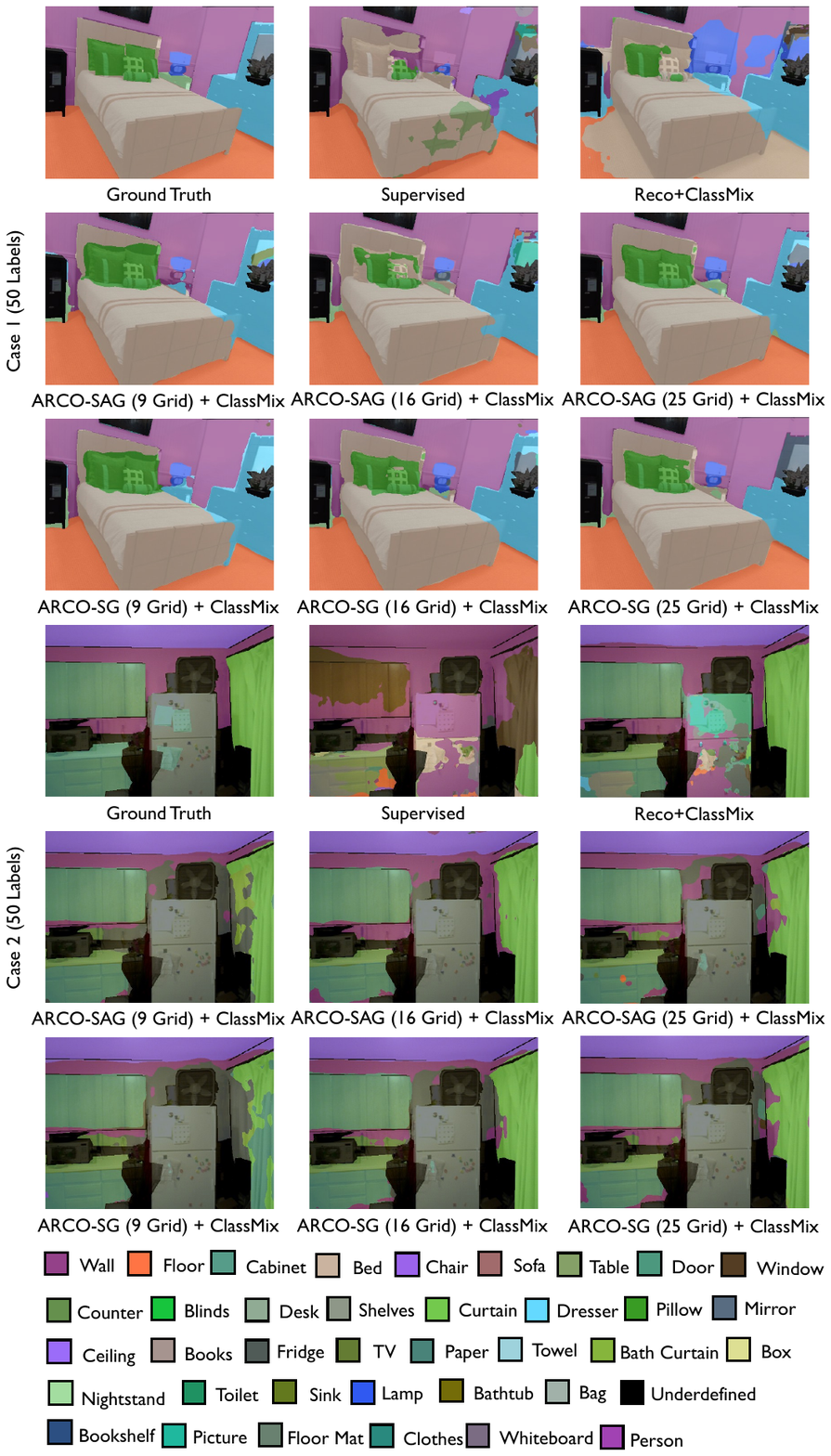}
\vspace{-5pt}
\caption{Visual results on SUN RGB-D validation set with 50 labels. As is shown, \arcosg and \arcosag consistently yield more accurate and sharper boundaries compared to all other SSL methods.} 
\label{fig:vis_sun_1}
\end{figure*}

\begin{figure*}[h]
\centering
\includegraphics[width=0.8\linewidth]{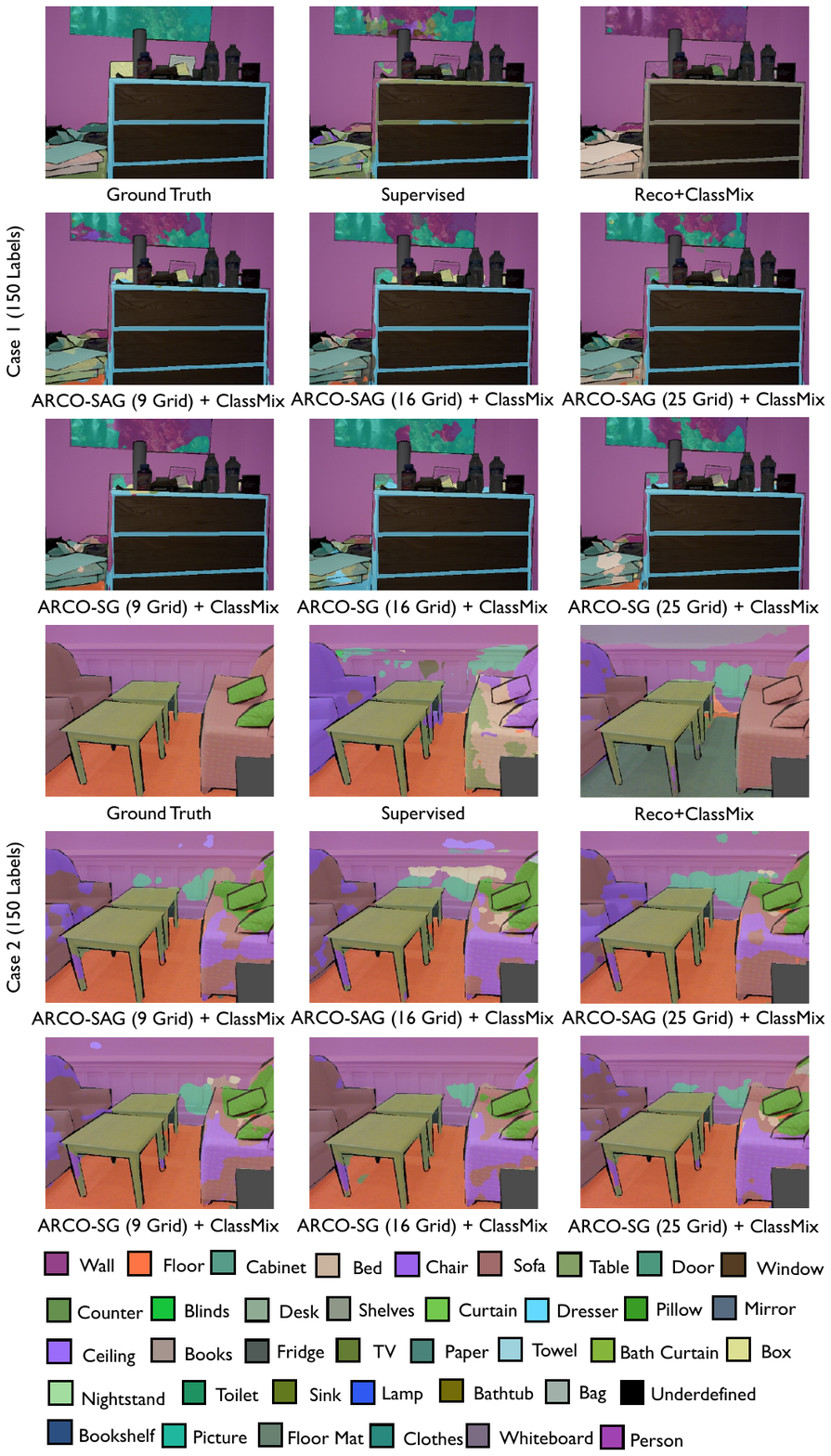}
\vspace{-5pt}
\caption{Visual results on SUN RGB-D validation set with 150 labels. As is shown, \arcosg and \arcosag consistently yield more accurate and sharper boundaries compared to all other SSL methods.} 
\label{fig:vis_sun_2}
\end{figure*}

\begin{figure*}[h]
\centering
\includegraphics[width=0.8\linewidth]{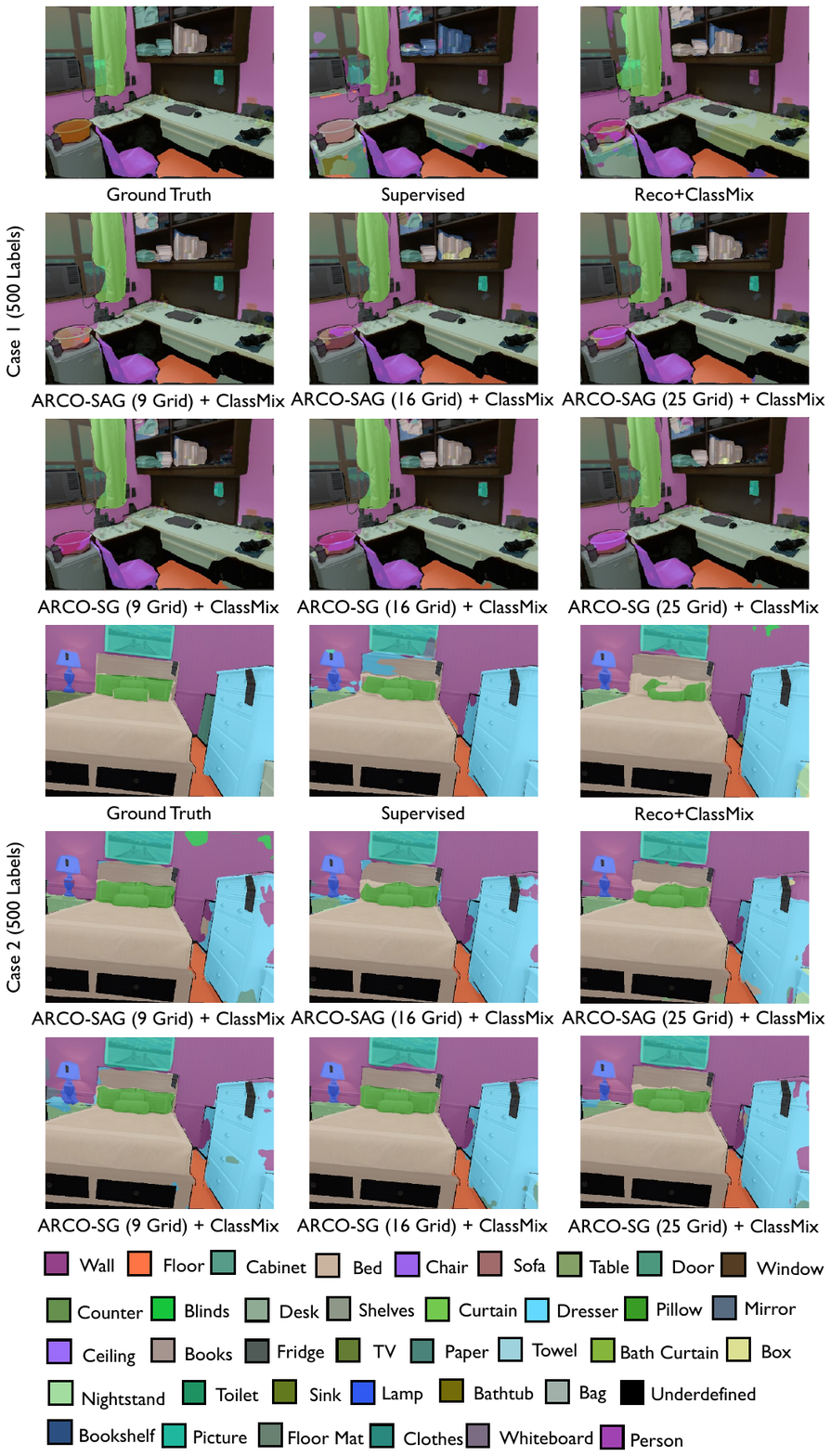}
\vspace{-5pt}
\caption{Visual results on SUN RGB-D validation set with 500 labels. As is shown, \arcosg and \arcosag consistently yield more accurate and sharper boundaries compared to all other SSL methods.} 
\label{fig:vis_sun_3}
\end{figure*}

\section{More Experiment Results on LiTS}
\label{section:appendix-results-lits}
Figure \ref{fig:vis_lits} and Table \ref{table:lits_main} show qualitative and quantitative results, where our \arcosg and \arcosag provide better segmentation results than all other methods. This clearly demonstrates the superiority of our models.

\section{More Experiment Results on MMWHS}
\label{section:appendix-results-mmwhs}
We run the baselines and our methods on the third medical image segmentation dataset (\ie, MMWHS \cite{zhuang2016multi}) under various label ratios (\ie, 1\%, 5\%, 10\%), reporting results in Table \ref{table:mmwhs_main} and Figure \ref{fig:vis_mm}. This clearly demonstrates the effectiveness of our models.

\begin{table*}[t]
\begin{center}
\caption{Quantitative comparisons (DSC{[}\%{]}~$\uparrow$~/~ASD{[}voxel{]}~$\downarrow$) across the three labeled ratio settings (1\%, 5\%, 10\%) on the 3D MP-MRI and 3D LA benchmarks. All 3D experiments are conducted as \cite{milletari2016v,vu2019advent,ouali2020semi,zhang2017deep,qiao2018deep,yu2019uncertainty,li2020shape,luo2020semi,luo2021efficient,verma2019interpolation,chen2021semi,chaitanya2020contrastive,tarvainen2017mean,wu2021semi,wu2022exploring,you2022bootstrapping,you2022mine} in the identical setting for fair comparisons. Best and second-best results are coloured \textcolor{blue}{\textbf{blue}} and \textcolor{red}{red}, respectively. \vnetF (fully-supervided) and \vnetL (semi-supervided) are considered as the upper bound and the lower bound for the performance comparison. Please refer to the text for discussion.}
\label{table:3d_main}
\begin{adjustbox}{width=\linewidth}
\begin{tabular}{ccccccc}
	\toprule
	& \multicolumn{3}{c}{\textbf{MP-MRI}} & \multicolumn{3}{c}{\textbf{LA}}\\
	\cmidrule(r){2-4} \cmidrule(r){5-7}
    & {1\% Labeled} & {5\% Labeled} & {10\% Labeled} & {1 Labeled (1\%)} & {4 Labeled (5\%)} & {8 Labeled (10\%)}\\
        \cmidrule(r){2-2} \cmidrule(r){3-3} \cmidrule(r){4-4} \cmidrule(r){5-5} \cmidrule(r){6-6} \cmidrule(r){7-7}
        % \vspace{0.15in}
		{3D Method}
		            & {Liver}  
		            & {Liver}  
		            & {Liver}  
		            & {Left Atrium (LA)}
		            & {Left Atrium (LA)}
		            & {Left Atrium (LA)}
		            \\ \midrule
	    \vnetF \cite{milletari2016v}
		            & {93.1}/{5.73}
		            & {93.1}/{5.73}
		            & {93.1}/{5.73}
		            & {91.5}/{1.51}
                    & {91.5}/{1.51}
		            & {91.5}/{1.51}
                    \\
		\vnetL        
		            & {68.6}/{33.4}
                    & {81.6}/{12.5}
                    & {87.9}/{7.55}
		            & {40.0}/{21.2}
                    & {52.6}/{9.87}
                    & {82.7}/{3.26}
                    \\\midrule 
        \emm \cite{vu2019advent} 
                    & {73.2}/{30.1}
                    & {86.0}/{15.8}
                    & {91.9}/{6.89}
		            & {48.3}/{21.3}
		            & {81.1}/{4.68}
		            & {82.7}/{4.77}
                    \\ 
	    \cct \cite{ouali2020semi} 
                    & {74.2}/{24.5}
                    & {86.0}/{15.8}
                    & {90.9}/{9.54}
		            & {40.3}/{13.8}
		            & {70.8}/{8.31}
		            & {82.0}/{5.25}
                    \\ 
	    \dan \cite{zhang2017deep} 
                    & {69.4}/{31.4}
                    & {88.3}/{10.2}
                    & {91.1}/{8.76}
                    & {38.5}/{22.0}
		            & {78.8}/{6.53}
		            & {80.2}/{5.37}
                    \\
	    \urpc \cite{luo2021efficient} 
                    & {72.7}/{29.9}
                    & {89.8}/{9.00}
                    & {91.7}/{7.41}
		            & {65.0}/{8.97}
                    & {80.2}/{5.48}
                    & {83.1}/{4.57}
                    \\ 
	    \dtc \cite{luo2020semi} 
                    & {78.6}/{18.8}
                    & {89.6}/{10.1}
                    & {90.5}/{11.2}
		            & {36.2}/{11.7}
                    & {83.6}/{2.81}
                    & {87.1}/{2.23}
                    \\ 
	    \dct \cite{qiao2018deep} 
                    & {74.1}/{31.1}
                    & {88.2}/{12.4}
                    & {90.1}/{11.1}
		            & {42.9}/{19.1}
                    & {80.1}/{9.06}
                    & {80.4}/{9.18}
                    \\ 
	    \ict \cite{verma2019interpolation} 
                    & {72.3}/{30.5}
                    & {88.6}/{11.2}
                    & {91.1}/{8.46}
		            & {47.7}/{16.0}
		            & {78.4}/{6.96}
		            & {85.4}/{4.14}
                    \\ 
	    \mt \cite{tarvainen2017mean} 
                    & {73.8}/{29.4}
                    & {87.7}/{12.8}
                    & {92.0}/{7.15}
		            & {58.1}/{17.8}
		            & {77.0}/{8.15}
		            & {82.8}/{5.90}
                    \\ 
	    \uamt \cite{yu2019uncertainty} 
                    & {71.6}/{31.2}
                    & {87.1}/{12.8}
                    & {91.3}/{9.71}
		            & {60.3}/{11.3}
		            & {82.3}/{3.82}
		            & {87.8}/{2.12}
                    \\ 
	    \sassnet \cite{li2020shape} 
                    & {78.8}/{19.6}
                    & {88.4}/{13.1}
                    & {88.7}/{13.1}
		            & {51.5}/{14.6}
		            & {81.6}/{3.58}
		            & {87.5}/{2.59}
                    \\
	    \cps \cite{chen2021semi} 
                    & {80.0}/{17.1}
                    & {89.2}/{10.8}
                    & {91.0}/{9.16}
		            & {45.1}/{22.0}
		            & {79.7}/{9.28}
		            & {80.7}/{5.16}
                    \\ 
	    \gcl \cite{chaitanya2020contrastive} 
                    & {78.9}/{14.2}
                    & {87.9}/{8.29}
                    & {90.4}/{5.68}
		            & {52.6}/{12.8}
		            & {75.5}/{7.60}
		            & {84.8}/{4.22}
                    \\ 
	    \mcnet \cite{wu2021semi} 
                    & {79.7}/{20.2}
                    & {90.1}/{8.27}
                    & {92.4}/\textcolor{blue}{\tf{4.34}}
		            & {44.3}/{14.1}
		            & {83.6}/{2.70}
		            & {87.6}/\textcolor{red}{1.82}
                    \\ 
	    \ssnet \cite{wu2022exploring} 
                    & {88.0}/{8.93}
                    & {90.9}/{9.94}
                    & {92.0}/\textcolor{red}{5.67}
		            & {43.4}/{14.8}
		            & {86.3}/{2.31}
		            & {88.6}/{1.90}
                    \\
        \action \cite{you2022bootstrapping} 
                    & {86.5}/{13.6}
                    & {90.3}/{12.3}
                    & {90.9}/{10.0}
		            & {71.1}/\textcolor{red}{6.23}
		            & {86.6}/\textcolor{red}{2.24}
		            & {88.7}/{2.10}
                    \\ 
        \mona \cite{you2022mine} 
                    & {91.3}/\textcolor{blue}{\tf{5.31}}
                    & \textcolor{red}{92.2}/{9.46}
                    & {92.3}/{8.16}
		            & {72.8}/{10.7}
		            & \textcolor{red}{87.0}/{2.81}
		            & \textcolor{red}{89.5}/{2.40}
                    \\ 
        \gr \cb \arcosag (ours)
                    & \textcolor{red}{91.5}/{6.82}
                    & \textcolor{blue}{\tf{92.5}}/\textcolor{red}{6.95}
                    & \textcolor{red}{92.6}/{7.54}
		            & \cgr \textcolor{red}{73.2}/6.47
		            & \cgr {86.9}/2.73
		            & \cgr {89.1}/2.30
                    \\ 
        \gr \vb \arcosg (ours)
                    & \textcolor{blue}{\tf{91.6}}/\textcolor{red}{{6.60}}
                    & \textcolor{blue}{\tf{92.5}}/\textcolor{blue}{\tf{6.31}}
                    & \textcolor{blue}{\tf{92.8}}/{{6.64}}
		            & \cgr \textcolor{blue}{\tf{75.0}}/\textcolor{blue}{\tf{4.06}}
		            & \cgr \textcolor{blue}{\tf{87.8}}/\textcolor{blue}{\tf{1.66}}
		            & \cgr \textcolor{blue}{\tf{89.9}}/\textcolor{blue}{\tf{1.47}}
                    \\ 
		              \bottomrule
	\end{tabular}
    \end{adjustbox}
    \end{center}
    \vspace{-10pt}
\end{table*}

\section{More 3D Experiment Results on LA \cite{la}}
\label{section:appendix-la-results}
We run the baselines and our methods on the fourth 3D medical image segmentation dataset (\ie, LA \cite{la}) under various label ratios (\ie, 1\%, 5\%, 10\%), reporting results in Table \ref{table:3d_main} and Figure \ref{fig:vis_la}. This clearly demonstrates the robustness of our models.

\section{More 3D Experiment Results on MP-MRI}
\label{section:appendix-jhu-results}
We run the baselines and our methods on the five 3D medical image segmentation dataset (\ie, MP-MRI) under various label ratios (\ie, 1\%, 5\%, 10\%), reporting results in Table \ref{table:3d_main} and Figure \ref{fig:vis_jhu}. This clearly shows the superiority of our models.

\section{More Experiment Results on Semantic Segmentation}
\label{section:appendix-semantic-results}
To further validate the effectiveness, we experiment on three popular segmentation benchmarks (\ie, Cityscapes \cite{cordts2016cityscapes}, Pascal VOC 2012 \cite{everingham2015pascal}, indoor scene segmentation dataset -- SUN RGB-D \cite{song2015sun}) in the semi-supervised full-label settings. We follow the identical setting \cite{liu2021bootstrapping} to sample labelled images to ensure that every class appears sufficiently in our three datasets, \ie, CityScapes, Pascal VOC, and SUN RGB-D.
Specifically, we aim to have the least frequent class appear in at least 5, 15, and 50 images in each dataset, respectively. In CityScapes, we have at least 12 semantic classes represented in our labeled images, while in Pascal VOC and SUN RGB-D we have at least 3 and 1 semantic classes, respectively. We compare our \arcosg and \arcosag under various grid settings (\ie, 9, 16, 25) to baselines (supervised and semi-supervised \texttt{ReCo}). Table \ref{table:semantic_main} shows the mean IoU validation performance. We can see that for all cases, \arcosg and \arcosag consistently improve performance, compared to \texttt{ReCo}, in all the semi-supervised settings. For example, under the fewest label setting in each dataset, compared to \texttt{ReCo}, applying stratified group sampling (SG) can improve results by an especially significant margin, with up to 2.4 -- 7.8\% relative gains. As shown in Pascal VOC 2012 \cite{everingham2015pascal} -- 60 labels (Figure \ref{fig:vis_pascal_1}), 200 labels (Figure \ref{fig:vis_pascal_2}), 600 labels (Figure \ref{fig:vis_pascal_3}), Cityscapes \cite{cordts2016cityscapes} -- 20 labels (Figure \ref{fig:vis_cityscape_1}), 50 labels (Figure \ref{fig:vis_cityscape_2}), 150 labels (Figure \ref{fig:vis_cityscape_3}),  SUN RGB-D \cite{song2015sun} -- 50 labels (Figure \ref{fig:vis_sun_1}), 150 labels (Figure \ref{fig:vis_sun_2}), 500 labels (Figure \ref{fig:vis_sun_3}), \arcosg and \arcosag consistently yield more accurate and sharper boundaries compared to \texttt{ReCo}. All those clearly demonstrate the superiority of our models.

\section{Generalization across Label Ratios and Frameworks}
\label{section:appendix-discussion-arch}
Besides generalizing well across different datasets and diverse labeled settings, we additionally evaluate the performance of SG and SAG sampling coupled with different CL frameworks  (\ie, \mocoo\cite{chen2020improved}, \mocoknn \cite{van2021revisiting}, \simclr\cite{chen2020simple}, \byol\cite{grill2020bootstrap}, \isd\cite{tejankar2021isd}, \vicreg\cite{bardes2021vicreg}) on ACDC under various label ratios (\ie, 1\%, 5\%, 10\%). In this work, we mainly study the state-of-the-art CL frameworks from the computer vision domain hereinafter for ablations, considering their superior performance in the task of image classification. Note that, for each fair comparison, we strictly follow the default setting in these CL frameworks \cite{chen2020improved,van2021revisiting,chen2020simple,grill2020bootstrap,tejankar2021isd,bardes2021vicreg} for pretraining, and fine-tune each networks using the same settings in Appendix \ref{section:implementation}. Full experimental details are in Appendix \ref{section:implementation}. 

Since they work orthogonally to our pretraining strategy, we conduct a comprehensive comparison of these CL-based frameworks in Table \ref{table:cl_comparsion}. Clearly, our proposed SG and SAG sampling help significantly in improving the segmentation performance across all the CL-based frameworks and is capable of being integrated with previous frameworks for further enhanced \tf{model robustness}. Moreover, it is important to observe that the performance benefits of our methods increase significantly with a lower label setting. This observation augments the necessity of our proposed methods while training networks with high \tf{label efficiency}.

\begin{figure*}[t]
\centering
\includegraphics[width=\linewidth]{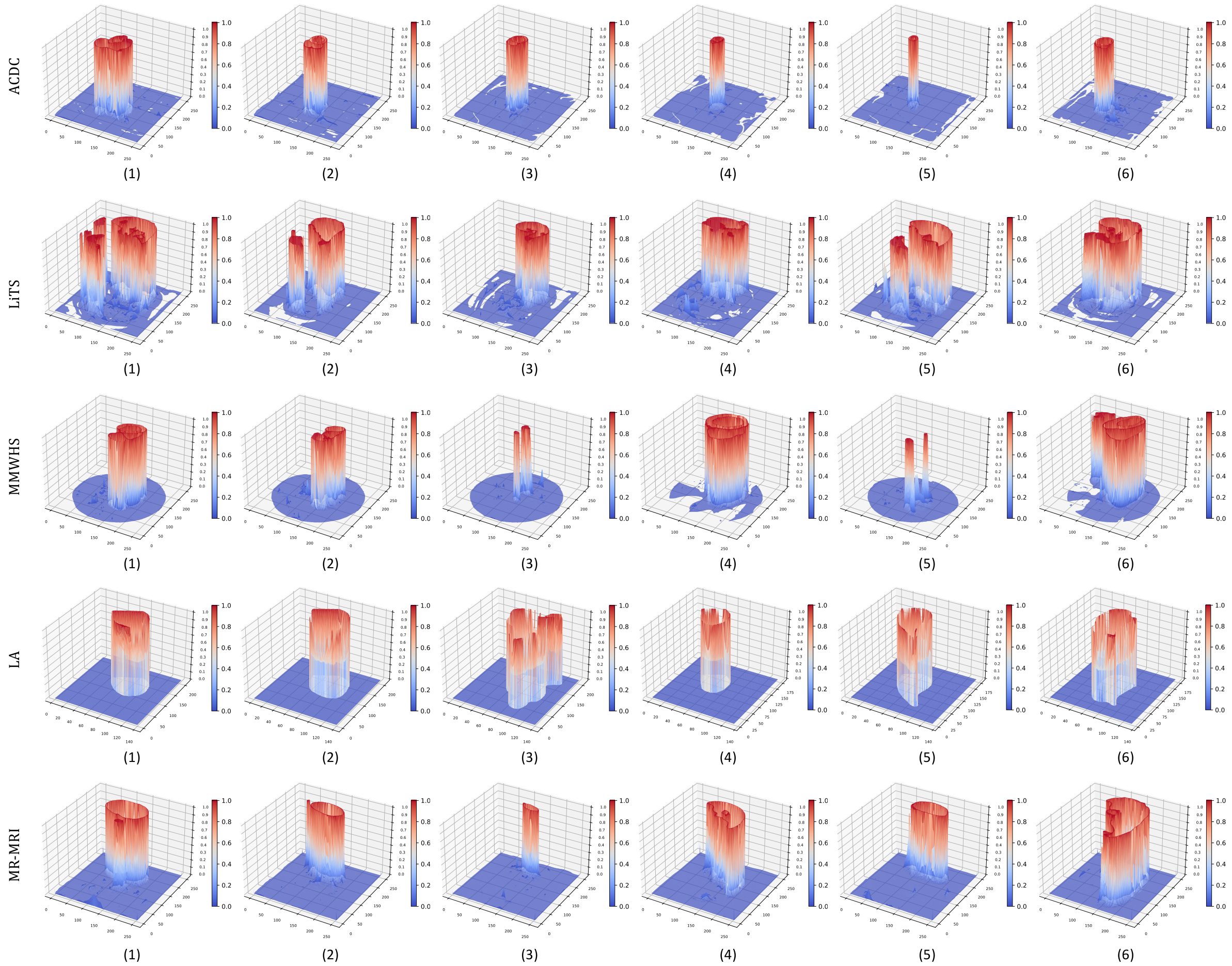}
\vspace{-20pt}
\caption{Loss landscape visualization of pixel-wise contrastive loss $\mathcal{L}_{\text{contrast}}$ with \arcosg. Loss plots are generated with same original images randomly chosen from ACDC \cite{bernard2018deep}, LiTS \cite{bilic2019liver}, MMWHS \cite{zhuang2016multi}, LA \cite{la}, and MP-MRI, respectively. $z$-axis denotes the loss value at each pixel.} 
\label{fig:appendix_vis_loss_landscape}
\vspace{-15pt}
\end{figure*}

\section{Final Checkpoint Loss Landscapes}
\label{section:appendix-loss-landscapes}
From visualizations in Figure \ref{fig:appendix_vis_loss_landscape}, \arcosg converges to much flatter loss valleys, which evidences their robustness in learning anatomical features.

\begin{figure}[h]
\centering
\includegraphics[width=0.8\linewidth]{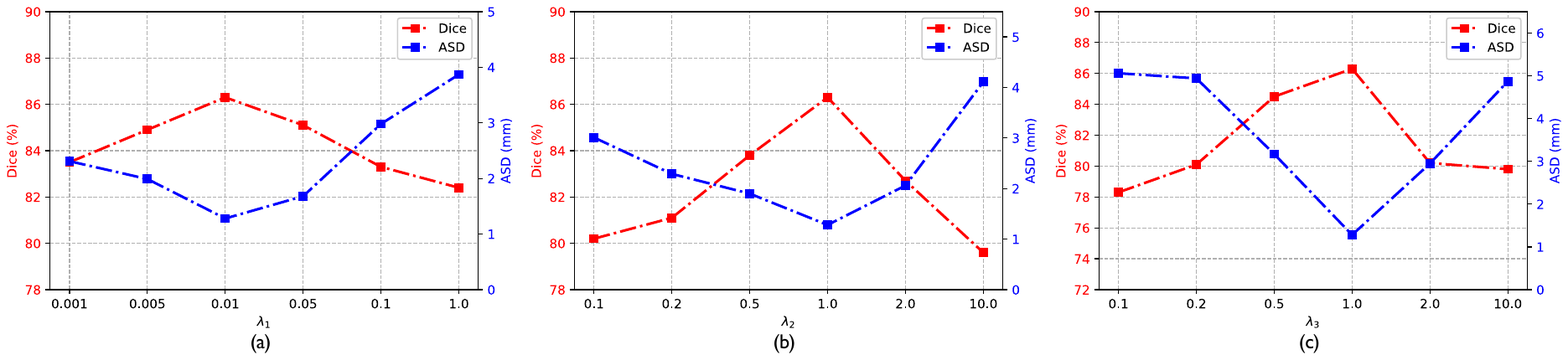}
\vspace{-10pt}
\caption{Effects of hyperparameters $\lambda_{1}, \lambda_{2}, \lambda_{3}$. We report Dice and ASD of \arcosg on ACDC with 1\% labeled ratio.}
\label{fig:appendix_hyper}
\vspace{-15pt}
\end{figure}

\section{Ablation on Different Training Settings}
\label{section:appendix-discussion-hyper}
\myparagraph{Hyperparameter Selection.}
For grid search, we detail the tuning steps here. The tuning is done in sequential order. $\lambda_{1}$ is chosen from $\{0.001, 0.005, 0.01, 0.05, 0.1, 1.0\}$, and $\lambda_{2}, \lambda_{3}$ are chosen from $\{0.1, 0.2, 0.5, 1.0, 2.0, 10.0\}$. We use the validation set to search over hyperparameters and find the best hyperparameter on ACDC with 1\% labeled ratio. As shown in Figure \ref{fig:appendix_hyper}, with a carefully tuned hyperparameters $\lambda_{1}\!\!=\!\!0.01$, $\lambda_{2}\!\!=\!\!1.0$, and $\lambda_{3}\!\!=\!\!1.0$, such setting achieves superior performance compared to others.

\end{document}